\newcommand{\xmark}{\textcolor{red}{\ding{55}}}
\definecolor{lightred}{rgb}{1, 0.8, 0.8}
\newtheorem{theorem}{Theorem}
\crefname{theorem}{theorem}{Theorems}
\Crefname{Theorem}{Theorem}{Theorems}
\newtheorem*{lemma_nonumber*}{Lemma}
\newtheorem{lemma}{Lemma}
\newaliascnt{corollary}{theorem}
\crefname{corollary}{corollary}{corollaries}
\Crefname{Corollary}{Corollary}{Corollaries}
\newaliascnt{proposition}{theorem}
\crefname{proposition}{proposition}{propositions}
\Crefname{Proposition}{Proposition}{Propositions}
\newaliascnt{definition}{theorem}
\crefname{definition}{definition}{definitions}
\Crefname{Definition}{Definition}{Definitions}
\newaliascnt{remark}{theorem}
\crefname{remark}{remark}{remarks}
\Crefname{Remark}{Remark}{Remarks}
\crefname{example}{example}{examples}
\Crefname{Example}{Example}{Examples}
\crefname{figure}{figure}{figures}
\Crefname{Figure}{Figure}{Figures}
\newtheorem{assumption}{\textbf{H}\hspace{-3pt}}
\newtheorem{assumptionsup}{\textbf{A}\hspace{-3pt}}
\definecolor{darkgreen}{RGB}{0,128,0}
\definecolor{darkorange}{RGB}{255,140,0}
\definecolor{darkblue}{RGB}{0,0,139}
\def\msb{\mathsf{B}}
\def\msu{\mathsf{U}}
\def\mcb{\mathcal{B}}
\def\rset{\mathbb{R}}
\def\nsets{\mathbb{N}^*}
\def\rmd{\mathrm{d}}
\def\rmc{\mathrm{C}}
\newcommand{\argmax}{\operatorname*{arg\,max}}
\newcommand{\LeftEqNo}{\let\veqno\@@leqno}
\newcommand{\tvnorm}[1]{\| #1 \|_{\mathrm{TV}}}
\newcommand{\parentheseLigne}[1]{(#1 )}
\newcommand{\ocint}[1]{\left(#1\right]}
\newcommand{\ccint}[1]{\left[#1\right]}
\newcommand{\ball}[2]{\operatorname{B}(#1,#2)}
\newcommandx\sequence[3][2=,3=]
\newcommandx\sequencet[3][2=,3=]
\newcommand{\wrt}{w.r.t.}
\def\iid{\text{i.i.d.}}
\newcommand{\opnorm}[1]{{\left\vert\kern-0.25ex\left\vert\kern-0.25ex\left\vert #1
    \right\vert\kern-0.25ex\right\vert\kern-0.25ex\right\vert}}
\newcommand\coupling[2]{\Gamma(\mu,\nu)}
\def\mtt{\mathtt{m}}
\newcommand{\N}{\mathbb{N}}
\newcommand{\R}{\mathbb{R}}
\newcommand{\msa}{\mathsf{A}}
\newcommand{\msx}{\mathsf{X}}
\newcommand{\msy}{\mathsf{Y}}
\newcommand{\mcy}{\mathcal{Y}}
\newcommand{\mcx}{\mathcal{X}}
\newcommand{\Rd}{\mathbb{R}^{d}}
\newcommand{\dd}{\mathrm{d}}
\newcommand{\argmin}{\operatornamewithlimits{\arg\min}}
\newcommand{\eqsp}{\,}  % pour contrôler l'espace
\newcommand{\pr}[1]{\left({#1}\right)}
\newcommand{\prn}[1]{({\textstyle{#1}})}
\newcommand{\prLigne}[1]{({\textstyle{#1}})}
\newcommand{\br}[1]{\left[{#1}\right]}
\newcommand{\bbr}[1]{\left\{{#1}\right\}}
\newcommand{\brnn}[1]{#1}
\newcommand{\norm}[1]{\left\|{#1}\right\|}
\newcommand{\abs}[1]{\left\lvert{#1}\right\rvert}
\newcommand{\gauss}{\mathrm{N}}
\newcommand{\up}{\mathscr{C}}
\def\thetas{\theta^{\star}}
\def\thetasi{\theta^{\star,(i)}}
\newcommand{\ostar}{\mathbin{\mathpalette\make@circled\star}}
\newcommand{\pstar}{\mathbin{\mathpalette\make@circled+}}
\newcommand{\make@circled}[2]{%
  \ooalign{$\m@th#1\smallbigcirc{#1}$\cr\hidewidth$\m@th#1#2$\hidewidth\cr}%
}
\newcommand{\smallbigcirc}[1]{%
  \vcenter{\hbox{\scalebox{0.77778}{$\m@th#1\bigcirc$}}}%
}
\def\dtheta{d_{\Theta}}
\newcommandx{\Vnorm}[2][1=V]{\| #2 \|_{#1}}
\newcommandx{\VnormEq}[2][1=V]{\left\| #2 \right\|_{#1}}
\newcommand{\Kker}{\mathrm{K}}
\newcommand{\Pker}{\mathrm{P}}
\def\Rdtheta{\rset^{d_{\Theta}}}
\newcommand{\greencheck}{{\color{OliveGreen}\checkmark}}
\newcommand{\redcross}{{\color{BrickRed}\xmark}}
\title{FedPop: A Bayesian Approach for Personalised Federated Learning}
\author{%
  Nikita Kotelevskii$^*$\\
  Skolkovo Institute of Science and
  Technology \\
  Moscow, Russia\\
  \texttt{Nikita.Kotelevskii@skoltech.ru} \\
  $^*$ Equal contribution \\
  \And 
  Maxime Vono$^*$\\
  Criteo AI Lab\\
  Paris, France\\
  \texttt{m.vono@criteo.com} \\
  $^*$ Equal contribution \\
  \And
  Alain Durmus\\
  ENS Paris-Saclay \\
  \texttt{alain.durmus@ens-paris-saclay.fr} \\
  \And
  Eric Moulines\\
  Ecole Polytechnique \\
  \texttt{eric.moulines@polytechnique.edu} \\
}
\begin{document}

\maketitle

\begin{abstract}
  Personalised federated learning (FL) aims at collaboratively learning a machine learning model tailored for each client. 
  Albeit promising advances have been made in this direction, most of existing approaches do not allow for uncertainty quantification which is crucial in many applications.
  In addition, personalisation in the cross-silo and cross-device setting still involves important issues, especially for new clients or those having small number of observations.
  This paper aims at filling these gaps.
  To this end, we propose a novel methodology coined \texttt{FedPop} by recasting personalised FL into the population modeling paradigm where clients' models involve \emph{fixed} common population parameters and \emph{random} effects, aiming at explaining data heterogeneity.
  To derive convergence guarantees for our scheme, we introduce a new class of federated stochastic optimisation algorithms which relies on Markov chain Monte Carlo methods. 
  Compared to existing personalised FL methods, the proposed methodology has important benefits: it is robust to client drift, practical for inference on new clients, and above all, enables uncertainty quantification under mild computational and memory overheads.
  We provide non-asymptotic convergence guarantees for the proposed algorithms and illustrate their performances on various personalised federated learning tasks.
\end{abstract}

\section{Introduction}
\label{sec:intro}

Federated learning (FL) is a recent machine learning paradigm in which distributed clients holding siloed data collaborate in solving a learning problem, usually under the coordination of a central server \citep{FLreview2021,kairouz2019advances}.
One of the main focus of FL is on so-called \emph{cross-device} applications where a large number of personal electronic devices such as mobile phones, wearable devices or home assistants collect and store data at the edges of a decentralised network \citep{mcmahan17}.
%As an example, cross-device FL is currently used in our daily lives to train next word prediction models in mobile keyboard applications \citep{mobileFL}.

While standard FL methods \citep{mcmahan17,alistarh2017qsgd,karimireddy2020scaffold,DIANA19,FedProx} have focused on training a global model that can be applied to  individual agents, the relevance of such inferences has  recently been questioned  due to statistical \emph{heterogeneity} between clients.
Indeed, the considered common model may not generalise well on a client with a local data distribution that differs significantly from the global data distribution, especially if that client has not participated in the training process.
In fact, it might even be better for such clients to derive a local model from their own data set.
To circumvent these issues, a number of \emph{personalised federated learning} approaches have recently been proposed, that use local models  to fit client-specific data distribution while capturing some shared knowledge via a federated scheme \citep{PFL_review}.
Personalisation has  previously been approached using multi-task learning \citep{smith2017federated}, meta-learning \citep{jiang2019improving,khodak2019adaptive}, client clustering \citep{briggs2020federated}, data interpolation \citep{Mansour3}, model interpolation \citep{hanzely2020federated,hanzely2020lower} or partially local models \citep{NEURIPS2021_5d44a2b0,pmlr-v139-collins21a}. 
While these methodologies are  promising, they only partially solve the personalisation problem in highly heterogeneous federated settings and have no means of quantifying uncertainty.
In addition, cross-device FL also faces other important challenges such as (extreme) partial device participation, small local data sets, limited upload bandwidth and device capabilities \citep{kairouz2019advances}.
Addressing these problems for personalised FL requires new paradigms regarding how model knowledge is shared and personalisation is performed locally. 

\noindent\textbf{Proposed Approach.} In this paper, we adopt a novel perspective to model the problem of  personalised FL.
This paradigm, called \emph{mixed-effects modeling} (also known as multi-level or population approach) is widely used to analyse data that have a clustered or nested structure, as in medical or biological research where multiple observations per patient are available \citep{GelmanHill:2007,Long2011,LaviellePopulation}.
Although the  hierarchical structure of FL has already been noted \citep{2021_ICML_DGLMC,grant2018recasting,2022_bayesian_bandits}, the mixed-effects paradigm has interestingly never been considered.
Leveraging this framework, we develop a new model for personalised FL called \texttt{FedPop} and propose an efficient computational solution to perform inference under this model.
More precisely, we introduce a novel class of federated stochastic approximation algorithms based on parallel Markov Chain Monte Carlo (MCMC) methods.
In the proposed framework, we also pay special attention to the cross-device setting by taking into account partial client participation, and by addressing the communication bottleneck with both multiple local updates and the use of lossy compression operators.

\noindent \textbf{Benefits.} Up to the authors' knowledge, \texttt{FedPop} is the first \emph{personalised FL} approach that allows \emph{cheap uncertainty quantification} with a theoretically-grounded methodology.
The proposed framework also comes with other interesting properties.
First, in contrast to most of personalised FL methods that only consider ``fixed-effects'' models \citep{pmlr-v139-collins21a,Hanzely2021Personalized,smith2017federated}, \texttt{FedPop} provides credibility information (via credibility regions) and allows more accurate inference for clients with small and heterogeneous local data via \emph{partial pooling} \citep{GelmanHill:2007}.
In addition, inference for new clients which did not participate in the training phase can  be easily performed by sampling from the prior over the local random effects. 
Second, contrary to existing Bayesian FL approaches that aim to provide credibility information by sampling from a target posterior distribution \citep{2022_bayesian_bandits,bayesianMAML,VonoQLSD,2020_conducive_gradients}, \texttt{FedPop} allows to perform personalisation and cheaper on-device uncertainty quantification taking an empirical Bayes prediction approach.
Finally, an important benefit of \texttt{FedPop} is its ability to allow for multiple local updates without suffering from the client-drift phenomenon \citep{karimireddy2020scaffold}.

\noindent \textbf{Outline and Contributions.} Our contributions are fourfold. 
First,  in \Cref{sec:fed_pop}, we propose a novel probabilistic methodology, which we call \texttt{FedPop}, to address personalisation under the cross-device FL paradigm.
To perform efficient inference under this model, we introduce a general class of stochastic approximation algorithms based on MCMC.
Second, we provide in \Cref{sec:theory} non-asymptotic convergence guarantees for the proposed methodology. 
Then, we perform in \Cref{sec:related_work} a comparison between the proposed approach and exisiting works.
Finally, we illustrate in \Cref{sec:experiments} the benefits of our methodology on several federated learning benchmarks involving both synthetic and real data. 

%\noindent \textbf{Notation.}

\section{Proposed Approach}
\label{sec:fed_pop}

%\maxime{We should clarify the meaning of fixed here because we also put a prior on $\phi$ so it could be interpreted that $\phi$ is a random variable...}

In this section, we present the statistical estimation problem we are considering and the proposed methodology called \texttt{FedPop} to address it.

%\maxime{Verify after careful reading that this part is clear enough regarding the FL setting we are considering. Verify also that our motivations and the intuition + outcomes of FedPop are clear.}

\noindent \textbf{Problem Formulation.} We are interested  in the \emph{cross-device} FL setting involving a large number $b \in \N^*$ of clients, potentially unreliable \emph{i.e.} not necessarily available at each communication round.
These clients are assumed to own sensitive local data sets $\{\mathrm{D}_{i}\}_{i \in [b]}$.
In this framework, we aim to make both uncertainty quantification and personalised statistical inference by learning a local model tailored to each client.
To this end, and inspired by the population approach used in the biological and physical sciences \citep{LaviellePopulation}, we consider mixed-effects modeling for each client leading to the local marginal likelihood function defined, for any $i \in [b]$, by
\begin{equation}
  \label{eq:marginal_likelihood}
  p\prLigne{\mathrm{D}_i \mid \phi, \beta} = \int_{\mathbb{R}^d} p\prLigne{\mathrm{D}_i \mid \phi \eqsp, z^{(i)}} p\prLigne{z^{(i)}\mid \beta}\mathrm{d}z^{(i)}\eqsp, 
\end{equation}
where $\phi \in \Phi \subseteq \R^{d_\Phi}$ stands for a \emph{fixed effect} and $\{z^{(i)}\}_{i\in[b]} \in \mathsf{Z}$, $\mathsf{Z} = \prod_{i=1}^b \R^{d}$, represent \emph{random effects} aimed at explaining statistical heterogeneity between local data sets $\{\mathrm{D}_i\}_{i\in [b]}$.
\begin{wrapfigure}{r}{0.3\textwidth}
\vspace{-0.5cm}
  \begin{center}
    \includegraphics[scale=0.26]{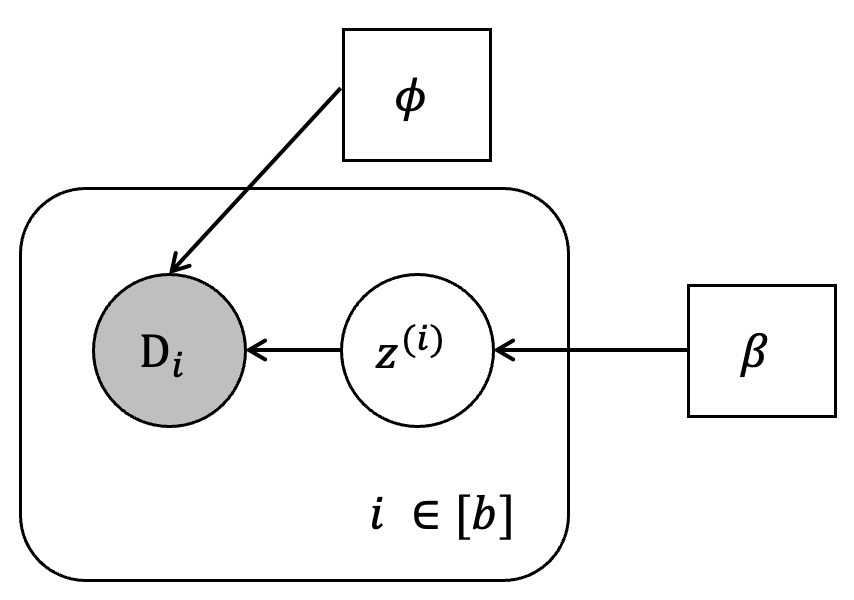}
  \end{center}
  \caption{DAG for \texttt{FedPop}.}
  \label{fig:DAG}
  \vspace{-0.5cm}
\end{wrapfigure}
The objective of the fixed (\emph{i.e.} constant across all clients) part is to capture a common representation (\emph{e.g.} same features across different classes of images) while the random part, which is typically low-dimensional, performs personalisation and is assumed to be drawn from a \emph{population} prior whose variance aims at modeling data heterogeneity.
%Thanks to this random component, we are able to explain both local variability and account for uncertainty by sampling from the local posterior distribution $p(z^{(i)} \mid \mathrm{D}_i, \phi, \beta)$.

\Cref{fig:DAG} illustrates this statistical framework, referred to as \texttt{FedPop}, by showing its directed acyclic graph (DAG) where grey-filled shapes indicate observed variables, white-filled shapes unknown variables and squared shapes variables to be estimated. 

When the size of the local data set $\mathrm{D}_i$ is small, this common prior leverages information from other clients to limit the risk of overfitting and is often called \emph{partial pooling} in the multi-level statistical literature \citep[Section 12]{GelmanHill:2007}. 
Examples of model architectures involving $\phi$ and $\{z^{(i)}\}_{i \in [b]}$ include for instance \emph{composition}-based architectures $p\prLigne{\mathrm{D}_i \mid \phi, z^{(i)}}  = p\prLigne{\mathrm{D}_i \mid h_{\phi} \circ h_{z^{(i)}}}$ where $h_{\phi}$ and $h_{z^{(i)}}$ are two neural networks \citep{pmlr-v139-collins21a,Arivazhagan2019}. 
For the sake of generality, we propose to adopt a flexible energy-based prior distribution of the form for each $i\in [b]$,
\begin{equation}
  p\prn{z^{(i)}\mid \beta} = \frac{1}{Z(\beta)}\exp\bbr{-E(z^{(i)};\beta)}\eqsp, \text{ where $Z(\beta) = \int_{\Rd}\exp\bbr{-E(z^{(i)};\beta)} \dd z^{(i)}$ }\label{eq:EBM} \eqsp.
\end{equation}
Here, $Z(\beta)$ is a normalising constant and $E(\cdot;\beta)$ represents an \emph{energy} function, typically a neural network, parameterised by a set of parameters
$\beta \in \mathsf{B} \subseteq \R^{d_\mathsf{B}}$ \citep{LeCun06atutorial}. 
This framework is particularly interesting in the cross-device setting where the number of clients $b$ is large as it allows for efficient enrichment of the model.
However, in the case where $b$ is small, the inference of the parameter $\beta$ is difficult. 
In this situation, a more pragmatic solution is to consider a common prior for the local random effects $\{z^{(i)}\}_{i\in[b]}$ which is held fixed, \emph{i.e.} $p\prn{z^{(i)}\mid \beta} \propto  \exp\{-E(z^{(i)})\}$ for any $\beta \in \msb$. 
Finally, for completeness, we allow the use of a prior model $p(\phi,\beta)$ for the hyperparameters $\{\phi,\beta\}$.  
Using Bayes' rule \citep{Robert94} and by denoting $\mathrm{D} = \sqcup_{i=1}^{b} \mathrm{D}_{i}$ the global data set, the posterior distribution
associated with these hyperparameters admits a probability density
function which can be written as
\begin{equation}
  \label{eq:posterior}
  p\pr{\phi, \beta \mid \mathrm{D}} \propto p(\phi,\beta)\prod_{i=1}^b \br{\int_{\mathbb{R}^d} p\prn{\mathrm{D}_i \mid \phi, z^{(i)}} p\prn{z^{(i)}\mid \beta}\mathrm{d}z^{(i)}}\eqsp.
\end{equation}
Set $\theta = \{\phi,\beta\} \in \Theta$ with $\Theta = \Phi \times \mathsf{B}$.
In the sequel, we will be interested in solving the maximum a posteriori problem given by 
\begin{align}
  \theta^\star
  &\in \argmax_{\theta \in \Theta} \ \log p\prn{\phi, \beta \mid \mathrm{D}}\eqsp,  \label{eq:minimization_pb}\\
 \log p\prn{\phi, \beta \mid \mathrm{D}}  &= \log p(\phi,\beta) + \sum_{i=1}^b \br{\log \int_{\mathbb{R}^d} p\prn{\mathrm{D}_i \mid \phi, z^{(i)}} p\prn{z^{(i)}\mid \beta}\mathrm{d}z^{(i)}} + C\eqsp, \label{eq:minimization_objective}
\end{align}
where $C \in \R$ is a constant independent of $\theta$.
Once we have estimated $\theta^\star$, using an empirical Bayesian approach, we can perform ``for free'' on-device uncertainty quantification for each client $i \in [b]$ by sampling from the local posterior distribution $p(z^{(i)} \mid \mathrm{D}_i, \phi^\star, \beta^\star)$, which is typically designed to be low-dimensional.

\noindent \textbf{Algorithm.} To solve the optimisation problem \eqref{eq:minimization_pb}, we can either use an \emph{alternating maximisation} algorithm or perform global maximisation over $\Theta$.
Since the former approach requires more upload bandwidth,  in this work we consider the second alternative which is more suitable for FL.
The gradient of the objective function \eqref{eq:minimization_objective} being intractable, we propose to resort to the stochastic approximation framework \citep{robbins1951} which iteratively defines $(\phi_k,\beta_k)_{k \in \N}$, starting from any $(\phi_0,\beta_0)\in \Theta$, via the recursions for any $k \in \N$,
\begin{align}
  \beta_{k+1}
  &= \Pi_{\mathsf{B}}\pr{\beta_k+ \eta_{k+1}^{(1)} \br{\nabla_\beta\log p(\phi_k,\beta_k) + \sum_{i=1}^b\textsl{g}^{(i)}_k(\phi_k,\beta_k)}}\eqsp, \label{eq:SA_beta}\\
  \phi_{k+1} &= \Pi_{\Phi}\pr{\phi_k + \eta_{k+1}^{(2)}\br{\nabla_\phi \log p(\phi_k,\beta_k) + \sum_{i=1}^b \textsl{h}^{(i)}_k(\phi_k,\beta_k)}}\eqsp,\label{eq:SA_phi}
\end{align}
where $\Pi_{\mathsf{C}}$ denotes the projection onto $\mathsf{C} \in \{\Phi,\mathsf{B}\}$, $(\eta_k^{(1)},\eta_k^{(2)})_{k \in \N^*}$ are sequences of step-sizes, and $\{\textsl{g}^{(i)}_k \, : \, i \in [b]\eqsp,  k \in\nsets \}$ and $\{\textsl{h}^{(i)}_k \, : \, i \in [b]\eqsp,  k \in\nsets \}$ are estimators of the intractable gradients $(\phi,\beta) \mapsto \nabla_{\beta} \log p(\mathrm{D}_i\mid \phi,\beta)$ and $(\phi,\beta) \mapsto \nabla_{\phi} \log p(\mathrm{D}_i\mid \phi,\beta)$ at $(\phi_k,\beta_k)$, where $p(\mathrm{D}_i\mid \phi,\beta)$ is defined in \eqref{eq:marginal_likelihood} for any $i \in [b]$.

The choices of the estimators $\{\textsl{g}^{(i)}_k \, : \, i \in [b]\eqsp,  k \in\nsets \}$ and $\{\textsl{h}^{(i)}_k \, : \, i \in [b]\eqsp,  k \in\nsets \}$ are motivated by the Fisher identity. 
More precisely, under mild regularity assumptions, and using the Lebesgue dominated convergence theorem, we have for any, $(\phi,\beta) \in \Theta$, $i \in [b]$ 
\begin{align}
  \nabla_{\beta} \log p(\mathrm{D}_i\mid \phi,\beta)
  &= \int_{\mathbb{R}^d}\br{ \nabla_{\beta} \log p\prn{\mathrm{D}_i, z^{(i)} \mid \phi,\beta}} p\prn{z^{(i)} \mid \mathrm{D}_i,\phi,\beta}\mathrm{d}z^{(i)} \eqsp, \label{eq:fisher_beta}\\
  \nabla_{\phi} \log p(\mathrm{D}_i\mid \phi,\beta)
  &= \int_{\mathbb{R}^d} \br{\nabla_{\phi} \log p\prn{\mathrm{D}_i, z^{(i)} \mid \phi,\beta}} p\prn{z^{(i)} \mid \mathrm{D}_i,\phi,\beta}\mathrm{d}z^{(i)}\eqsp, \label{eq:fisher_phi}
\end{align}
which suggests to consider %for any $i \in [b]$ and $k \in \N$,
\begin{align}
 \textsl{g}_k^{(i)}(\phi,\beta)
  %&= \frac{1}{M} \sum_{m=1}^M \nabla_{\beta} \log \brnn{p\prn{\mathrm{D}_i, Z^{(i,m)} \mid \phi,\beta}}
  &  = \frac{1}{M} \sum_{m=1}^M \nabla_{\beta} \log \brnn{p\prn{ Z_k^{(i,m)}\mid \beta}} \eqsp, \label{eq:estimator_beta}\\
\textsl{h}_k^{(i)}(\phi,\beta) 
% &= \frac{1}{M} \sum_{m=1}^M \nabla_{\phi} \log \brnn{p\prn{\mathrm{D}_i, Z^{(i,m)}_{k} \mid \phi,\beta}}
    &=  \frac{1}{M} \sum_{m=1}^M \nabla_{\phi} \log \brnn{p\prn{\mathrm{D}_i \mid Z_k^{(i,m)}, \phi}}
    \eqsp, \label{eq:estimator_phi}
\end{align}
where $M \in \N^*$ and $Z_k^{(i,1:M)} = (Z_k^{(i,m)})_{m\in[M]}$ are approximate  samples from $p(z^{(i)}\mid\mathrm{D}_i, \phi,\beta)$. 
More precisely, we consider a family $\{Q^{(i)}_{\gamma,\theta} : \gamma \in (0,\bar{\gamma}], \theta \in \Theta\}$ where for any step-size $\gamma$, $Q^{(i)}_{\gamma,\theta}$ is a Markov kernel which targets a close approximation of $p(z^{(i)}\mid\mathrm{D}_i, \theta)$ with $\theta=\{\phi,\beta\}$.
As an example, we can use overdamped Langevin dynamics \citep{Roberts1996,Welling11} to generate these samples. 
In this case, $Q^{(i)}_{\gamma,\theta}$ is associated with a Gaussian probability density function $q^{(i)}_{\gamma,\theta}(z^{(i)},\cdot)$ with mean $z^{(i)} - \gamma \nabla_{z} \log p(z^{(i)} \mid \mathrm{D}_i, \theta)$ and variance $2\gamma \mathrm{I}_d$.
Note that the number of Monte Carlo draws per iteration $k$ is considered constant here but we can easily generalise our scheme  to the non-constant setting.
In addition, our scheme can also be generalised by taking into account \emph{stochastic} gradient estimators of \eqref{eq:estimator_beta} and \eqref{eq:estimator_phi}. For the sake of simplicity, we present our approach with standard gradients.

In this framework, we present the main steps of the corresponding stochastic approximation algorithm, called \texttt{FedSOUK}, in \Cref{algo:FedSOUL}. 
Since we consider the \emph{cross-device} federated setting, note that only a random subset $\mathsf{A}_{k+1}$ of active (\emph{i.e.} available) clients communicates with the central server at each iteration $k \in \N$.
In addition, due to limited upload bandwidth, the potentially high-dimensional gradient estimator \eqref{eq:estimator_phi} is compressed locally via an unbiased stochastic compression operator $\mathscr{C}_{k+1}$ before being sent to the central server \citep{alistarh2017qsgd,Artemis20}.

\noindent \textbf{Stateful and Stateless Versions.} Depending on local memory constraints and the participation rate, we allow for a possible warm-start strategy across communication rounds to improve the convergence properties of the proposed algorithm so that the proposed algorithm becomes \emph{stateful}, see steps $4-7$ in \Cref{algo:FedSOUL}.
In cases the participation rate is very low (\emph{e.g.} each client might only participates once to the training process), we replace this warm-start strategy by the initialisation $Z_{k}^{(i,0)} \sim p(\cdot \mid \beta_k)$ for any $i \in [b]$ and $k \in \{0,\ldots,K-1\}$.
This yields a \emph{stateless} version of our algorithm more suitable to the cross-device setting. 
Obviously, compared to the previously proposed warm-start strategy, the performances of \Cref{algo:FedSOUL} will be affected negatively if we are using the same number of local iterations $M$. 
We end up with an interesting trade-off between local computations and communication: if client-server communication is a bottleneck, the stateless version of the algorithm allows to reduce the communication overhead at the price of longer sampling procedures on each client.
Such analyses will be illustrated empirically in \Cref{sec:experiments}.

\noindent \textbf{Computation Complexity.} Compared to standard FL methods, our approach has an additional computational cost on the client side associated with Monte Carlo approximations $\{I_k^{(i)}\}$ and $\{J_k^{(i)}\}$.
In practice, this cost is negligible. 
Indeed, in our experiments, we found that using a small value of $M \in [1,10]$ was sufficient to obtain state-of-the-art results in terms of accuracy on the test dataset. 
We would like to emphasize that this additional computational cost has also two side advantages compared to existing FL approaches: (1) it allows us to communicate less frequently with the central server and (2) it allows us to converge faster when the number of local iterations increases since Monte Carlo approximation becomes better.

\noindent \textbf{Communication Overhead.} As pointed out in \Cref{table:overview}, our methodology \texttt{FedPop} improves upon existing FL approaches regarding the communication overhead. Indeed, \texttt{FedPop} offers the flexibility to use both compression for sending updates to the server, and multiple local steps to reduce the communication frequency. As such, depending on the bandwidth and local computational power, the practitioner can adapt the number of local iterations and the parameter of the compression operator. Up to our knowledge, this work is the first one combining compression and multiple local steps for personalised FL.

\noindent \textbf{Robustness to client drift.} For simplicity, we will take the example of \texttt{FedSOUL} (see \Cref{algo:FedSOUL}) which uses the Markov kernel associated with Langevin Monte Carlo to compute gradient estimates of the local marginal likelihood. 
However, our answer holds for general Markov kernels (adjusted or unadjusted). In this scenario, $M$ steps of Langevin Monte Carlo are performed on each device to draw samples $\{Z_k^{(i,m)}\}$ used to compute Monte Carlo estimates 
$\{I_k^{(i)}\}$ and $\{J_k^{(i)}\}$. Increasing the number of local steps $M$ does not slow down convergence but instead allows for more accurate Monte Carlo integration and hence better convergence properties. In contrast, the client drift phenomenon for classical FL approaches (e.g. \texttt{FedAvg} proposed in \citet{mcmahan17}) slows down convergence as the number of local iterations $M$ increases.

\noindent \textbf{Simple inference on new clients.} Typical personalized FL approaches such as \texttt{DITTO} or \texttt{FedRep} require additional local training for inference on new clients. In contrast, the proposed  methodology \texttt{FedPop} allows for a cheaper two-step approach once we have estimated $\theta^\star = (\phi^\star,\beta^\star)$, as detailed below for a new client $b+1$ with feature vector $x$:
\begin{enumerate}
  \item Sample $\{Z_{b+1}^{(l)}\}_{l \in [L]}$ in a i.i.d. manner.
  \item Estimate the posterior predictive function by $p(\cdot \mid x) \approx L^{-1} \sum_{l=1}^L p(\cdot \mid \phi^\star,Z_{b+1}^{(l)},x)$.
\end{enumerate}
The prior $p(z_{b+1} \mid \beta^\star)$ is typically chosen so that sampling is computationally cheap, \emph{e.g.} a Gaussian with diagonal covariance matrix as in our experiments, see \Cref{sec:experiments}.

\begin{algorithm}[h]
   \caption{FL via Stochastic Optimisation using Unadjusted Kernel (\texttt{FedSOUK})}
   \label{algo:FedSOUL}
  \begin{algorithmic}[1]
     \State {\bfseries Input:} nb. outer iterations $K$, nb. local iterations $M$, Markov kernels $\{Q^{(i)}_{\gamma,\theta}\}_{\gamma,\theta,i}$, step-sizes $\{\eta_k^{(1)},\eta_k^{(2)}\}_{k \in [K],i\in[b]}$ and initial points $Z_0^{(0)} \in \Rd$, $\beta_0 \in \mathsf{B}$ and $\phi_0 \in \Phi$.
     \For{$k=0$ {\bfseries to} $K-1$}
     \State Server sends $\{\beta_{k+1},\phi_{k+1}\}$ to clients belonging to $\mathsf{A}_{k+1}$.
     \For{$i \in \mathsf{A}_{k+1}$ \Comment{On active clients $\mathsf{A}_{k+1}$}}
        \State\Comment{Warm-start of the SA scheme if possible}
        \If{$k \ge 1$}
          %\State Set $\tilde{Z}_{k}^{(i,0)} = \tilde{Z}_{k-1}^{(i,M)}$.
          \State Set $Z_{k}^{(i,0)} = Z_{k-1}^{(i,M)}$.
        \EndIf
        \State \Comment{Computation of key quantities using MCMC}
        \For{$m=0$ {\bfseries to} $M-1$}
          %\State Draw $\xi_{k}^{(i,m+1)} \sim \mathrm{N}(0_d,\mathrm{I}_d)$.
          \State Draw $Z_{k}^{(i,m+1)} \sim Q^{(i)}_{\gamma,\theta_k}\pr{Z_{k}^{(i,m)},\cdot}$.
          \State \Comment{For Langevin dynamics} 
          \State \Comment{Draw $\xi_{k}^{(i,m+1)} \sim \mathrm{N}(0_d,\mathrm{I}_d)$.}
          \State \Comment{Set $Z_{k}^{(i,m+1)} = Z_{k}^{(i,m)} + \gamma \nabla_z \log p(Z_{k}^{(i,m)} \mid \mathrm{D}_i, \phi_k,\beta_k) + \sqrt{2\gamma}\xi_{k}^{(i,m+1)}$.}
        \EndFor
       \State \Comment{Communication with the server}
       \State Set $I_k^{(i)} = \frac{1}{M}\sum_{m=1}^M \nabla_{\beta} \log p \pr{Z_k^{(i,m)} \mid \beta_k}$.
       \State Set $J_k^{(i)} = \frac{1}{M}\sum_{m=1}^{M} \nabla_{\phi} \log p \pr{\mathrm{D}_i \mid Z_k^{(i,m)}, \phi_k}$.
       \State Send $I_{k}^{(i)}$ and $\mathscr{C}_{k+1}\pr{J_{k}^{(i)}}$ to the central server.
     \EndFor
     \State Set $\beta_{k+1} = \Pi_{\mathsf{B}}\pr{\beta_{k} + \eta_{k+1}^{(1)} \br{\nabla_{\beta} \log p(\phi_k,\beta_k) + \frac{b}{|\mathsf{A}_{k+1}|}\sum_{i\in \mathsf{A}_{k+1}}I_k^{(i)}}}$.
     \State Set $\phi_{k+1} = \Pi_{\Phi}\pr{\phi_{k} + \eta_{k+1}^{(2)}\br{\nabla_\phi \log p(\phi_k,\beta_k) + \frac{b}{|\mathsf{A}_{k+1}|}\sum_{i\in \mathsf{A}_{k+1}} \mathscr{C}_{k+1}\pr{J_{k}^{(i)}}}}$.
     \EndFor
     \State {\bfseries Output:} $\{\phi_K,\beta_K\}$ and samples $\{Z_{K-1}^{(1:b,m)}\}_{m=1}^{M}$.
  \end{algorithmic}
\end{algorithm}

\section{Theoretical Guarantees}
\label{sec:theory}

In this section, we present non-asymptotic convergence guarantees for \Cref{algo:FedSOUL} when the family of Markov kernels $\{Q^{(i)}_{\gamma,\theta} : \gamma \in (0,\bar{\gamma}], \theta \in \Theta, i \in [b]\}$ is associated to unadjusted, \emph{i.e.} without Metropolis acceptance step, overdamped Langevin dynamics \citep{Durmus2017,Dalalyan2017}.
The bounds we derive allow to showcase explicitly the impact of FL constraints, namely partial participation and compression.
Results for general unadjusted Markov kernels are postponed to the supplement.

To show our theoretical results and resort to standard assumptions made in the stochastic approximation literature, we consider a minimisation problem and rewrite the opposite of the objective function \eqref{eq:minimization_objective} for any $\theta \in \Theta$ as
\begin{equation}
  \label{eq:def_function_f_fedsouk}
  f(\theta) = b^{-1} \sum_{i=1}^b f_i(\theta)\eqsp, \quad \text{where} \ f_i\pr{\theta} = -\log p(\phi,\beta) - b\log p\pr{\mathrm{D}_i\mid\phi,\beta} \eqsp.
\end{equation}

\noindent \textbf{Non-Asymptotic Convergence Bounds.} For the sake of better readability, we only detail in the main paper assumptions regarding the objective function, compression operators and the partial participation scenario.
Technical assumptions related to the Markov kernels $\{Q^{(i)}_{\gamma,\theta}\}$ are postponed to the supplement.
In spirit, we require, for any $i \in [b], \theta \in \Theta$ and $\gamma$, that $Q^{(i)}_{\gamma,\theta}$ satisfies some ergodic condition and can provide samples sufficiently close to the local posterior distribution $p(z^{(i)} \mid \mathrm{D}_i, \theta)$.
For the sake of simplicity, we also assume that for any $k \in \N^*, \eta_k^{(1)} = \eta^{(2)}_k = \eta_k$, see \Cref{algo:FedSOUL}.

We make the following assumptions on $\Theta$ and the family of functions $\{f_i : i \in [b]\}$.
\begin{assumption}
    \label{ass:convex_set}
  $\Theta$ is convex, closed subset of $\rset^{\dtheta}$ and $\Theta \subset \mathrm{B}(0,R_\Theta)$ for $R_\Theta>0$.
\end{assumption}

\begin{assumption}\label{ass:function_f}
  For any $i\in[b]$, the following conditions hold.

  \begin{enumerate}[wide, labelwidth=!, labelindent=0pt,label=(\roman*),noitemsep,nolistsep]

    \item \label{ass:function_f_1} The function $f_i$ defined in \eqref{eq:def_function_f_fedsouk} is convex.

    \item \label{ass:function_f_2} There exist an open set $\mathsf{U} \in \R^{d_{\Theta}}$ and $L_f > 0$ such that $\Theta \subset \mathsf{U}$, $f_i \in \mathrm{C}^1(\mathsf{U},\R)$ and for any $\theta_1,\theta_2 \in \Theta$,
      $$
      \norm{\nabla f_i(\theta_2) - \nabla f_i(\theta_1)} \le L_f \norm{\theta_2 - \theta_1}\eqsp.
      $$
    \end{enumerate}
\end{assumption}

The assumption below requires compression operators $\{\mathscr{C}_{k}\}_{k\in \N^*}$ to be unbiased and to have a bounded variance.
Such an assumption is for instance verified by stochastic quantisation operators, see \citet{alistarh2017qsgd}.
\begin{assumption}\label{ass:compression_main}
  The compression operators $\{\mathscr{C}_{k}\}_{k\in \N^*}$ are independent and satisfy the following conditions.
  \begin{enumerate}[wide, labelwidth=!, labelindent=0pt,label=(\roman*),noitemsep,nolistsep]
    \item \label{ass:compression_main:unbiased} For any $k \in \N^*$, $v\in\R^d$, $\mathbb{E}[\up_{k}\prn{v}] = v$.
    \item \label{ass:compression_main:variance} There exists $\omega\ge 1$, such that for any $k \in \N^*$, $v\in\R^d$, $\mathbb{E}[\norm{\up_k\prn{v}-v}^{2}] \le \omega\norm{v}^{2}$.
  \end{enumerate}
\end{assumption}

We finally assume that each client has probability $p \in\ocint{0,1}$ to be active at each communication round. 
We would like to point out that this partial participation assumption can be associated to a specific compression operator satisfying \Cref{ass:compression_main}.

\begin{assumption}
    \label{ass:A_k_supp}
  For any $k \in \N^*$, $\mathsf{A}_{k} = \{ i \in [b] \,: \, B_{i,k}= 1 \}$ where for any $i \in [b]$, $\{B_{i,k}\,:  \, k\in\nsets\}$ is a family of \iid~Bernouilli random variables with success probability $p \in\ocint{0,1}$.
\end{assumption}

Under these assumptions, the next result establishes that $(\bar{\theta}_k)_{k \in \N}$ defined by $\bar{\theta}_k = \sum_{j=1}^k \eta_j \theta_j /(\sum_{j=1}^k \eta_j)$ converges towards an element of $\argmin_\Theta f$.

\begin{theorem}
  Assume \Cref{ass:convex_set}-\Cref{ass:A_k_supp} along with \Cref{ass:posterior_densities} detailed in the supplement and let for any $k \in [K]$, $\eta_k \in (0,1/L_f]$.
  Then, for any $K \in \N^*$, we have
    \begin{align}
      \mathbb{E}\br{f(\bar{\theta}_{k}) - f(\theta_\star)} \le \mathbb{E}\br{\frac{\sum_{k=1}^K \eta_{k}\{f(\theta_{k}) - f(\theta_\star)\}}{\sum_{k=1}^K \eta_{k}}}
        &\le A(\gamma) + \frac{E_K}{\sum_{k=1}^K \eta_{k}} \eqsp,
    \end{align}
    where $E_K$ depends linearly on $(\omega/p)\sum_{k=1}^K \eta^2_k$; and $A(\gamma) = C \gamma^\alpha$ with $\alpha > 0$ and $C$ is independent of $\omega, p$ and $(\eta_k)$. 
    Closed-form formulas for these constants are provided in the supplement.
\end{theorem}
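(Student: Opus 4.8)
The plan is to recognise the recursions \eqref{eq:SA_beta}--\eqref{eq:SA_phi} as a single projected stochastic gradient scheme $\theta_{k+1}=\Pi_\Theta(\theta_k-\eta_{k+1}g_k)$ for the convex objective $f$ of \eqref{eq:def_function_f_fedsouk}, where $g_k$ aggregates the compressed, partially-participating Monte Carlo gradient estimates $I_k^{(i)},J_k^{(i)}$. Let $\mathcal{F}_k$ be the $\sigma$-field generated by everything up to the start of round $k$, set $\bar g_k=\mathbb{E}[g_k\mid\mathcal{F}_k]$, decompose $\bar g_k=\nabla f(\theta_k)+b_k$ with bias $b_k$, and write $\xi_k=g_k-\bar g_k$ for the conditionally centred noise. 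The strategy is the standard convex analysis of projected stochastic gradient descent, adapted to two features: a \emph{nonvanishing} bias $b_k$ coming from the unadjusted Markov kernels, and an inflated noise variance coming from compression and partial participation.

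First I would exploit the nonexpansiveness of the Euclidean projection onto the convex set $\Theta$ (\Cref{ass:convex_set}): for any $\theta_\star\in\argmin_\Theta f$,
\[
\|\theta_{k+1}-\theta_\star\|^2\le\|\theta_k-\theta_\star\|^2-2\eta_{k+1}\langle g_k,\theta_k-\theta_\star\rangle+\eta_{k+1}^2\|g_k\|^2.
\]
Taking $\mathbb{E}[\cdot\mid\mathcal{F}_k]$ and invoking convexity of $f$ (\Cref{ass:function_f}) gives $\langle\nabla f(\theta_k),\theta_k-\theta_\star\rangle\ge f(\theta_k)-f(\theta_\star)$, while the bias contributes at most $2R_\Theta\|b_k\|$ since $\diameter(\Theta)\le 2R_\Theta$. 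Because $\Theta$ is compact and each $f_i$ is $C^1$ on a neighbourhood with $L_f$-Lipschitz gradient (\Cref{ass:function_f}), the gradients are uniformly bounded on $\Theta$, which lets me bound $\|\bar g_k\|^2$ and control $\mathbb{E}[\|\xi_k\|^2\mid\mathcal{F}_k]$. Rearranging, summing over $k$, telescoping the $\|\theta_k-\theta_\star\|^2$ terms and dividing by $\sum_k\eta_k$ yields the second inequality; the first inequality is Jensen applied to the weighted average $\bar\theta_k$. The step-size restriction $\eta_k\le 1/L_f$ is exactly what lets the $\eta_{k+1}^2\|\bar g_k\|^2$ term be absorbed.

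The crux, and the paper-specific part, is to quantify $b_k$ and the conditional variance of $\xi_k$. Here I would use the Fisher identities \eqref{eq:fisher_beta}--\eqref{eq:fisher_phi}: the exact gradient $\nabla f_i(\theta_k)$ is an integral of the score functions against the true local posterior $p(z^{(i)}\mid\mathrm{D}_i,\theta_k)$, whereas $I_k^{(i)},J_k^{(i)}$ average the same scores along $M$ steps of the Markov kernel $Q^{(i)}_{\gamma,\theta_k}$. I would split the bias into (a) a discretisation bias between the kernel's invariant law and the true posterior, which the ergodicity/approximation hypothesis (\Cref{ass:posterior_densities}) bounds by $\mathcal{O}(\gamma^\alpha)$, and (b) a transient term from the chain not being at stationarity, controlled by geometric ergodicity. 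This is where the \emph{warm-start} $Z_k^{(i,0)}=Z_{k-1}^{(i,M)}$ is delicate: the inner chains are correlated across outer rounds, so I would propagate a uniform-in-$k$ moment (Lyapunov) bound on $Z_k^{(i,m)}$ to make $\|b_k\|\le C\gamma^\alpha$ hold uniformly. For the noise, unbiasedness and the variance bound of the compression operator (\Cref{ass:compression_main}) together with the Bernoulli participation (\Cref{ass:A_k_supp}) --- which, as the authors note, behaves like an additional unbiased compression --- yield a conditional variance of order $(\omega/p)$ times a uniform bound on the per-client gradients. Collecting terms, the persistent bias feeds $2R_\Theta C\gamma^\alpha\sum_k\eta_k$, which after normalisation is exactly $A(\gamma)=C\gamma^\alpha$ with $C$ independent of $\omega,p,(\eta_k)$, whereas all the $\eta_{k+1}^2$ contributions (the gradient-norm term and the $(\omega/p)$ variance) assemble into $E_K$, linear in $(\omega/p)\sum_k\eta_k^2$. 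I expect step (b), controlling the transient Monte Carlo bias uniformly in $k$ under the warm-start coupling, to be the main obstacle.
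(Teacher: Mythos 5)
Your high-level convex machinery is sound and coincides with the paper's: the projection--convexity--telescoping argument plus Jensen for $\bar\theta_k$ is essentially the paper's \Cref{lemma_convex_general} (proved there via a proximal inequality rather than nonexpansiveness, an immaterial difference), and your treatment of compression and Bernoulli participation as unbiased noise inflating the variance by a factor of order $(\omega+1+p)/p$ is exactly what \Cref{lemma:convex_error_term} and \Cref{lemma:convex_innerprod} establish. The genuine gap is in your treatment of the Markov chain Monte Carlo error. You split the conditional mean of the gradient estimator as $\nabla f(\theta_{k})+b_k$ and claim that a Lyapunov bound lets you prove $\|b_k\|\le C\gamma^\alpha$ uniformly in $k$, folding both the discretisation bias and the transient (non-stationarity) error into $b_k$. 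The transient part admits no such bound. Under the ergodicity assumption (\Cref{ass:markov_kernel}), $M$ steps of the kernel contract the $V$-distance to its invariant law only by a factor $\rho^{M\gamma}$, and $\rho^{M\gamma}\to 1$ as $\gamma\to 0^+$; so after a warm start the conditional bias of $\frac{1}{M}\sum_{m=1}^M H^{(i)}_{\theta_{k-1}}(Z_k^{(i,m)})$ relative to $\pi^{(i)}_{\gamma,\theta_{k-1}}(H^{(i)}_{\theta_{k-1}})$ is $\mathrm{O}(1)$ per round and actually degrades, rather than improves, as the Langevin step size $\gamma$ decreases. A uniform-in-$k$ bias bound of order $\gamma^\alpha$ is therefore false, and with it the step of your argument that produces $A(\gamma)$.

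What the paper does instead (following the \texttt{SOUL} framework) is to solve the Poisson equation $(\mathrm{Id}-R^{(i)}_{\gamma,\theta})\hat H^{(i)}_{\gamma,\theta}=H^{(i)}_\theta-\pi^{(i)}_{\gamma,\theta}(H^{(i)}_\theta)$ and decompose each per-round MCMC error into four pieces: a martingale increment, which has zero conditional mean and hence drops out of the inner-product term of \Cref{lemma_convex_general}; two telescoping/perturbation terms whose \emph{weighted sums} are controlled by $\sum_{k}|\eta_k-\eta_{k-1}|\gamma_{k-1}^{-1}$, $\sum_k\eta_k^2\gamma_{k-1}^{-1}$ and $\eta_K/\gamma_K$ (\Cref{lemma:epsilon_b} and \Cref{lemma:epsilon_c}); and finally the invariant-law bias $\pi^{(i)}_{\gamma,\theta}(H^{(i)}_\theta)-\pi^{(i)}_{\theta}(H^{(i)}_\theta)$, which is the \emph{only} piece bounded pointwise by $\boldsymbol{\Psi}(\gamma)$ (\Cref{lemma:epsilon_d}), with $\boldsymbol{\Psi}(\gamma)\propto\gamma^{1/2}$ for the Langevin kernel under \Cref{ass:posterior_densities} (\Cref{lemma:convex_error_term_1}). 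It is this last term alone that yields $A(\gamma)=C\gamma^\alpha$ with $\alpha=1/2$; the transient error is never made small pointwise but is neutralised by the martingale structure and Abel-type summation, and its residue sits inside $E_K$ with $\gamma^{-1}$-weighted factors that vanish after division by $\sum_k\eta_k$ only because of their specific form. Without this (or an equivalent) decomposition your argument cannot close: the step you flagged as "the main obstacle" is not a technical refinement to be supplied later, it is the central idea of the proof and requires a different route from the one you set up.
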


An interesting feature of \Cref{algo:FedSOUL} is that convergence towards a minimum of $f$ is possible and the impact of partial participation and compression vanishes when $\lim_{k \rightarrow \infty} \eta_k = 0$.
More precisely, $\limsup_{k \rightarrow \infty} E_K/(\sum_{k=1}^K \eta_k) = 0$ and $\lim_{\gamma \rightarrow 0^+} A(\gamma) = 0$ which shows that we can tend towards a minimum of $f$ with arbitrary precision $\epsilon > 0$ by setting the step-size $\gamma$ to a small enough value.

\section{Related Works}
\label{sec:related_work}

As pointed out in \Cref{sec:intro}, many different approaches have been proposed to address personalisation and uncertainty quantification under the federated learning paradigm.
This section reviews the main related existing lines of research and shows that the proposed methodology provides many benefits; see \Cref{table:overview}.
Interestingly, we also show that \texttt{FedPop} encompasses some of the existing FL models.

\noindent \textbf{Bayesian FL.} One of our main motivations is the possibility to perform grounded uncertainty quantification in FL by resorting to the Bayesian paradigm.
In the recent years, many works have suggested to adapt serial workhorses stochastic simulation approaches such as MCMC or variational inference to the FL setting \citep{chen2020fedbe,SimeoneLiu21_bis,SimeoneLiu21,VonoQLSD,2020_conducive_gradients,Corinzia19,ThangBui18,2021_ICML_DGLMC,SGLD-FedAvg}.
Although some of these approaches address important FL challenges such as the communication bottleneck, partial participation or limited computational device resources, they are not suitable for uncertainty quantification in the cross-device FL scenario.
Indeed, all these approaches assume that the posterior distribution targeted by each client is parametrised by a single potentially high-dimensional parameter of size $d_{\Phi} + d$, see \eqref{eq:marginal_likelihood}. 
This prevents  a sufficient number of samples from being stored  locally to perform uncertainty quantification and Bayesian model averaging, especially when the model is a large neural network.
In contrast, our approach decouples this unique high-dimensional parameter into a fixed part $\phi$ and a low-dimensional random part $z^{(i)}$, significantly reducing the memory footprint of  local sample storage. 

In addition, Bayesian FL methods aim at sampling a random parameter from a target probability distribution where $\pi(\theta) \propto e^{-f(\theta)}$ where $f = (1/b)\sum_{i=1}^b f_i$ denotes the negative log-likelihood associated to the $i$-th client. On the other hand the proposed methodology  considers a mixed-effects modeling approach where parameters are divided into two categories: a fixed component  and a random one for each client. As such, the mixed-effects approach is in essence an empirical Bayesian/marginal likelihood approach \citep{CasellaEB,robbins1992empirical}. It corresponds to a hierarchical model that aims to combine the modeling flexibility and uncertainty assessment of Bayesian inference with computational pragmatism. More precisely, a part of the parameters (fixed-effects ) are estimated via marginal likelihood maximisation and the rest (random effects) using common Bayesian techniques, which are in most cases low dimensional. As a result, up to our knowledge, the model and approach that we propose is novel in FL and comes with many benefits as shown in \Cref{table:overview}.

\begin{table}[t]
\caption{Overview of the main existing personalised FL (top rows) and Bayesian FL (bottom rows) approaches related to the proposed framework. 
Column ``PP'' refers to partial participation, ``perso.'' to personalised approaches, ``bounds'' to available convergence guarantees, ``UQ'' to available uncertainty quantification, ``com.'' to the scheme (multiple local steps and/or compression) used to address the communication bottleneck and ``memory'' to the client memory footprint where $M$ stands for the number of samples.}
\label{table:overview}
\vskip 0.15in
\begin{center}
{\small
\begin{sc}
\begin{tabular}{lcccccc|c}
\toprule
method & PP & perso. & bounds & UQ & com. & memory & \small \texttt{FedPop} instance \\
\midrule
\texttt{Per-FedAvg} & \greencheck & \greencheck & \greencheck & \redcross & \tiny local steps & $d + d_{\Phi}$ & \redcross\\
\texttt{pFedMe} & \redcross & \greencheck & \greencheck & \redcross & \tiny local steps & $d + d_{\Phi}$ & \redcross\\
\texttt{FedRep} & \greencheck & \greencheck & \greencheck & \redcross & \tiny local steps & $d + d_{\Phi}$ & \greencheck\\
\texttt{DITTO} & \greencheck & \greencheck & \greencheck & \redcross & \tiny local steps & $d + d_{\Phi}$ & \redcross\\
\texttt{LG-FedAvg} & \greencheck & \greencheck & \greencheck & \redcross & \tiny local steps & $d + d_{\Phi}$ & \redcross\\
\midrule
\texttt{QLSD} & \greencheck & \redcross & \greencheck & \greencheck & \tiny compression & $M(d + d_{\Phi})$ & \redcross \\
\texttt{FSGLD} & \redcross & \redcross & \greencheck & \greencheck & \tiny local steps & $M(d + d_{\Phi})$ & \redcross\\
\texttt{FedBe} & \greencheck & \redcross & \redcross & \greencheck & \tiny local steps & $M(d + d_{\Phi})$ & \redcross \\
\texttt{DG-LMC} & \redcross &  \redcross & \greencheck & \greencheck & \tiny local steps & $M(d + d_{\Phi})$ & \greencheck\\
\midrule
\textcolor{purple}{\texttt{FedPop}} & \greencheck &  \greencheck & \greencheck & \greencheck & \tiny both & $Md + d_{\Phi}$ & --\\
\bottomrule
\end{tabular}
\end{sc}
}
\end{center}
\vskip -0.1in
\end{table}

\noindent \textbf{Personalised FL.} Beside uncertainty quantification, we also aim at providing each client with a dedicated personalised model.
Among the numerous  existing personalised FL approaches, those related to \texttt{FedPop} can be broadly classified  into two groups: \emph{meta-learning} and \emph{partially local methods}.
Meta-learning based FL methods aim at training a global model conducive to fast training of personalised models. Such a goal can be achieved, for example, by local fine-tuning \citep{NEURIPS2020_24389bfe}, regularisation of local models towards their average \citep{hanzely2020federated,Hanzely2021Personalized} -- or the opposite \citep{Ditto}, and model interpolation \citep{liang2020think}.
On the other hand, FL methods based on partial decoupling take an approach similar to ours by splitting the initial model into a backbone component and a local one aimed at personalisation \citep{pmlr-v139-collins21a,Arivazhagan2019,pillutla2022federated}. This partial decoupling could also enhance privacy as discussed in \citet{NEURIPS2021_5d44a2b0}.
The main difference with \texttt{FedPop} is that such approaches based on empirical risk minimisation cannot provide credibility information. 
 
\noindent \textbf{\texttt{FedPop}: A Compromise between Standard and Personalised FL.} Interestingly, we show here that the \texttt{FedPop} framework allows existing FL approaches to be retrieved  in certain regimes.
To this end, we assume that the prior $p(z^{(i)} \mid \beta)$ is Gaussian with mean $\mu$ and covariance matrix $\sigma^2 \mathrm{I}_d$ so that $\beta = \{\mu,\sigma\}$.
If $\sigma \rightarrow 0^+$, then this Gaussian prior tends towards the Dirac distribution centered at $\mu$ and the local likelihood becomes $p(\mathrm{D}_i \mid \phi,\mu)$, which corresponds to the local objective of standard FL approaches such as \texttt{FedAvg} \citep{mcmahan17}.
On the other hand, when $\sigma \rightarrow \infty$, no common information $\beta$ is used to locally regress $z^{(i)}$ and we end up with the \texttt{FedRep} algorithm \citep{pmlr-v139-collins21a}.
This shows that \texttt{FedPop} stands for a subtle compromise between standard and personalised FL which should  benefit clients with small data sets by pooling information via a common prior.
Finally, in the extreme scenario where $\phi$ is the null vector, our approach amounts to the Bayesian FL approach \texttt{DG-LMC} proposed  in \citet{2021_ICML_DGLMC}.

\section{Numerical Experiments}
\label{sec:experiments}

In this section, we illustrate the benefits of our methodology on several FL benchmarks associated to both synthetic and real data.
Since existing Bayesian FL approaches are not suited for personalisation (see \Cref{table:overview}), we only compare the performances of \Cref{algo:FedSOUL} with personalised FL methods.
In all our experiments, we use overdamped Langevin dynamics to sample locally and call this specific instance of \Cref{algo:FedSOUL}, \texttt{FedSOUL}. In addition, we set $p(z^{(i)}\mid \beta) = \mathrm{N}(\mu,\sigma^2\mathrm{I}_d)$ with $\beta = \{\mu,\sigma\}$ for simplicity.
To be comparable with existing personalised FL approaches that only consider periodic communication via multiple local steps, we do not resort to the proposed compression mechanism although the latter could be of interest for real-world applications.
Additional experiments and details about experimental design are provided in the supplement.

\noindent \textbf{Synthetic Data.} We start by showcasing the benefits of \texttt{FedSOUL} for clients having small and highly heterogeneous data sets as pointed out in \Cref{sec:intro} and \Cref{sec:fed_pop}. 
To this end, we consider a similar experimental setting as in \citet{pmlr-v139-collins21a} where synthetic observations $\{y^{(i)}_j\}_{j \in [N_i]} \in \mathrm{D}_i$ are generated via the following procedure: $x^{(i)}_j \sim \mathrm{N}(0_k,\mathrm{I}_k)$ and $y^{(i)}_j \sim \mathrm{N}(z^{(i)}_{\mathrm{true}} \phi_{\mathrm{true}}^\top x^{(i)}_j,0.1)$. The ground-truth parameters $z^{(i)}_{\mathrm{true}} \in \R^{d}$ and $\phi_{\mathrm{true}} \in \R^{k \times d}$ have been randomly generated beforehand with $(d,k) = (2,20)$.
Compared to \citet{pmlr-v139-collins21a}, we use heterogeneous data partitions across clients so that 90\% of the $b=100$ clients have small data sets of size 5 and the remaining 10\% have data sets of size 10.
We compare our results with \texttt{FedRep} \citep{pmlr-v139-collins21a} and \texttt{FedAvg} \citep{mcmahan17} since they stand for two limiting instances of the proposed methodology, see \Cref{sec:related_work} and \citet[Section 12]{GelmanHill:2007}.
\begin{wrapfigure}{r}{0.6\textwidth}
\vspace{-0.5cm}
  \begin{center}
    \includegraphics[scale=0.23]{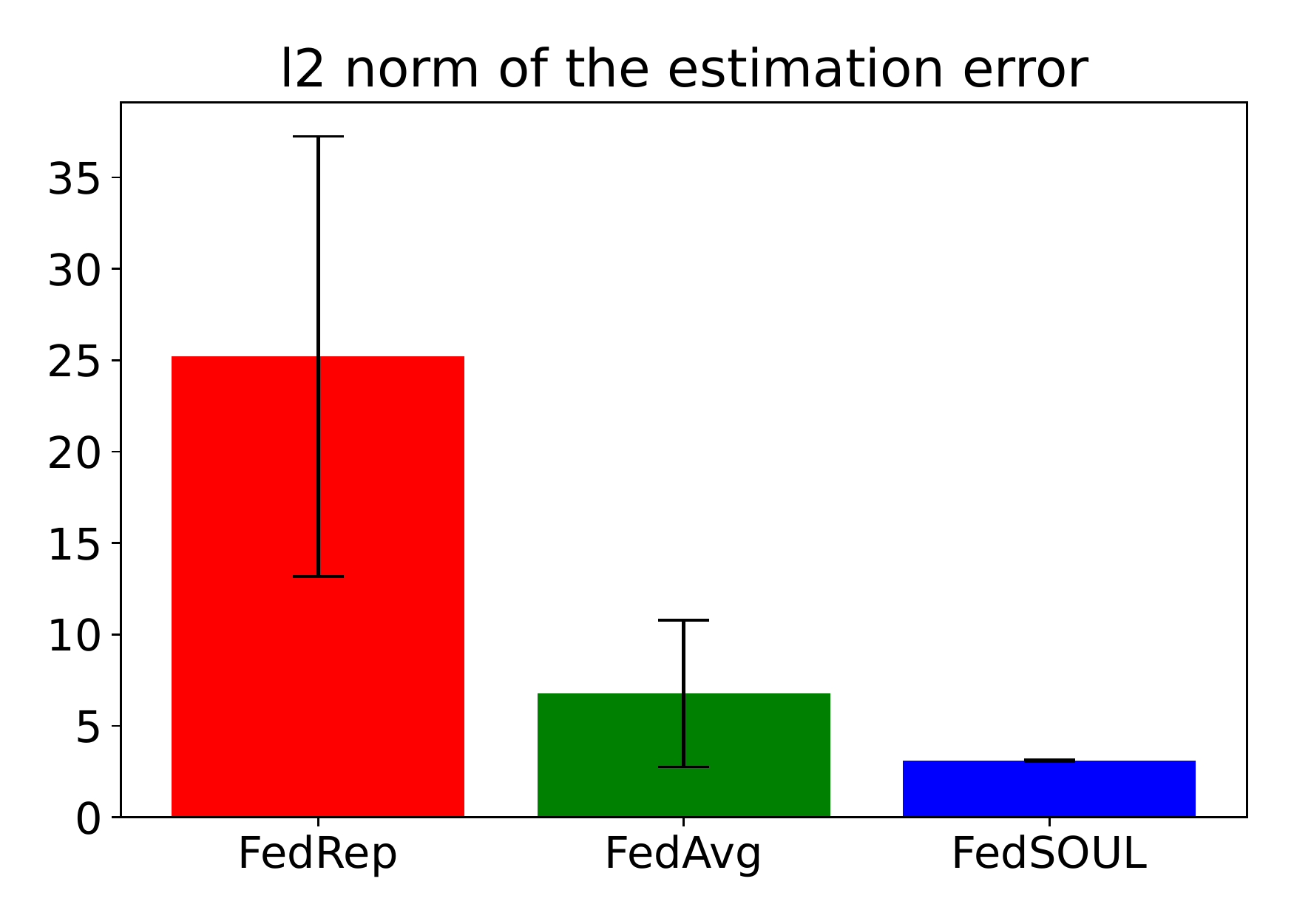}
    \includegraphics[scale=0.23]{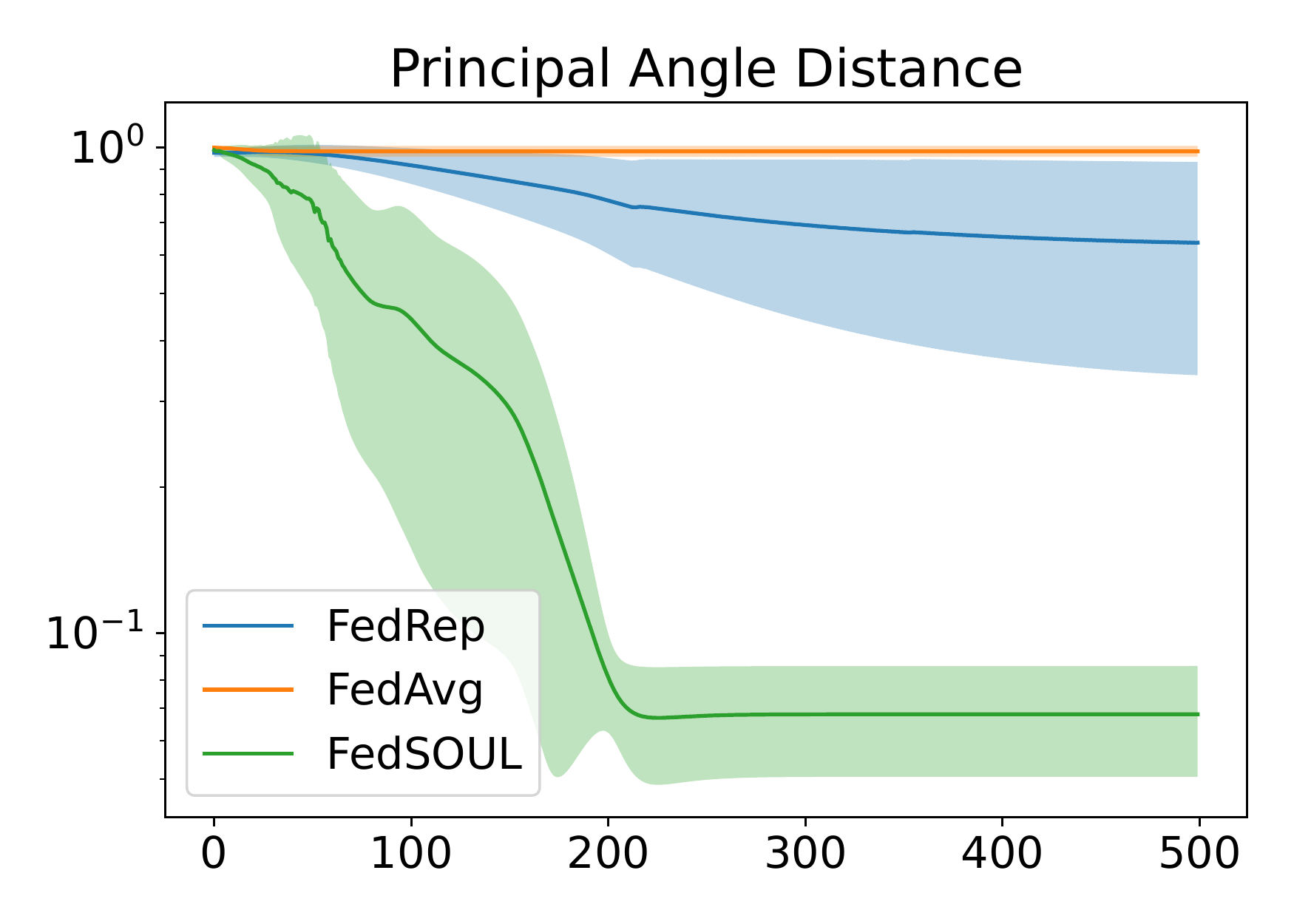}
  \end{center}
  \caption{Small data sets - synthetic data.}
  \label{fig:toy}
  \vspace{-0.2cm}
\end{wrapfigure}
\Cref{fig:toy} compares the different approaches by computing the principle angle distance\footnote{defined in \citep[Definition 1]{pmlr-v139-collins21a}} (respectively the $\ell_2$ norm) between $\phi_{\mathrm{true}}$ (respectively $z^{(i)}_{\mathrm{true}}$) and its estimated value; the lesser the better.
In contrast to its main competitors and based on both metrics, \texttt{FedSOUL} provides an impressive improvement. 
This illustrates the benefits of the introduction of a common prior $p(z^{(i)}\mid \beta)$ which allows to prevent from overfitting on clients with small data sets while performing personalisation. Additional results with other choices for $(b,d,k)$ and data partitioning strategies are available in the supplement.

Moreover, to compare our algorithm with a non-FL setting, we perform a non-distributed and non-federated stochastic approximation algorithm to find $\theta^\ast$ using a large number of iterations to get an accurate approximation of the optimal parameter $\theta^\ast$. Then, we use \texttt{FedPop} to obtain an estimate $\tilde{\theta^\ast}$ and measure the relative error in $l_2$- distance between $\theta^\ast$ and $\tilde{\theta^\ast}$. For some outer iterations $T=100$, the relative error was less than $10^{-3}$, which illustrates the relevance of our theoretical results.
We also test the performances of the proposed approach when the warm-start strategy is not used. In this case, we have to set $M=50$ to achieve the same performances as in the stateful variant of \texttt{FedSOUL}.

\noindent \textbf{Real Data.} We consider now real image data sets, namely CIFAR-10 and CIFAR-100 \citep{CIFAR10}. 
For our likelihood model defined by $p(\mathrm{D}_i\mid \phi,z^{(i)})$, we use 5-layer convolutional neural networks and perform personalisation for the last layer.
We set $b = 100$ for convenience and control data heterogeneity by assigning to each client $N_i$ images belonging to only $S$ different classes.

\begin{wrapfigure}{r}{0.6\textwidth}
  \vspace{-0.5cm}
  \begin{center}
    \includegraphics[scale=0.23]{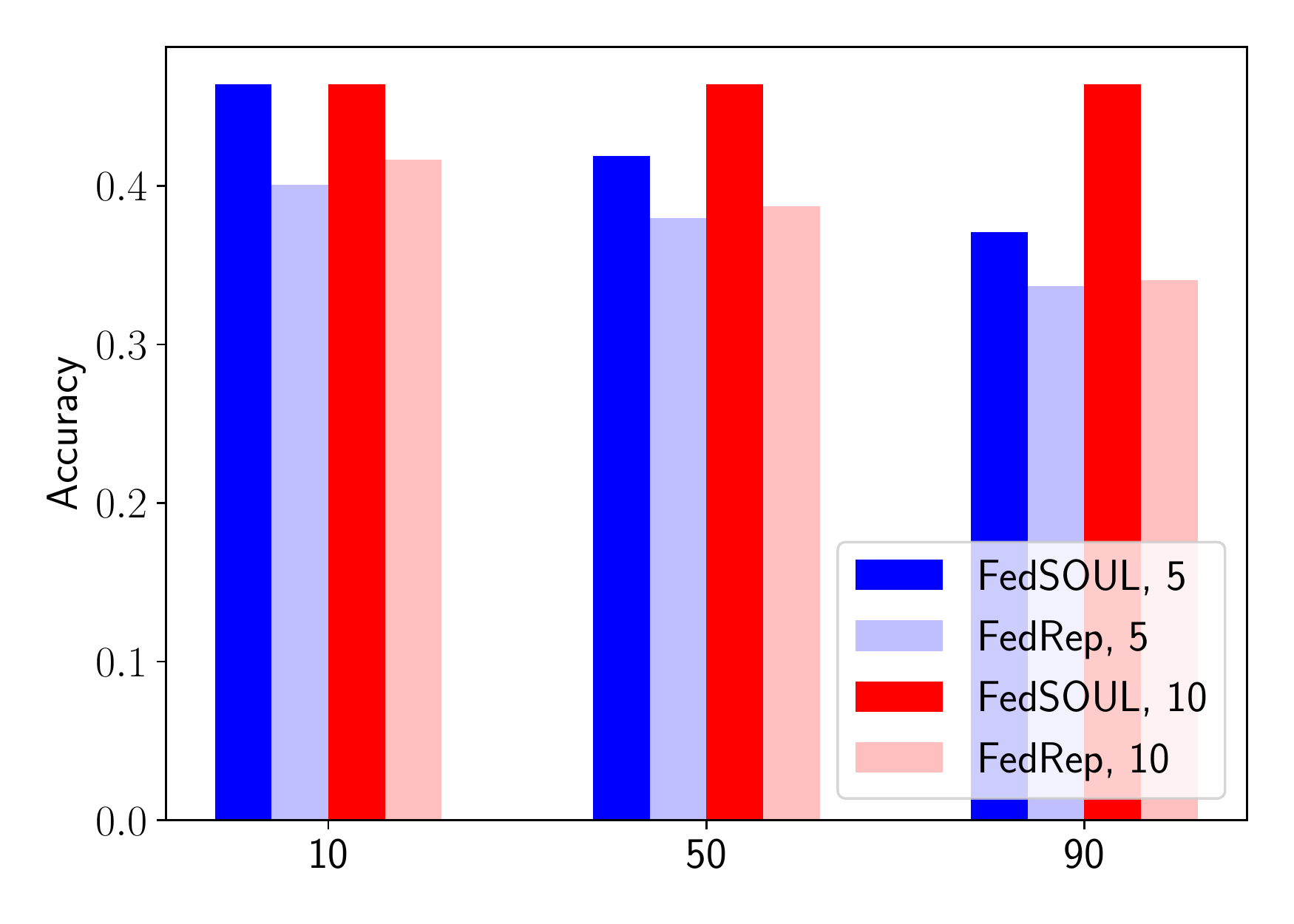}
    \includegraphics[scale=0.23]{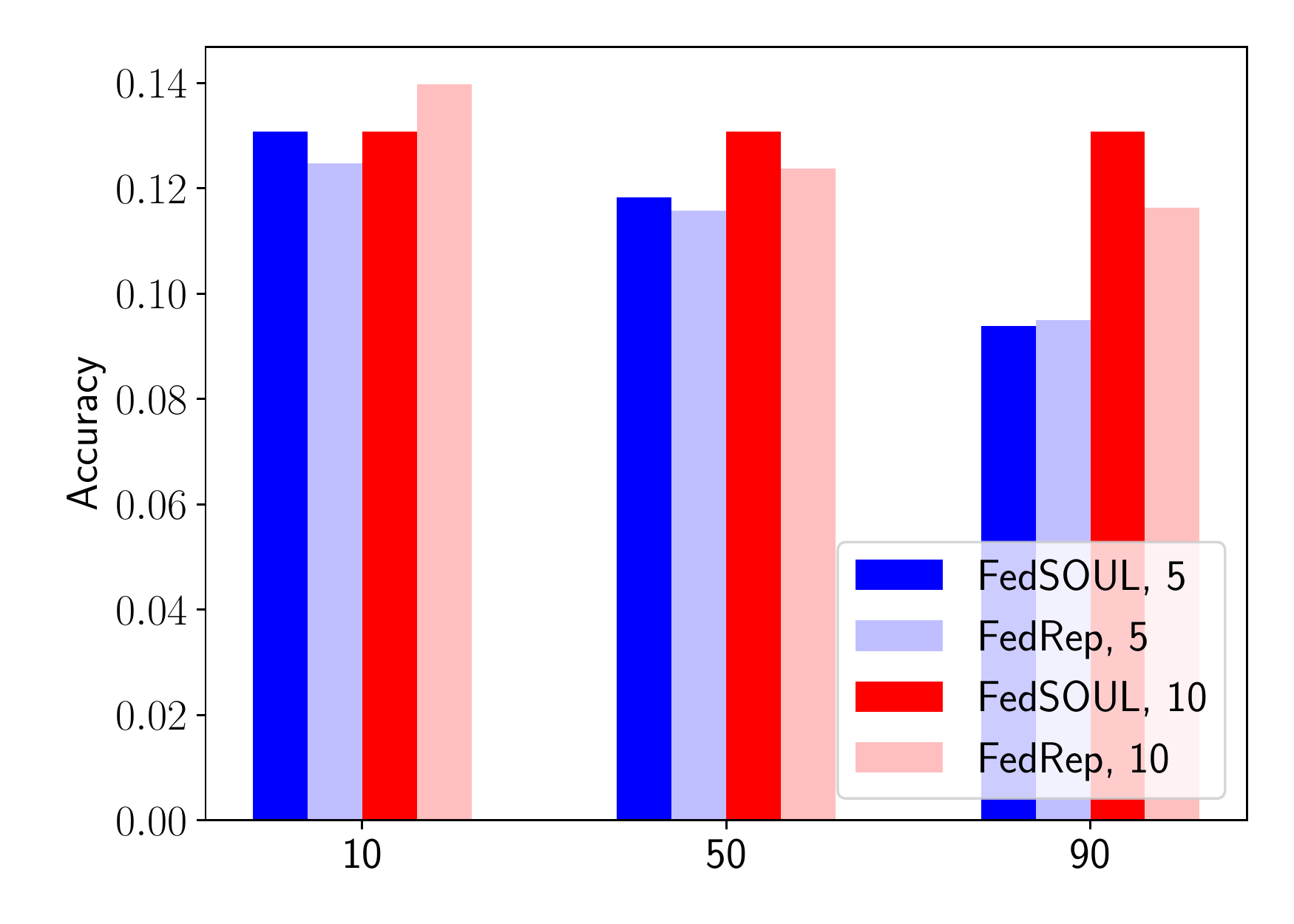}
  \end{center}
  \caption{(right) CIFAR-10 with $S=5$ and (left) CIFAR-100 with $S=20$. The $x$-axis refers to the percentage of clients having $N_i \in \{5,10\}$ images.}
  \label{fig:small_data}
  \vspace{-0.4cm}
\end{wrapfigure}
\noindent \textit{Small data sets.} Under this setting, we first consider (10\%, 50\%, 90\%) of clients having small data sets of size either $N_i = 5$ or $N_i=10$; while remaining clients have larger data sets of size $N_i=25$.
We compare our approach with \texttt{FedRep} since it stands for the state-of-the-art personalised FL approach. 
%Additional results with other competitors such as \texttt{APFL} and \texttt{DITTO} are postponed to the supplement.
The algorithms are trained fulfilling the same computational budget.
\Cref{fig:small_data} shows the average accuracy across clients for the two approaches on both CIFAR-10 and CIFAR-100.
We can see that \texttt{FedSOUL} is consistently better than \texttt{FedRep} over different configurations.

\noindent \textit{Full data sets.} In addition to show that the proposed approach achieves state-of-the-art performances on small data sets (which is common in the cross-device scenario), we now illustrate that \texttt{FedSOUL} is also competitive on larger data sets. 
To this end, we use all training images in CIFAR-10 and CIFAR-100 image data sets and consider the same data partitioning as in \citet{pmlr-v139-collins21a}.
More precisely, in this case the number of observations and the number of classes per client are uniformly shared over the clients.
\Cref{tab:real_images_benchmark} shows our results in comparison with state-of-the-art personalised FL approaches.
We can see that that our model outperforms other methods on both CIFAR-10 and CIFAR-100 by a large margin.
Additional results with other personalised FL algorithms are postponed to the supplement.

\noindent \textbf{Uncertainty Quantification on Real Data.} As highlighted in \Cref{table:overview}, one advantage of the proposed approach compared to existing personalised FL methods is the ability to perform uncertainty quantification by sampling locally from the posterior $p(z^{(i)}\mid \mathrm{D}_i, \phi_K,\beta_K)$, see \Cref{algo:FedSOUL}.
We illustrate this feature by computing on CIFAR-10 calibration curves and scores (\emph{e.g.} expected calibration error aka ECE) on a specific client; and by performing an out-of-distribution analysis on MNIST/FashionMNIST data sets.
\Cref{fig:UQ} shows that the proposed approach provides relevant uncertainty diagnosis.
Additional results on uncertainty quantification can be found in the supplement.
\begin{wrapfigure}{r}{0.57\textwidth}
  \vspace{-0.5cm}
  \begin{center}
    \includegraphics[scale=0.22]{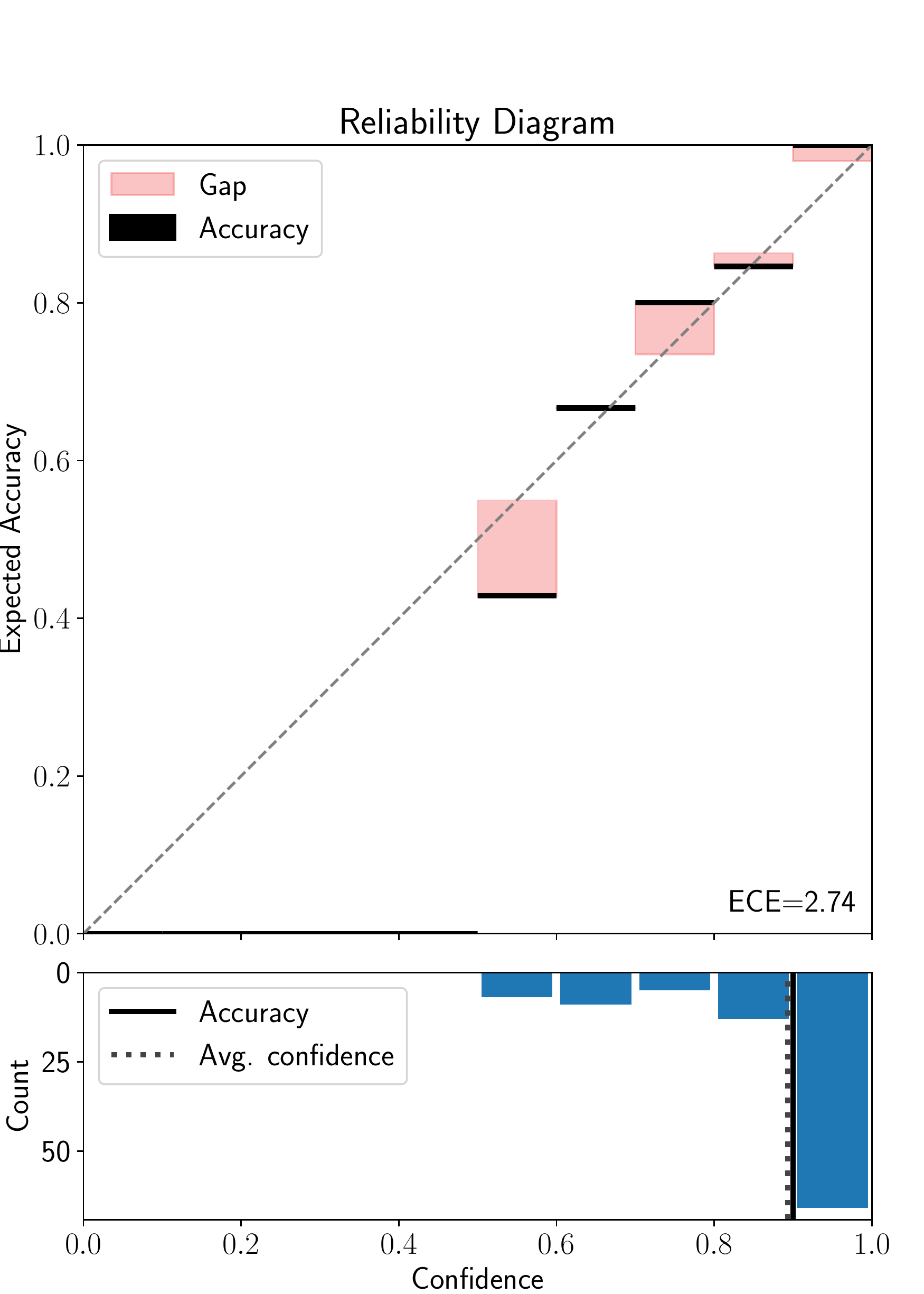}
    \includegraphics[scale=0.3]{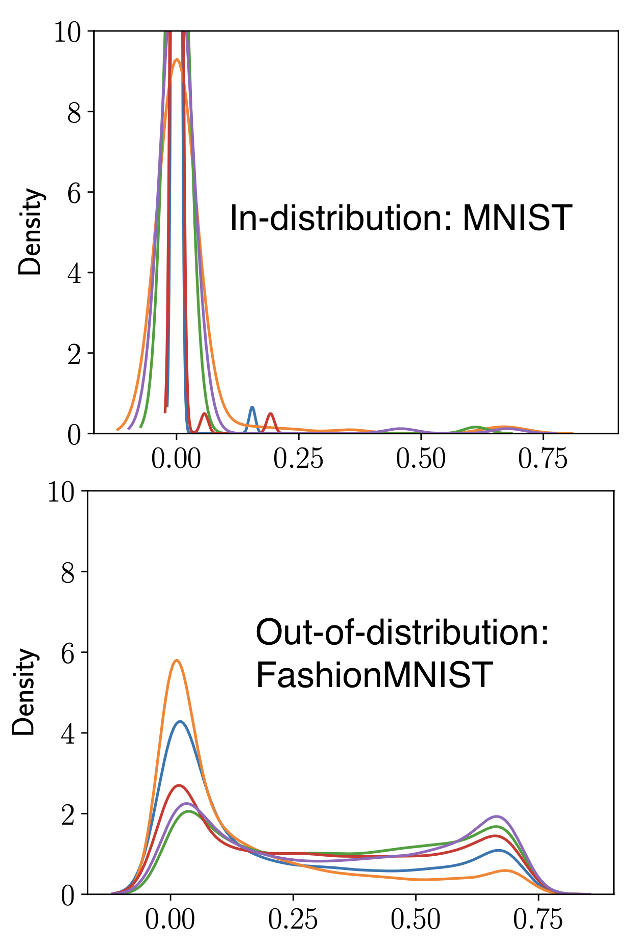}
  \end{center}
  \caption{(right) Calibration on CIFAR-10 for a specific client and (left) OOD analysis with MNIST training \& FashionMNIST inference -- one curve corresponds to one client.}
  \label{fig:UQ}
  \vspace{-0.4cm}
\end{wrapfigure}

\begin{table}[]
\caption{Real data - Full data sets. Accuracy (in \%) on test samples. \texttt{FedAvg} and \texttt{SCAFFOLD} are not personalised FL approaches but stand for well-known FL benchmarks.}
\begin{center}
\begin{tabular}{lllll}
\toprule
& \multicolumn{2}{c}{CIFAR-10} & \multicolumn{2}{c}{CIFAR-100} \\
\midrule
\multicolumn{1}{l}{(\# clients $b$, \# classes per client $S$)} & \multicolumn{1}{c}{(100, 2)} & \multicolumn{1}{c}{(100, 5)} & \multicolumn{1}{c}{(100, 5)} & \multicolumn{1}{c}{(100, 20)} \\
\midrule
\multicolumn{1}{l}{Local learning only} & \multicolumn{1}{l}{89.79}    & \multicolumn{1}{l}{70.68}    & \multicolumn{1}{l}{75.29}    & \multicolumn{1}{l}{41.29} \\
\midrule
\multicolumn{1}{l}{\texttt{FedAvg} \citep{mcmahan17}} & \multicolumn{1}{l}{42.65}    & \multicolumn{1}{l}{51.78}    & \multicolumn{1}{l}{23.94}    & \multicolumn{1}{l}{ 31.97} \\
\multicolumn{1}{l}{\texttt{SCAFFOLD} \citep{karimireddy2020scaffold}} & \multicolumn{1}{l}{37.72}    & \multicolumn{1}{l}{47.33}    & \multicolumn{1}{l}{20.32}    & \multicolumn{1}{l}{ 22.52} \\
\midrule
\multicolumn{1}{l}{\texttt{LG-FedAvg} \citep{liang2020think}}                        & \multicolumn{1}{l}{84.14}    & \multicolumn{1}{l}{63.02}    & \multicolumn{1}{l}{72.44}    & \multicolumn{1}{l}{38.76}     \\
\multicolumn{1}{l}{\texttt{Per-FedAvg} \citep{NEURIPS2020_24389bfe}}                        & \multicolumn{1}{l}{82.27}    & \multicolumn{1}{l}{67.20}    & \multicolumn{1}{l}{72.05}    & \multicolumn{1}{l}{52.49}     \\
\multicolumn{1}{l}{\texttt{L2GD} \citep{hanzely2020federated}}                        & \multicolumn{1}{l}{81.04}    & \multicolumn{1}{l}{ 59.98}    & \multicolumn{1}{l}{72.13}    & \multicolumn{1}{l}{ 42.84}     \\
\multicolumn{1}{l}{\texttt{APFL} \citep{deng2021adaptive}}                        & \multicolumn{1}{l}{83.77}    & \multicolumn{1}{l}{72.29}    & \multicolumn{1}{l}{78.20}    & \multicolumn{1}{l}{55.44}     \\
\multicolumn{1}{l}{\texttt{DITTO} \citep{Ditto}}                                 & \multicolumn{1}{l}{85.39}    & \multicolumn{1}{l}{70.34}    & \multicolumn{1}{l}{78.91}    & \multicolumn{1}{l}{56.34}     \\
\multicolumn{1}{l}{\texttt{FedRep} \citep{pmlr-v139-collins21a}}                 & \multicolumn{1}{l}{87.70}    & \multicolumn{1}{l}{75.68}    & \multicolumn{1}{l}{79.15}    & \multicolumn{1}{l}{56.10}     \\
\multicolumn{1}{l}{\texttt{FedAvg + fine-tuning (FT)}}                 & \multicolumn{1}{l}{85.63}    & \multicolumn{1}{l}{71.32}    & \multicolumn{1}{l}{79.03}    & \multicolumn{1}{l}{56.19}     \\
\midrule
\multicolumn{1}{l}{\textcolor{purple}{\texttt{FedSOUL} (this paper)}}                                 & \multicolumn{1}{l}{\textcolor{purple}{91.12}}     & \multicolumn{1}{l}{\textcolor{purple}{79.48}}     & \multicolumn{1}{l}{\textcolor{purple}{79.56}}     & \multicolumn{1}{l}{\textcolor{purple}{59.73}}     \\
\bottomrule
\end{tabular}
\label{tab:real_images_benchmark}
\end{center}
\vspace{-0.4cm}
\end{table}

\vspace{-0.3cm}
\section{Conclusion}
\vspace{-0.2cm}

In this paper, we proposed a general Bayesian methodology based on a natural mixed-effects modeling approach to model personalisation in federated learning.
Our FL method is the first that allows for both personalisation and cheap uncertainty quantification for (cross-device) federated learning.
By introducing a common prior on the local parameters, we tackle the local overfitting problem in the scenario where clients have highly heterogeneous and small data sets.
In addition, we have shown that the proposed approach has favorable convergence properties.
Some limitations of \texttt{FedPop} pave the way for more advanced personalised FL approaches.
As an example, our model does not allow for training heterogeneous architectures across clients because of the introduced common prior, and only satisfy first-order privacy guarantees.
Regarding the latter, further works include for instance deriving differentially private versions of our framework.

\begin{ack}
The authors acknowledge the Lagrange Mathematics and Computing Research Center for supporting the project. The development of the algorithm and conducting experiments (Section ~\ref{sec:experiments}) was supported by Russian Science Foundation grant 20-71-10135.
\end{ack}

\newpage

\bibliography{bibliography/biblio}
\bibliographystyle{plainnat}

%%%%%%%%%%%%%%%%%%%%%%%%%%%%%%%%%%%%%%%%%%%%%%%%%%%%%%%%%%%%
\section*{Checklist}

%%% BEGIN INSTRUCTIONS %%%
The checklist follows the references.  Please
read the checklist guidelines carefully for information on how to answer these
questions.  For each question, change the default \answerTODO{} to \answerYes{},
\answerNo{}, or \answerNA{}.  You are strongly encouraged to include a {\bf
justification to your answer}, either by referencing the appropriate section of
your paper or providing a brief inline description.  For example:
\begin{itemize}
  \item Did you include the license to the code and datasets? \answerYes{See Section~\ref{gen_inst}.}
  \item Did you include the license to the code and datasets? \answerNo{The code and the data are proprietary.}
  \item Did you include the license to the code and datasets? \answerNA{}
\end{itemize}
Please do not modify the questions and only use the provided macros for your
answers.  Note that the Checklist section does not count towards the page
limit.  In your paper, please delete this instructions block and only keep the
Checklist section heading above along with the questions/answers below.
%%% END INSTRUCTIONS %%%

\begin{enumerate}

\item For all authors...
\begin{enumerate}
  \item Do the main claims made in the abstract and introduction accurately reflect the paper's contributions and scope?
    \answerYes{}
  \item Did you describe the limitations of your work?
    \answerYes{} We describe these limitations in the conclusion.
  \item Did you discuss any potential negative societal impacts of your work?
    \answerNA{}
  \item Have you read the ethics review guidelines and ensured that your paper conforms to them?
    \answerYes{}
\end{enumerate}

\item If you are including theoretical results...
\begin{enumerate}
  \item Did you state the full set of assumptions of all theoretical results?
    \answerYes{} Some assumptions are explicitly stated in \Cref{sec:theory}. For the sake of readability and due to the space limit, we postponed other technical assumptions in the supplement.
        \item Did you include complete proofs of all theoretical results?
    \answerYes{} All our proofs are postponed to the supplement.
\end{enumerate}

\item If you ran experiments...
\begin{enumerate}
  \item Did you include the code, data, and instructions needed to reproduce the main experimental results (either in the supplemental material or as a URL)?
    \answerYes{} See the supplement.
  \item Did you specify all the training details (e.g., data splits, hyperparameters, how they were chosen)?
    \answerYes{} Some details have been stated in \Cref{sec:experiments} in the main paper. Because of the limited number of pages, others have been postponed to the supplement.
        \item Did you report error bars (e.g., with respect to the random seed after running experiments multiple times)?
    \answerYes{} We did it for some experiments we conducted, see \emph{e.g.} Figure 2.
        \item Did you include the total amount of compute and the type of resources used (e.g., type of GPUs, internal cluster, or cloud provider)?
    \answerYes{} This is postponed to the supplement.
\end{enumerate}

\item If you are using existing assets (e.g., code, data, models) or curating/releasing new assets...
\begin{enumerate}
  \item If your work uses existing assets, did you cite the creators?
    \answerYes{} See for example the citation for CIFAR image data sets.
  \item Did you mention the license of the assets?
    \answerNA{}
  \item Did you include any new assets either in the supplemental material or as a URL?
    \answerNA{}
  \item Did you discuss whether and how consent was obtained from people whose data you're using/curating?
    \answerNA{}
  \item Did you discuss whether the data you are using/curating contains personally identifiable information or offensive content?
    \answerNA{}
\end{enumerate}

\item If you used crowdsourcing or conducted research with human subjects...
\begin{enumerate}
  \item Did you include the full text of instructions given to participants and screenshots, if applicable?
    \answerNA{}
  \item Did you describe any potential participant risks, with links to Institutional Review Board (IRB) approvals, if applicable?
    \answerNA{}
  \item Did you include the estimated hourly wage paid to participants and the total amount spent on participant compensation?
    \answerNA{}
\end{enumerate}

\end{enumerate}

\clearpage
\newpage

\appendix

\newtheorem{unlemma}{Lemma S}
\newtheorem{unproposition}{Proposition S}
\newtheorem{uncorollary}{Corollary S}
\newtheorem{untheorem}{Theorem S}

\setcounter{equation}{0}
\setcounter{figure}{0}
\setcounter{table}{0}
\setcounter{assumption}{0}
\makeatletter
\renewcommand{\theequation}{S\arabic{equation}}
\renewcommand{\thefigure}{S\arabic{figure}}
\renewcommand{\thetheorem}{S\arabic{theorem}}
\renewcommand{\thelemma}{S\arabic{lemma}}
\renewcommand{\thetable}{S\arabic{table}}
\renewcommand{\thesection}{S\arabic{section}}
\renewcommand{\theremark}{S\arabic{remark}}
\renewcommand{\theproposition}{S\arabic{proposition}}
\renewcommand{\thecorollary}{S\arabic{corollary}}

{\begin{center}\Large\textbf{SUPPLEMENTARY MATERIAL}\end{center}}\vspace{1cm}

\paragraph{Notations and conventions.}

For the sake of simplicity, with little abuse, we shall use the same notations for
a probability distribution and its associated probability density function.
%For a Markov chain with transition kernel $P$ on $\mathbb{R}^d$ and invariant distribution $\pi$, we define the $\varepsilon$-mixing time associated to a statistical distance $D$, precision $\varepsilon > 0$ and initial distribution $\nu$, by
%\begin{equation}
%t_{\mathrm{mix}}\parentheseLigne{\varepsilon;\nu} = \min \ac{t \ge 0 \ \big\vert D\parentheseLigne{\nu P^t,\pi} \le \varepsilon}\eqsp,
%\end{equation}
%which stands for the minimum number of steps of the Markov chain such that its distribution is at most at an $\varepsilon$ $D$-distance from the invariant distribution $\pi$. 
For $n \ge 1$, we refer to the set of integers between $1$ and $n$ with the notation $[n]$.
The $d$-multidimensional Gaussian probability distribution with mean $\mu \in \Rd$ and covariance matrix $\Sigma \in \mathbb{R}^{d \times d}$ is denoted by $\gauss\parentheseLigne{\mu,\Sigma}$.
Equations of the form (1) (resp. (S1)) refer to equations in the main paper (resp. in the supplement).
%When $\mu = \B{0}_d$ and $\Sigma = I_d$, the associated probability density function is denoted by by $\upphi_d$.

Denote by $\mathcal{B}(\rset^d)$ the Borel $\sigma$-field of
$\rset^d$, and for $f : \rset^d \to \rset$ measurable,
$\norm{f}_{\infty}= \sup_{x \in \rset^d} \abs{f(x)}$.  For $\mu$ a
probability measure on $(\rset^d, \mathcal{B}(\rset^d))$ and $f$ a
$\mu$-integrable function, denote by $\mu(f)$ the integral of $f$
\wrt~$\mu$. For $f: \ \rset^d \to \rset$ measurable, the $V$-norm of
$f$ is given by $\Vnorm[V]{f}= \sup_{x \in \rset^d} |f(x)|/V(x)$. Let
$\xi$ be a finite signed measure on $(\rset^d,\mcb(\rset^d))$. The
$V$-total variation distance of $\xi$ is defined as
\begin{equation}
\textstyle{\Vnorm[V]{\xi} = \sup_{\Vnorm[V]{f} \leq 1}  \abs{\int_{\rset^d } f(x) \rmd \xi (x)} \eqsp.}
\end{equation}
If $V = 1$, then $\Vnorm[V]{\cdot}$ is the total variation denoted by
$\tvnorm{\cdot}$.  Let $\msu$ be an open set of $\rset^d$. We denote
by $\rmc^{k}(\msu, \rset^p)$ the set of $\rset^p$-valued
$k$-differentiable functions, respectively the set of compactly
supported $\rset^p$-valued and $k$-differentiable functions.  Let
$f : \msu \to \rset$, we denote by $\nabla f$, the gradient of $f$ if
it exists. $f$ is said to me $\mtt$-convex with $m\geq 0$ if for all
$x,y \in \rset^d$ and $t \in \ccint{0,1}$,
\begin{equation}
  f(t x + (1-t) y) \leq t f(x)  + (1-t) f(y) -\mtt t(1-t)  \norm{x-y}^2/2  \eqsp.
\end{equation}
% We recall that if $f : \msu \to \rset$ is twice differentiable at point $a \in \rset^d$, its Laplacian is given by $\Delta f(a) = \sum_{i=1}^d \fraca{\partial^2 f}{\partial x_i^2}(a)$.
% For any $\msa \subset \rset^d$, we denote by $\partial \msa$ the boundary of $\msa$.
For any $a \in \rset^d$ and $R > 0$, denote $\ball{a}{R}$ the open 
ball centered at $a$ with radius $R$.  % Let $(\Omega,\mcf,\PP)$ be a
% probability space.
Let $(\msx, \mcx)$ and $(\msy, \mcy)$ be two measurable
  spaces. A Markov kernel $\Pker$ is a mapping
  $\Kker: \ \msx \times \mcy \to \ccint{0, 1}$ such that for any
  $x \in \msx$, $\Pker(x, \cdot)$ is a probability measure and for any
  $\msa \in \mcy$, $\Pker(\cdot, \msa)$ is measurable. For any
  probability measure $\mu$ on $(\msx, \mcx)$ and measurable function
  $f : \msy \to \rset_+$ we denote
  $\mu \Pker = \int_{\msx} \Pker(x, \cdot) \rmd \mu(x)$ and
  $\Pker f = \int_{\msy} f(y) \Pker(\cdot , \rmd y)$.  In what follows
  the Dirac mass at $x \in \rset^{d}$ by $\updelta_x$.
% Denote by $\mu \ll \nu$ if $\mu$ is absolutely continuous \wrt~$\nu$
% and $\rmd \mu / \rmd \nu$ an associated density. Let $\mu,\nu$ be
% two probability measures on $(\rset^d, \mcbb(\rset^d))$. Define the
% Kullback-Leibler divergence of $\mu$ from $\nu$ by
% \begin{equation}
%   \KL{\mu}{\nu} =
%   \begin{cases}
%     \int_{\rset^d} \frac{\rmd \mu}{\rmd \nu}(x) \log \parenthese{\frac{\rmd \mu}{\rmd \nu} (x)} \rmd \nu (x) \eqsp, & \text{if } \mu \ll \nu \\
% \plusinfty & \text{ otherwise} \eqsp.
%   \end{cases}
% \end{equation}

%%% Local Variables:
%%% mode: latex
%%% TeX-master: "main_supplementary"
%%% End:

\tableofcontents

\section{Theoretical analysis of \texttt{FedSOUK}}

This section aims at recasting the proposed methodology into a stochastic approximation framework and at stating the main assumptions required to show our theoretical results regarding \texttt{FedSOUK}, which uses a general unadjusted Markov kernel. 
Then, we will use these general results to show non-asymptotic convergence guarantees for \texttt{FedSOUL}, which considers an unadjusted Markov kernel associated to overdamped Langevin dynamics.

\subsection{Preliminaries}

We first show that \texttt{FedSOUK} (see \Cref{algo:FedSOUL} in the main paper) can be cast into a general \emph{stochastic approximation} (SA) framework which corresponds to a federated variant of the \emph{stochastic optimization via unadjusted kernel} (\texttt{SOUK}) approach proposed in \citet{SOUL}.
Then, the convergence guarantees for \texttt{FedSOUK} will follow by generalizing the proof techniques used to analyze \texttt{SOUK}.

Recall that $\theta = (\phi,\beta) \in \Theta$ corresponds to the parameter we are seeking to optimize where $\Theta = \Phi \times \mathsf{B} \subset \R^{d_{\Theta}}$.
Define $f : \Theta \to \rset$ of the form 
\begin{equation}
  \label{eq:def_function_f_fedsouk}
  f(\theta) = b^{-1} \sum_{i=1}^b f_i(\theta) \eqsp,
\end{equation}
where for any $i \in [b]$ and $\theta \in \Theta$, 
\begin{equation}
  \label{eq:def_function_f}
  f_i\pr{\theta} = -\log p(\theta) - b\log p\pr{\mathrm{D}_i\mid\phi,\beta}\eqsp,
\end{equation}
where $p(\theta) = p(\phi,\beta) = p(\phi)p(\beta)$ and for any $i\in[b]$, $p(\mathrm{D}_i\mid\phi,\beta)$ is defined in \eqref{eq:marginal_likelihood}.
Then, under these notations, \eqref{eq:minimization_pb} can be written as
\begin{equation}
  \label{eq:FL_problem}
  \thetas = \argmin_{\theta \in \Theta} f(\theta)\eqsp.
\end{equation}
In addition, based on \eqref{eq:estimator_beta} and \eqref{eq:estimator_phi}, the gradient of $f_i$ defined in \eqref{eq:def_function_f} admits the form for $i \in [b]$,
\begin{equation}
  \nabla f_i:
    \begin{cases}
      \R^{d_\Phi+d_{\mathsf{B}}} \to \R^{d_{\Theta}} \\
      \theta \mapsto \int_{\Rd} H^{(i)}_\theta\pr{z^{(i)}} \pi_{\theta}^{(i)}\pr{\dd z^{(i)}}\eqsp, \label{eq:grad_f}
    \end{cases}
\end{equation}
where, for any $i \in [b]$ and $\theta \in \Theta$, $\pi_\theta^{(i)}: z^{(i)} \mapsto p(z^{(i)}\mid \mathrm{D}_i, \theta)$ and for any $\theta \in \Theta$, $H_\theta^{(i)}: z^{(i)} \mapsto -\nabla_{\theta} \log p(\theta)  - b\nabla_{\theta} \log p(\mathrm{D}_i,z^{(i)} \mid \theta)$.

\subsection{Main Assumptions}

We make the following assumption on $\Theta$ and the family of functions $\{f_i : i \in [b]\}$.
\begin{assumptionsup}
    \label{ass:convex_set}
  $\Theta$ is a convex, closed subset of $\rset^{\dtheta}$ and $\Theta \subset \mathrm{B}(0,R_\Theta)$ for $R_\Theta>0$.
\end{assumptionsup}

\begin{assumptionsup}\label{ass:function_f}
  For any $i\in[b]$, the following conditions hold.

  \begin{enumerate}[wide, labelwidth=!, labelindent=0pt,label=(\roman*),noitemsep,nolistsep]

    \item \label{ass:function_f_1} The function $f_i$ defined in \eqref{eq:def_function_f_fedsouk} is convex.

    \item \label{ass:function_f_2} There exist an open set $\mathsf{U} \in \R^{d_{\Theta}}$ and $L_f > 0$ such that $\Theta \subset \mathsf{U}$, $f_i \in \mathrm{C}^1(\mathsf{U},\R)$ and for any $\theta_1,\theta_2 \in \Theta$,
      $$
      \norm{\nabla f_i(\theta_2) - \nabla f_i(\theta_1)} \le L_f \norm{\theta_2 - \theta_1}\eqsp.
      $$
    \end{enumerate}
\end{assumptionsup}

Note that \Cref{ass:function_f}-\ref{ass:function_f_2} implies that the objective function $f$ defined in \eqref{eq:def_function_f_fedsouk} is gradient-Lipschitz with Lipschitz constant $L_f$.

We now consider assumptions on the family of \emph{compression} and \emph{partial participation} operators $\{\mathscr{C}_i,\mathscr{S}_i\}_{i \in [b]}$.

\begin{assumptionsup}\label{ass:compression}
  There exists a probability measure $\nu_{1}$ on a measurable space $(\msx_{1},\mathcal{X}_{1})$ and a family of measurable functions $\{\up_i:\R^{d_{\Phi}}\times\msx_1\to \R^{d_{\Phi}}\}_{i\in [b]}$ such that the following conditions hold.   
  \begin{enumerate}[wide, labelwidth=!, labelindent=0pt,label=(\roman*),noitemsep,nolistsep]
    \item \label{ass:compression:unbiased} For any $v\in\R^{d_\Phi}$ and any $i \in [b]$, $\int_{\msx_{1}}\up_i\parentheseLigne{v,x^{(1)}}\,\nu_{1}\parentheseLigne{\rmd x^{(1)}} = v$.
    \item \label{ass:compression:variance} There exist $\{\omega_i\in\R_+\}_{i\in[b]}$, such that for any $v\in\R^{d_\Phi}$ and any $i \in [b]$,    
    \begin{equation}\int_{\msx_{1}}\norm{\up_i\parentheseLigne{v,x^{\parentheseLigne{1}}}-v}^{2}\,\nu_{1}\parentheseLigne{\rmd x^{\parentheseLigne{1}}} \le \omega_i\norm{v}^{2}\eqsp.\end{equation}
    \end{enumerate}
  \end{assumptionsup}

  In addition, recall that we consider the partial device participation context where at each communication round $k\ge1$, each client has a probability $p_i \in (0,1]$ of participating, independently from other clients.

  \begin{assumptionsup}
    \label{ass:A_k_supp2}
    For any $i \in [b]$, the unbiased partial participation operator $\mathscr{S}_i: \Rdtheta \times \msx_2 \to \Rdtheta$ is defined, for any $\theta \in \Rdtheta$ and $x^{(2)}=\{x^{(2)}_i\}_{i\in[b]} \in \msx_2$ with $\msx_2 = [0,1]^b$ by 
  \begin{equation}
    \mathscr{S}_i(\theta,x^{(2)}) = \mathbf{1}\{x^{(2)}_i \le p_i\} \theta / p_i\eqsp, \label{def:PP}
  \end{equation}
  where $p_i \in (0,1]$.
  \end{assumptionsup}

Note that the assumption \Cref{ass:A_k_supp2} is equivalent to \Cref{ass:A_k_supp} in the main paper.

Let $V:\R^d \rightarrow [1,\infty)$ a measurable function. We consider the following assumption on the family $\{(H_\theta^{(i)},\pi_{\theta}^{(i)}) \, : \,  \theta \in \Theta, i \in [b]\}$.

\begin{assumptionsup}\label{ass:gradient_H}
  For any $i\in[b]$, the following conditions hold.
  \begin{enumerate}[wide, labelwidth=!, labelindent=0pt,label=(\roman*),noitemsep,nolistsep]

    \item \label{ass:gradient_H_1} For any $\theta \in \Theta$, $\pi_\theta^{(i)}(\|H_\theta^{(i)}\|) < \infty$ and $(\theta,z^{(i)}) \mapsto H_\theta^{(i)}(z^{(i)})$ is measurable.

    \item \label{ass:gradient_H_2} There exists $L_H \ge 0$ such that for any $z \in \Rd$ and $\theta_1,\theta_2 \in \Theta$,
    $$
    \norm{H_{\theta_2}^{(i)}(z) - H_{\theta_1}^{(i)}(z)} \le L_H \norm{\theta_2 - \theta_1}V^{1/2}(z)\eqsp.
    $$
    \end{enumerate}
\end{assumptionsup}

\subsection{Stochastic Approximation Framework}

Let $(X_k^{(i,1)})_{k \in \N, i \in [b]}$ a sequence of independent an identically distributed (i.i.d.) random variables with distribution $\nu_1$ independent of the sequence $(X_k^{(i,2)})_{k \in \N,i \in [b]}$ which is i.i.d. and with uniform distribution on $[0,1]$.
We consider a family of unadjusted Markov kernels $\{Q^{(i)}_{\gamma,\theta} : \gamma \in (0,\bar{\gamma}], \theta \in \Theta, i \in [b]\}$.
Let $(\gamma_k)_{k \in \N^*} \in (\R^*_+)^{\N^*}$ a sequence of step-sizes which will be used to obtain approximate samples from $\pi_\theta^{(i)}$ using $Q^{(i)}_{\gamma,\theta}$.

We now recast the proposed approach detailed in \Cref{algo:FedSOUL} into a stochastic approximation framework.

Starting from some initialization $(Z_0^{(1,0)},\ldots,Z_0^{(b,0)},\theta_0) \in \R^{bd} \times \Theta$, we define on a probability space $(\Omega,\mathcal{F},\mathbb{P})$, the sequence $((Z_k^{(1,m)},\ldots,Z_k^{(b,m)})_{m \in[M]}, \theta_k)_{k \in \N}$ via the recursion for $k \in \N$,

\begin{align}
  & \text{ for any $i\in[b]$, given $\mathcal{F}_{k-1}$}, (Z_k^{(i,m)})_{m \in \{0,\ldots,M\}} \text{ is a Markov chain with Markov kernel $Q^{(i)}_{\gamma_k,\theta_k}$} \\
&  \text{with $Z_k^{(i,0)} = Z_{k-1}^{(i,M)}$ \eqsp,} \label{eq:MC}\\
  &\theta_{k+1} = \Pi_{\Theta}\br{\theta_k - \boldsymbol{\eta}_{k+1} \odot \boldsymbol{\Delta}_{\theta_k}\pr{Z_{k+1}^{(1:M)},X_{k+1}^{(1)},X_{k+1}^{(2)}}}\eqsp,\label{eq:global_SA_scheme}
\end{align}

where $\odot$ denotes the Hadamard product and for any $k \in \N$,  $\mathcal{F}_k= \sigma(\theta_0,\{\{Z^{(i,m)}_l\}_{m \in [M]}\, : \,  l \in \{0,\ldots,k\} \, , \, i\in[b]\})$ and $\mathcal{F}_{-1}= \sigma(\theta_0, \{Z_0^{(i,0)} \, : \, i \in[b]\})$.
In addition, for any $k \in \N$, $\boldsymbol{\eta}_{k+1} = (\eta_{k+1}^{(1)}, \eta_{k+1}^{(2)})^\top$, $Z_{k+1}^{(1:M)} = ([Z_{k+1}^{(1,1:M)}]^\top,\ldots,[Z_k^{(b,1:M)}]^\top)^\top$ and for any $\theta \in \Theta$, $z^{(1:M)} \in \R^{Md}$, $x^{(1)} \in \mathsf{X}_1$, $x^{(2)} \in \mathsf{X}_2$ ,
\begin{align}
  \boldsymbol{\Delta}_{\theta}\pr{z^{(1:M)},x^{(1)},x^{(2)}} &= 
  \begin{pmatrix} 
    \Delta_{\phi}\pr{z^{(1:M)},x^{(1)},x^{(2)}} \\
    \Delta_{\beta}\pr{z^{(1:M)},x^{(2)}} 
  \end{pmatrix} \eqsp, \\
  &=
  \begin{pmatrix} 
    \sum_{i=1}^b \mathscr{S}_i\br{\mathscr{C}_i\pr{\Delta_{\phi}^{(i)}(z^{(i,1:M)}),x^{(i,1)}},x^{(i,2)}} \\
    \sum_{i=1}^b \mathscr{S}_i\br{\Delta_{\beta}^{(i)}(z^{(i,1:M)}),x^{(i,2)}} 
  \end{pmatrix} \eqsp,\label{eq:def_global_gradient}
\end{align}
where $\{\Delta_{\beta}^{(i)},\Delta_{\phi}^{(i)}\}_{i \in [b]}$ defined by
\begin{align}
  \Delta_{\beta}^{(i)}(z^{(i,1:M)}) &=  -\frac{1}{M} \sum_{m=1}^M \bbr{(1/b)\nabla_{\beta}p(\beta) + \nabla_{\beta}\log \brnn{p\prn{ z^{(i,m)}\mid \beta}}} \\
   \Delta_{\phi}^{(i)}(z^{(i,1:M)}) &= -\frac{1}{M} \sum_{m=1}^M \bbr{(1/b)\nabla_{\phi}p(\phi) + \nabla_\phi\log \brnn{p\prn{\mathrm{D}_i \mid z^{(i,m)}, \phi}}}\eqsp.
\end{align}

\subsection{Main Result}

In order to show non-asymptotic convergence guarantees for \texttt{FedSOUK} detailed in \Cref{algo:FedSOUL}, we need additional assumptions ensuring some stability of the sequence $(Z_k^{(i,m)}\, :\, m \in \{0,\ldots,M\}, i \in [b])_{k \in \N}$.
These conditions are stated hereafter.

\begin{assumptionsup}\label{ass:markov_kernel}
  For any $i\in[b]$, the following conditions hold.
  \begin{enumerate}[wide, labelwidth=!, labelindent=0pt,label=(\roman*),noitemsep,nolistsep]

    \item \label{ass:markov_kernel1} There exists $A_1 \ge 1$ such that for any $p,k \in \N$ and $m \in \{0,\ldots,M\}$,
    $$
    \mathbb{E}\br{[Q^{(i)}_{\gamma_k,\theta_k}]^p V(Z^{(i,m)}_k) \mid Z^{(i,0)}_0} \le A_1 V(Z^{(i,0)}_0)\eqsp, \ \mathbb{E}\br{V(Z^{(i,0)}_0)} < \infty\eqsp,
    $$
    where $(Z_k^{(i,m)}\, :\, m \in \{0,\ldots,M\}, i \in [b])_{k \in \N}$ is defined in \eqref{eq:MC}.

    \item \label{ass:markov_kernel2} There exists $A_2,A_3 \ge 1$, $\rho \in [0,1)$ such that for any $\gamma \in (0,\bar{\gamma}]$, $\theta \in \Theta$, $z \in \R^d$ and $k \in \N$, $Q^{(i)}_{\gamma,\theta}$ admits $\pi_{\gamma,\theta}^{(i)}$ as stationary distribution and 
    \begin{align}
      \norm{\updelta_z [Q^{(i)}_{\gamma,\theta}]^k - \pi_{\gamma,\theta}^{(i)}}_V &\le A_2 \rho^{k\gamma} V(z) \\
       \pi_{\gamma,\theta}^{(i)}(V) &\le A_3 \eqsp.
    \end{align}

    \item \label{ass:markov_kernel3} There exists $\boldsymbol{\Psi}: \R_+^* \rightarrow \R_+$ such that for any $\gamma \in (0,\bar{\gamma}]$ and $\theta \in \Theta$,
    \begin{align}
      \norm{\pi_{\gamma,\theta}^{(i)} - \pi_\theta^{(i)}}_{V^{1/2}} \le \boldsymbol{\Psi}(\gamma)\eqsp.
    \end{align}
    \end{enumerate}
\end{assumptionsup}

\begin{assumptionsup}\label{ass:markov_kernel_bis}
  There exists a measurable function $V: \R^d \rightarrow [1,\infty)$, $\boldsymbol{\Gamma}_1: (\R^*_+)^2 \rightarrow \R_+$ and $\boldsymbol{\Gamma}_2: (\R^*_+)^2 \rightarrow \R_+$ such that for any $\gamma_1,\gamma_2 \in (0,\bar{\gamma}]$ with $\gamma_2 < \gamma_1$, $\theta_1,\theta_2 \in \Theta$, $z \in \R^d$, $a \in [1/4,1/2]$, we have for any $i \in [b]$,
  $$
  \norm{\updelta_z Q_{\gamma_2,\theta_2}^{(i)} - \updelta_z Q_{\gamma_1,\theta_1}^{(i)}}_{V^a} \le [\boldsymbol{\Gamma}_1(\gamma_1,\gamma_2) + \boldsymbol{\Gamma}_2(\gamma_1,\gamma_2)\norm{\theta_2-\theta_1}]V^{2a}(z)\eqsp.
  $$
\end{assumptionsup}

We are now ready to show our main result.
To ease the presentation, assume for any $k \in \N$ that $\eta^{(1)}_{k+1} = \eta^{(2)}_{k+1} = \eta_{k+1}$ and, for any $i \in [b], \gamma_{k+1}^{(i)} = \gamma_{k+1}$.

\begin{theorem}
  \label{thm:general}
  Assume \Cref{ass:convex_set}, \Cref{ass:function_f}, \Cref{ass:compression}, \Cref{ass:A_k_supp2}, \Cref{ass:gradient_H}, \Cref{ass:markov_kernel} and \Cref{ass:markov_kernel_bis} and let for any $k \in [K]$, $\eta_k \in (0,1/L_f]$.
  In addition, for any $\theta \in \Theta$, $z \in \R^d$ and $i\in[b]$, assume that $\|H^{(i)}_\theta(z)\| \le V^{1/4}(z)$.
  Then, for any $K \in \N^*$, we have
    \begin{align}
        \mathbb{E}\br{\frac{\sum_{k=1}^K \eta_{k}\{f(\theta_{k}) - f(\thetas)\}}{\sum_{k=1}^K \eta_{k}}}
        &\le \frac{E_K}{\sum_{k=1}^K \eta_{k}} \eqsp,
    \end{align}
    where, for any $K \in \N^*$,
    \begin{align}
      E_K
      &= 2R_\Theta^2 + 2 A_1 \sup_{i \in [b], m \in [M]} \bbr{\mathbb{E}\br{V^{1/2}(Z_0^{(i,m)})}} \sum_{k=1}^K \eta_k^2 \pr{8 b L_f^2R_\Theta^2 + \sum_{i=1}^{b}\frac{(\omega_i + 1 + p_i)}{p_i}} \\
      &+ b \sup_{i \in [b], m \in [M]} \bbr{C_3^{(i,m)}} \br{\sum_{k=1}^{K} |\eta_{k} - \eta_{k-1}|\gamma_{k-1}^{-1} + \sum_{k=1}^{K} \eta_{k}^2 \gamma_{k-1}^{-1} + \eta_K/\gamma_K - \eta_1/\gamma_1} \\
      &+ b . A_1 C_{c,2} \sup_{i \in [b], m \in [M]} \bbr{\mathbb{E}\br{V(Z_0^{(i,m)})}}\sum_{k=1}^K \eta_k \gamma_{k}^{-1}\br{\gamma_k^{-1}\bbr{\boldsymbol{\Lambda}_1(\gamma_{k-1},\gamma_k) + \boldsymbol{\Lambda}_2(\gamma_{k-1},\gamma_k)\eta_k} + \eta_k} \\
      &+ b \sum_{k=1}^K \eta_k \boldsymbol{\Psi}(\gamma_{k-1})\eqsp,
    \end{align}
    with $\{C_3^{(i,m)}\}_{i \in [b],m \in [M]}$ defined in \Cref{lemma:epsilon_b} and $C_{c,2}$ defined in \Cref{lemma:epsilon_c}.
\end{theorem}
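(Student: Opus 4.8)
The plan is to adapt the classical analysis of projected stochastic approximation for convex objectives, the additional work being to control the three error sources specific to \texttt{FedSOUK}: the variance injected by compression and partial participation, and the bias arising because each $Z_k^{(i,1:M)}$ is produced by a warm-started, non-stationary Markov chain rather than drawn exactly from $\pi_{\theta_k}^{(i)}$.

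First I would use the non-expansiveness of $\Pi_\Theta$ (legitimate since $\Theta$ is closed and convex by \Cref{ass:convex_set} and $\thetas \in \Theta$) to obtain, under the simplification $\eta^{(1)}_{k+1}=\eta^{(2)}_{k+1}=\eta_{k+1}$, the one-step inequality
\[
\norm{\theta_{k+1}-\thetas}^2 \le \norm{\theta_k-\thetas}^2 - 2\eta_{k+1}\langle \boldsymbol{\Delta}_{\theta_k},\theta_k-\thetas\rangle + \eta_{k+1}^2\norm{\boldsymbol{\Delta}_{\theta_k}}^2\eqsp.
\]
Taking the expectation conditional on $\mathcal{F}_{k-1}$ and invoking convexity of $f$ (\Cref{ass:function_f}), the inner product lower-bounds $f(\theta_k)-f(\thetas)$ up to the gap between $\mathbb{E}[\boldsymbol{\Delta}_{\theta_k}\mid\mathcal{F}_{k-1}]$ and $\nabla f(\theta_k)$. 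Because the compression and partial-participation operators are unbiased (\Cref{ass:compression}, \Cref{ass:A_k_supp2}), the injected-noise part of this gap vanishes in conditional expectation, so the residual reduces to a pure Markov-chain bias.

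Next I would bound the second moment $\mathbb{E}[\norm{\boldsymbol{\Delta}_{\theta_k}}^2]$. Combining the variance controls of \Cref{ass:compression} and \Cref{ass:A_k_supp2} with the postulated growth $\norm{H^{(i)}_\theta(z)}\le V^{1/4}(z)$ and the uniform $V$-moment bound of \Cref{ass:markov_kernel}, this yields the contribution $2A_1\sup_{i,m}\mathbb{E}[V^{1/2}(Z_0^{(i,m)})]\sum_k\eta_k^2(8bL_f^2R_\Theta^2+\sum_i(\omega_i+1+p_i)/p_i)$ to $E_K$, in which the compression parameters $\omega_i$ and participation probabilities $p_i$ enter linearly through the per-client variance factors.

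The hard part will be controlling the Markov-chain bias, i.e. $\pi_{\theta_k}^{(i)}(H^{(i)}_{\theta_k})$ minus the conditional mean of the empirical averages, since the chain is warm-started from the previous round and its driving parameter $\theta_k$ itself evolves, so a naive martingale argument fails. I would split it into (a) the discrepancy between $\pi_{\gamma_k,\theta_k}^{(i)}$ and the true target $\pi_{\theta_k}^{(i)}$, controlled by $\boldsymbol{\Psi}(\gamma_{k-1})$ through \Cref{ass:markov_kernel} and producing the $b\sum_k\eta_k\boldsymbol{\Psi}(\gamma_{k-1})$ term; and (b) the deviation of the empirical average from $\pi_{\gamma_k,\theta_k}^{(i)}(H^{(i)})$. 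For (b) I would introduce the solution $\hat{H}^{(i)}_{\gamma,\theta}$ of the Poisson equation $\hat{H}^{(i)}_{\gamma,\theta}-Q^{(i)}_{\gamma,\theta}\hat{H}^{(i)}_{\gamma,\theta}=H^{(i)}_\theta-\pi^{(i)}_{\gamma,\theta}(H^{(i)}_\theta)$, whose existence and $V$-regularity follow from the geometric ergodicity in \Cref{ass:markov_kernel}. Telescoping the resulting expression along the trajectory and using the kernel-continuity estimate $\boldsymbol{\Gamma}_1,\boldsymbol{\Gamma}_2$ of \Cref{ass:markov_kernel_bis} turns this bias into terms measuring how much $Q^{(i)}_{\gamma_k,\theta_k}$ drifts between consecutive iterations; this is precisely where the factors $\gamma_{k-1}^{-1}$, the step-size increments $|\eta_k-\eta_{k-1}|$, and the moduli $\boldsymbol{\Lambda}_1,\boldsymbol{\Lambda}_2$ arise, and these manipulations are exactly what the auxiliary lemmas defining $C_3^{(i,m)}$ and $C_{c,2}$ encapsulate. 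Finally I would sum the one-step inequality over $k=1,\dots,K$, telescope the $\norm{\theta_k-\thetas}^2$ differences (whose leading residual $\tfrac12\norm{\theta_1-\thetas}^2$ is bounded by $2R_\Theta^2$ via the diameter bound of \Cref{ass:convex_set}), and divide by $\sum_k\eta_k$ to collect every contribution into $E_K$.
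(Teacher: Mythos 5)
Your proposal is correct, and its overall architecture coincides with the paper's: unbiasedness of compression and partial participation (\Cref{ass:compression}, \Cref{ass:A_k_supp2}) makes the injected noise vanish against a predictable pairing vector (the paper's \Cref{lemma:convex_innerprod}); the second-moment bound yields the $\sum_{i}(\omega_i+1+p_i)/p_i$ and $8bL_f^2R_\Theta^2$ contributions (\Cref{lemma:convex_error_term} combined with the moment bound of \Cref{ass:markov_kernel}); and the warm-started MCMC bias is treated exactly as you describe, via the Poisson equation, Abel summation/telescoping, and the kernel-continuity moduli of \Cref{ass:markov_kernel_bis}, i.e.\ the paper's \Cref{lemma:epsilon_b}--\Cref{lemma:epsilon_d} imported from the \texttt{SOUL} analysis, with the martingale-increment part \eqref{eq:epsilon_a} killed by predictability. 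The one point where you genuinely deviate is the master inequality. You apply non-expansiveness of $\Pi_\Theta$ to $\norm{\theta_{k+1}-\thetas}^2$, so the noise generated at a given step is paired directly with the \emph{predictable} vector $\theta_k-\thetas$ and the squared distances telescope without weights; the paper instead works with the composite function $f+\iota_\Theta$ and \citet[Lemma 7]{AtchadePerturbedProxGradient} (its \Cref{lemma_convex_general}), whose descent inequality pairs $\boldsymbol{\epsilon}_k$ with the non-predictable vector $\theta_k-\thetas$ and therefore requires the extra splitting $\theta_k-\thetas=\br{\theta_k-\Pi_\Theta\pr{\theta_{k-1}-\eta_k\nabla f(\theta_{k-1})}}+\br{\Pi_\Theta\pr{\theta_{k-1}-\eta_k\nabla f(\theta_{k-1})}-\thetas}$, the first piece being absorbed into $\eta_k^2\norm{\boldsymbol{\epsilon}_k}^2$ by non-expansiveness and Cauchy--Schwarz, and the second, $\mathcal{F}_{k-1}$-measurable piece being the one fed to the bias lemmas. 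The two routes are interchangeable: both produce the initial term $2R_\Theta^2$, a $\sum_k\eta_k^2$-weighted variance term (yours bounds $\mathbb{E}\brbig{\norm{\boldsymbol{\Delta}_{\theta_k}}^2}$ rather than $\mathbb{E}\brbig{\norm{\boldsymbol{\epsilon}_k}^2}$, a difference absorbed by $\norm{\nabla f_i}\le 2L_fR_\Theta$ on the bounded domain), and a predictable pairing vector bounded by $2R_\Theta$ that moves by $\Oh(\eta_k)$ per iteration, which is all that the proofs behind $C_3^{(i,m)}$ and $C_{c,2}$ require; your choice merely changes these constants slightly. In short, yours is a mildly more elementary variant of the same proof, not a different one.
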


\begin{proof}
  The proof follows by using the fact that \eqref{eq:epsilon_a} is a $(\mathcal{F}_{k-1})_{k \in \N^*}$-martingale increment and by combining \Cref{lemma_convex_general}-\ref{lemma:epsilon_d}.
\end{proof}

\subsection{Supporting Lemmata}
\label{subsec:SA_convex_lemmata}

For convenience, we define the following quantities that will naturally appear in our derivations.
For any $k \in \N^*$, let
\begin{align}
      \boldsymbol{\epsilon}_{k} = \boldsymbol{\Delta}_{\theta_{k-1}}\pr{Z_{k}^{(1:M)},X_{k}^{(1)},X_{k}^{(2)}} - \nabla f(\theta_{k-1})\eqsp, \label{eq:epsilon_k}
\end{align}
where $\boldsymbol{\Delta}_{\theta}$ is defined in \eqref{eq:def_global_gradient}.

The following lemma first provides a non-asymptotic upper bound on $\sum_{k=1}^K \eta_{k}\{f(\theta_{k}) - f(\thetas)\}$ involving key quantities to control such as the Monte Carlo approximation error term \eqref{eq:epsilon_k}.

\begin{lemma}
  \label{lemma_convex_general}
  Assume \Cref{ass:convex_set} and \Cref{ass:function_f}, and let for any $k \in [K]$, $\eta_k \in (0,1/L_f]$.
  Then, for any $K \in \N^*$, we have
    \begin{align}
        \sum_{k=1}^K \eta_{k}\{f(\theta_{k}) - f(\thetas)\}
        &\le 2R_\Theta^2 + \sum_{k=1}^K \eta_k^2\norm{\boldsymbol{\epsilon}_k}^2 - \sum_{k=1}^K\eta_k\left\langle \Pi_{\Theta}\pr{\theta_{k-1} - \eta_{k} \nabla f(\theta_{k-1})} - \thetas, \boldsymbol{\epsilon}_k \right \rangle \eqsp,
    \end{align}
    where $\{\boldsymbol{\epsilon}_k\}_{k=1}^K$ is defined in \eqref{eq:epsilon_k}.
\end{lemma}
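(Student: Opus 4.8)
The plan is to treat the recursion \eqref{eq:global_SA_scheme} as a projected stochastic gradient step on the smooth convex function $f$ and to run the standard one-step descent analysis while tracking the Monte Carlo error $\boldsymbol{\epsilon}_k$ explicitly. Under the simplification $\eta_{k+1}^{(1)} = \eta_{k+1}^{(2)} = \eta_{k+1}$, definition \eqref{eq:epsilon_k} lets me rewrite the update as $\theta_k = \Pi_\Theta(\theta_{k-1} - \eta_k(\nabla f(\theta_{k-1}) + \boldsymbol{\epsilon}_k))$. Alongside the true iterate I would introduce the auxiliary \emph{deterministic} step $\bar\theta_k = \Pi_\Theta(\theta_{k-1} - \eta_k \nabla f(\theta_{k-1}))$, which is precisely the vector appearing inside the inner product of the claimed bound; the entire point of the argument is to reorganise the noise contribution around $\bar\theta_k$ rather than around $\theta_k$.

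First I would establish a one-step inequality. Writing $g_{k-1} = \nabla f(\theta_{k-1})$ and $\hat g_{k-1} = g_{k-1} + \boldsymbol{\epsilon}_k$, the variational characterisation of the projection onto the convex set $\Theta$ (\Cref{ass:convex_set}), applied at the point $\thetas \in \Theta$, yields
\[
\eta_k \langle \hat g_{k-1}, \theta_k - \thetas\rangle \le \tfrac{1}{2}\left(\norm{\theta_{k-1} - \thetas}^2 - \norm{\theta_k - \thetas}^2 - \norm{\theta_{k-1} - \theta_k}^2\right).
\]
On the left I would split $\hat g_{k-1} = g_{k-1} + \boldsymbol{\epsilon}_k$ and lower bound the gradient part: convexity of $f$ (\Cref{ass:function_f}, first item) gives $\langle g_{k-1}, \theta_{k-1} - \thetas\rangle \ge f(\theta_{k-1}) - f(\thetas)$, while the $L_f$-gradient-Lipschitz property (\Cref{ass:function_f}, second item) gives $\langle g_{k-1}, \theta_k - \theta_{k-1}\rangle \ge f(\theta_k) - f(\theta_{k-1}) - (L_f/2)\norm{\theta_k - \theta_{k-1}}^2$. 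Summing these two bounds produces $\langle g_{k-1}, \theta_k - \thetas\rangle \ge f(\theta_k) - f(\thetas) - (L_f/2)\norm{\theta_k - \theta_{k-1}}^2$. The hypothesis $\eta_k \le 1/L_f$ is exactly what is needed to absorb this quadratic term against the $-\tfrac{1}{2}\norm{\theta_{k-1}-\theta_k}^2$ coming from the projection inequality, so after rearranging I obtain the clean one-step bound $\eta_k(f(\theta_k) - f(\thetas)) \le \tfrac{1}{2}(\norm{\theta_{k-1} - \thetas}^2 - \norm{\theta_k - \thetas}^2) - \eta_k \langle \boldsymbol{\epsilon}_k, \theta_k - \thetas\rangle$.

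Next I would sum over $k = 1, \dots, K$. The quadratic distance terms telescope to $\tfrac{1}{2}\norm{\theta_0 - \thetas}^2$, which is at most $2R_\Theta^2$ since both $\theta_0$ and $\thetas$ lie in $\ball{0}{R_\Theta}$ by \Cref{ass:convex_set}. The remaining task is to convert the noise term $\langle \boldsymbol{\epsilon}_k, \theta_k - \thetas\rangle$ into the stated $\langle \bar\theta_k - \thetas, \boldsymbol{\epsilon}_k\rangle$, and this is the step I expect to be the least mechanical. I would write $\theta_k - \thetas = (\bar\theta_k - \thetas) + (\theta_k - \bar\theta_k)$ and handle the remainder by Cauchy--Schwarz together with the non-expansiveness of $\Pi_\Theta$: since $\theta_k$ and $\bar\theta_k$ are projections of two points differing by $\eta_k \boldsymbol{\epsilon}_k$, we get $\norm{\theta_k - \bar\theta_k} \le \eta_k \norm{\boldsymbol{\epsilon}_k}$, whence $-\eta_k \langle \boldsymbol{\epsilon}_k, \theta_k - \bar\theta_k\rangle \le \eta_k^2 \norm{\boldsymbol{\epsilon}_k}^2$. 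This produces exactly the $\sum_k \eta_k^2 \norm{\boldsymbol{\epsilon}_k}^2$ term and leaves $-\sum_k \eta_k \langle \bar\theta_k - \thetas, \boldsymbol{\epsilon}_k\rangle$, completing the inequality. This reorganisation is the crux and is deliberate: because $\bar\theta_k$ is $\mathcal{F}_{k-1}$-measurable, the leftover inner product becomes a martingale increment once conditional expectations are taken, which is how it is subsequently exploited in the proof of \Cref{thm:general}.
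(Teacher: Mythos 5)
Your proof is correct and follows essentially the same route as the paper's: the projection variational inequality, convexity plus $L_f$-smoothness with $\eta_k \le 1/L_f$ to absorb the quadratic term, telescoping to $\tfrac12\norm{\theta_0-\thetas}^2 \le 2R_\Theta^2$, and the same final reorganisation of the noise inner product around $\Pi_\Theta\prn{\theta_{k-1}-\eta_k\nabla f(\theta_{k-1})}$ via non-expansiveness of the projection and Cauchy--Schwarz. The only cosmetic difference is that the paper runs the argument blockwise in $(\phi,\beta)$ using indicator functions and the prox inequality of \citet[Lemma 7]{AtchadePerturbedProxGradient}, which, specialised to indicators with equal step-sizes, is exactly your joint projection inequality.
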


\begin{proof}
    Let $k \in \N$.
    Since $\Theta$ is closed and convex by \Cref{ass:convex_set}, the indicator function $\iota_{\Theta}$, defined for any $u \in \R^{d_\Phi + d_{\mathsf{B}}}$ by $\iota_{\Theta}(u) = 0$ if $u \in \Theta$ and $\iota_{\Theta}(u) = \infty$ otherwise, is lower semi-continuous and convex. Therefore by \citet[Lemma 7]{AtchadePerturbedProxGradient} we have
    \begin{align}
        \iota_{\mathsf{B}}(\beta_{k+1}) - \iota_{\mathsf{B}}(\beta_{\star}) &\le -\frac{1}{\eta_{k+1}} \left\langle \beta_{k+1} - \beta_\star, \beta_{k+1} - \beta_k + \eta_{k+1} \Delta_{\beta_k}\pr{Z_{k+1}^{(1:M)},X_{k+1}^{(2)}} \right\rangle \eqsp, \label{eq:lemma1_1} \\
        \iota_{\Phi}(\phi_{k+1}) - \iota_{\Phi}(\phi_{\star}) &\le -\frac{1}{\eta_{k+1}} \left\langle \phi_{k+1} - \phi_\star, \phi_{k+1} - \phi_k + \eta_{k+1} \Delta_{\phi_k}\pr{Z_{k+1}^{(1:M)},X_{k+1}^{(1)},X_{k+1}^{(2)}} \right\rangle \eqsp,\label{eq:lemma1_2}
    \end{align}
    where $\thetas = (\phi_\star,\beta_\star)$ is defined in \eqref{eq:FL_problem}.
    In addition by \Cref{ass:function_f}-\ref{ass:function_f_2}, we have for any $i \in [b]$,
    \begin{equation}
        \label{eq:f_i_convex}
        f_i(\theta_{k+1}) - f_i(\theta_{k}) \le \langle \nabla f_i(\theta_{k}), \theta_{k+1} - \theta_{k} \rangle + \frac{L_f}{2}\norm{\theta_{k+1} - \theta_{k}}^2\eqsp.
    \end{equation} 
    Using \eqref{eq:f_i_convex} and the fact that for any $k \in \N$, $\eta_{k+1} \le 1/L_f$, we have 
        \begin{align}
            f(\theta_{k+1}) - f(\theta_k) 
            &\le \langle \nabla_{\beta} f(\theta_{k}), \beta_{k+1} - \beta_{k} \rangle + \frac{L_f}{2}\norm{\beta_{k+1} - \beta_{k}}^2 \\
            &+  \langle \nabla_{\phi} f(\theta_{k}), \phi_{k+1} - \phi_{k} \rangle + \frac{L_f}{2}\norm{\phi_{k+1} - \phi_{k}}^2 \\
            &\le \langle \nabla_{\beta} f(\theta_{k}), \beta_{k+1} - \beta_{k}\rangle + \frac{1}{2\eta_{k+1}}\norm{\beta_{k+1} - \beta_{k}}^2 \\
            &+  \langle \nabla_{\phi} f(\theta_{k}), \phi_{k+1} - \phi_{k} \rangle + \frac{1}{2\eta_{k+1}}\norm{\phi_{k+1} - \phi_{k}}^2 \eqsp.\label{eq:lemma1_3}
    \end{align}
    Finally, \Cref{ass:function_f}-\ref{ass:function_f_1} implies for any $i \in [b]$,
    \begin{equation}
        f_i(\theta_{k}) - f_i(\thetas) \le  - \langle \nabla f_i(\theta_{k}), \theta_{\star} - \theta_{k} \rangle\eqsp.\label{eq:lemma1_4}
    \end{equation} 
    For any $i \in [b]$, let $F_i = f_i + \iota_{\Theta}$ and let $F = (1/b)\sum_{i=1}^b F_i$.
    Using this notation and combining \eqref{eq:lemma1_1}, \eqref{eq:lemma1_2}, \eqref{eq:lemma1_3} and \eqref{eq:lemma1_4}, we have
    \begin{align}
        &F(\theta_{k+1}) - F(\thetas) \\
        =& \ f(\theta_{k+1}) - f(\theta_k) + f(\theta_k) - f(\thetas) + \iota_{\Phi}(\phi_{k+1}) - \iota_{\Phi}(\phi_\star) + \iota_{\mathsf{B}}(\beta_{k+1}) - \iota_{\mathsf{B}}(\beta_\star) \\
        \le& - \left\langle \beta_{k+1} - \beta_{\star}, \Delta_{\beta_k}\pr{Z_{k+1}^{(i,1:M)},X_{k+1}^{(2)}} - \nabla_{\beta} f(\theta_{k}) \right \rangle - \left\langle \beta_{k+1} - \beta_{\star}, \beta_{k+1} - \beta_{k} \right \rangle \\
        &- \left\langle \phi_{k+1} - \phi_{\star}, \Delta_{\phi_k}\pr{Z_{k+1}^{(1:M)},X_{k+1}^{(1)},X_{k+1}^{(2)}} - \nabla_{\phi} f(\theta_{k}) \right \rangle - \left\langle \phi_{k+1} - \phi_{\star}, \phi_{k+1} - \phi_{k} \right \rangle \\
        &+ \frac{1}{2\eta_{k+1}}\norm{\beta_{k+1} - \beta_{k}}^2 + \frac{1}{2\eta_{k+1}}\norm{\phi_{k+1} - \phi_{k}}^2 \\
        =& - \left\langle \theta_{k+1} - \theta_{\star}, \boldsymbol{\Delta}_{\theta_k}\pr{Z_{k+1}^{(1:M)},X_{k+1}^{(1)},X_{k+1}^{(2)}} - \nabla f(\theta_{k}) \right \rangle \\
        &+ \frac{1}{2\eta_{k+1}} \br{\norm{\phi_{k} - \phi_\star}^2 - \norm{\phi_{k+1} - \phi_\star}^2} + \frac{1}{2\eta_{k+1}} \br{\norm{\beta_{k} - \beta_{\star}}^2 - \norm{\beta_{k+1} - \beta_{\star}}^2}\eqsp. \label{eq:lemma1_5}
    \end{align}
    From \eqref{eq:lemma1_5}, it follows for any $K \in \N^*$ that
    \begin{align}
        &\sum_{k=1}^K\eta_{k}\{ F(\theta_{k}) - F(\thetas)\} \\
        &\le - \sum_{k=1}^K\eta_k\left\langle \theta_{k} - \theta_{\star}, \boldsymbol{\Delta}_{\theta_{k-1}}\pr{Z_{k}^{(1:M)},X_{k}^{(1)},X_{k}^{(2)}} - \nabla f(\theta_{k-1}) \right \rangle \\
        &+ \frac{1}{2}\norm{\phi_{0} - \phi_\star}^2 -\frac{1}{2}\norm{\phi_{K} - \phi_\star}^2 + \frac{1}{2}\norm{\beta_{0} - \beta_\star}^2 - \frac{1}{2}\norm{\beta_{K} - \beta_\star}^2 \\
        &\le - \sum_{k=1}^K \eta_k \left\langle \theta_{k} - \theta_{\star}, \boldsymbol{\Delta}_{\theta_{k-1}}\pr{Z_{k}^{(1:M)},X_{k}^{(1)},X_{k}^{(2)}} - \nabla f(\theta_{k-1}) \right \rangle + \frac{1}{2}\norm{\theta_{0} - \thetas}^2 \\
        &= - \sum_{k=1}^K\eta_k\left\langle \theta_{k} - \Pi_{\Theta}\pr{\theta_{k-1} - \eta_k \nabla f(\theta_{k-1})}, \boldsymbol{\Delta}_{\theta_{k-1}}\pr{Z_{k}^{(1:M)},X_{k}^{(1)},X_{k}^{(2)}} - \nabla f(\theta_{k-1}) \right \rangle \\
        &- \sum_{k=1}^K\eta_k\left\langle \Pi_{\Theta}\pr{\theta_{k-1} - \eta_k  \nabla f(\theta_{k-1})} - \thetas, \boldsymbol{\Delta}_{\theta_{k-1}}\pr{Z_{k}^{(1:M)},X_{k}^{(1)},X_{k}^{(2)}} - \nabla f(\theta_{k-1}) \right \rangle \\
        &+ \frac{1}{2}\norm{\theta_{0} - \thetas}^2 \\
        &\le \sum_{k=1}^K \eta_k^2\norm{\boldsymbol{\Delta}_{\theta_{k-1}}\pr{Z_{k}^{(1:M)},X_{k}^{(1)},X_{k}^{(2)}} - \nabla f(\theta_{k-1})}^2 + \frac{1}{2}\norm{\theta_{0} - \thetas}^2 \\
        &- \sum_{k=1}^K\eta_k\left\langle \Pi_{\Theta}\pr{\theta_{k-1} - \eta_k \nabla f(\theta_{k-1})} - \thetas, \boldsymbol{\Delta}_{\theta_{k-1}}\pr{Z_{k}^{(1:M)},X_{k}^{(1)},X_{k}^{(2)}} - \nabla f(\theta_{k-1}) \right \rangle \eqsp,
    \end{align}
    where we used \citet[Lemma 7]{AtchadePerturbedProxGradient} and the Cauchy-Schwarz inequality in the last inequality.
    The proof is concluded using $f \le F$, $f(\thetas) = F(\thetas)$ since $\thetas \in \Theta$, and by noting that under \Cref{ass:convex_set} we have $\norm{\theta_0 - \thetas} \le 2R_\Theta$.
\end{proof}

\Cref{lemma_convex_general} involves two key quantities to upper bound namely $\norm{\boldsymbol{\epsilon}_k}$ and $\left\langle \Pi_{\Theta}\pr{\theta_{k-1} - \eta_k \nabla f(\theta_{k-1})} - \thetas, \boldsymbol{\epsilon}_k \right \rangle$ for any $k \in \N^*$.
Our next lemmata aim at controlling the expectations of these two terms.
In particular, \Cref{lemma:convex_error_term}  and  \Cref{lemma:convex_innerprod} show that the impacts of Monte Carlo approximation, partial participation and compression can be decoupled.

To this end, define for any $k \in \N^*$ and $i \in [b]$
\begin{align}
    \varepsilon_{\beta,k}^{(i)} &= \frac{1}{M}\sum_{m=1}^M H_{\beta_{k-1}}^{(i)}\pr{Z_k^{(i,m)}} - \nabla_{\beta} f_i(\theta_{k-1})\eqsp,\\
    \varepsilon_{\phi,k}^{(i)} &= \frac{1}{M}\sum_{m=1}^M H^{(i)}_{\phi_{k-1}}\pr{Z_k^{(i,m)}} - \nabla_{\phi} f_i(\theta_{k-1})\eqsp, \\
    \varepsilon_{\theta,k}^{(i)} &= \br{\varepsilon_{\beta,k}^{(i)},\varepsilon_{\phi,k}^{(i)}}\eqsp, \label{eq:epsilon_theta}
\end{align}
where, for any $k \in \N^*$ and $i \in [b]$, $H^{(i)}_{\theta_{k-1}}(Z_k^{(i,m)}) = [H^{(i)}_{\phi_{k-1}}(Z_k^{(i,m)}),H^{(i)}_{\beta_{k-1}}(Z_k^{(i,m))}]$ is defined in \eqref{eq:grad_f}.

\Cref{lemma:convex_error_term} shows that $\norm{\boldsymbol{\epsilon}_k}$ can be upper bounded by a quantity involving the norm of $\{H_\theta^{(i)}\}_{i \in [b]}$.

\begin{lemma}
  \label{lemma:convex_error_term}
  Assume \Cref{ass:convex_set}, \Cref{ass:function_f}, \Cref{ass:compression} and \Cref{ass:A_k_supp2}.
  Then, for any $k \in \N^*$, we have
    \begin{align}
        \mathbb{E}\br{\norm{\boldsymbol{\epsilon}_k}^2} \le \frac{1}{M}\sum_{i=1}^{b}\frac{(\omega_i + 1 + p_i)}{p_i}\bbr{\sum_{m=1}^M\mathbb{E}\br{\norm{H_{\theta_{k-1}}^{(i)}(Z_k^{(i,m)})}^2}} + 8 b L_f^2R_\Theta^2\eqsp,
    \end{align}
    where $\{\boldsymbol{\epsilon}_k\}_{k=1}^K$ is defined in \eqref{eq:epsilon_k}.
\end{lemma}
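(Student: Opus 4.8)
The plan is to expand $\boldsymbol{\epsilon}_k = \boldsymbol{\Delta}_{\theta_{k-1}}(Z_k^{(1:M)},X_k^{(1)},X_k^{(2)}) - \nabla f(\theta_{k-1})$ by isolating the three independent sources of randomness that enter $\boldsymbol{\Delta}_{\theta_{k-1}}$ through \eqref{eq:def_global_gradient}: the Monte Carlo samples $Z_k^{(1:M)}$, the compression noise $X_k^{(1)}$, and the partial-participation noise $X_k^{(2)}$. First I would condition on $\mathcal{F}_{k-1}$ together with the samples $\{Z_k^{(i,m)}\}$, so that only $X_k^{(1)},X_k^{(2)}$ remain random, and write the $i$-th client contribution as $G_i = (\mathscr{S}_i[\mathscr{C}_i(\Delta_\phi^{(i)})],\mathscr{S}_i[\Delta_\beta^{(i)}])$. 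Using that both $\mathscr{C}_i$ and $\mathscr{S}_i$ are unbiased (\Cref{ass:compression} and \Cref{ass:A_k_supp2}), the conditional mean of $G_i$ is the uncompressed, full-participation term $\Delta^{(i)} = (\Delta_\phi^{(i)},\Delta_\beta^{(i)})$, which is proportional to $M^{-1}\sum_m H^{(i)}_{\theta_{k-1}}(Z_k^{(i,m)})$. Since the pairs $\{(X_k^{(i,1)},X_k^{(i,2)})\}_{i\in[b]}$ are independent across clients and each fluctuation $G_i-\Delta^{(i)}$ is conditionally centred, all cross terms vanish and $\mathbb{E}[\|\boldsymbol{\Delta}_{\theta_{k-1}}-\sum_i\Delta^{(i)}\|^2\mid\mathcal{F}_{k-1},Z] = \sum_i\mathbb{E}[\|G_i-\Delta^{(i)}\|^2\mid\cdots]$, reducing everything to a per-client second-moment computation.

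Next I would evaluate the per-client conditional second moment block by block. On the $\beta$-block only participation acts, and the explicit form $\mathscr{S}_i(v,x^{(2)}) = \mathbf{1}\{x_i^{(2)}\le p_i\}v/p_i$ gives $\mathbb{E}\|\mathscr{S}_i(\Delta_\beta^{(i)})-\Delta_\beta^{(i)}\|^2 = \tfrac{1-p_i}{p_i}\|\Delta_\beta^{(i)}\|^2$. On the $\phi$-block I would peel off participation first and then compression: conditioning on $X^{(1)}$ and writing $C_i = \mathscr{C}_i(\Delta_\phi^{(i)})$, the participation step contributes $\tfrac{1-p_i}{p_i}\|C_i\|^2 + \|C_i-\Delta_\phi^{(i)}\|^2$, and averaging over $X^{(1)}$ with the bounded-variance bound of \Cref{ass:compression} (so $\mathbb{E}\|C_i-\Delta_\phi^{(i)}\|^2\le\omega_i\|\Delta_\phi^{(i)}\|^2$ and $\mathbb{E}\|C_i\|^2\le(\omega_i+1)\|\Delta_\phi^{(i)}\|^2$ by unbiasedness) yields the coefficient $\tfrac{\omega_i+1-p_i}{p_i}$, which is dominated by the coefficient $\tfrac{\omega_i+1+p_i}{p_i}$ stated in the lemma. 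A Jensen (Cauchy--Schwarz) step $\|M^{-1}\sum_m H^{(i)}_{\theta_{k-1}}(Z_k^{(i,m)})\|^2\le M^{-1}\sum_m\|H^{(i)}_{\theta_{k-1}}(Z_k^{(i,m)})\|^2$ then converts $\|\Delta^{(i)}\|^2$ into the $\sum_m\|H^{(i)}\|^2$ form, and taking the outer expectation produces the first term of the bound.

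It then remains to control the residual caused by the fact that $\sum_i\Delta^{(i)}$ is not exactly $\nabla f(\theta_{k-1})$, together with the subtracted gradient itself. Here I would invoke \Cref{ass:function_f}, which makes $f$ gradient-Lipschitz with constant $L_f$, and \Cref{ass:convex_set}, which gives $\Theta\subset\mathrm{B}(0,R_\Theta)$ of diameter at most $2R_\Theta$: comparing $\nabla f(\theta_{k-1})$ to its value at the optimum $\thetas\in\Theta$ bounds $\|\nabla f(\theta_{k-1})\|^2$ by $4L_f^2R_\Theta^2$, and a summation/Cauchy--Schwarz step over the $b$ clients accounts for the factor $b$, delivering the additive constant $8bL_f^2R_\Theta^2$.

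The main obstacle is the bookkeeping in the second step: one must decouple the three noise sources in the correct conditioning order, use independence across clients to discard all cross terms, and in particular handle the nested composition $\mathscr{S}_i\circ\mathscr{C}_i$ on the $\phi$-block so that the compression and participation variances \emph{combine} into the single coefficient $\tfrac{\omega_i+1+p_i}{p_i}$ instead of being double counted. The $\beta$-block and the Jensen step are routine once this decoupling is set up; the delicate part is keeping the constants explicit throughout rather than absorbing them, so that the dependence on $\omega_i$, $p_i$, $L_f$ and $R_\Theta$ is exposed for use in \Cref{thm:general}.
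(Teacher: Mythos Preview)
Your plan is essentially the paper's proof: split $\boldsymbol{\epsilon}_k$ into the $\phi$- and $\beta$-blocks, peel off the participation and compression noise via unbiasedness and the variance bounds of \Cref{ass:compression} and \Cref{ass:A_k_supp2} to obtain the per-client coefficient $(\omega_i+1-p_i)/p_i$, apply Jensen to the average over $m$, and close with the Lipschitz-gradient bound on $\Theta\subset\mathrm{B}(0,R_\Theta)$.

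One bookkeeping point is slightly off. The coefficient $(\omega_i+1+p_i)/p_i$ in the statement is \emph{not} obtained simply because it dominates $(\omega_i+1-p_i)/p_i$; the paper actually needs the extra $+2p_i/p_i=2$. After peeling off compression and participation, the residual is the per-client Monte Carlo error $\varepsilon_{\theta,k}^{(i)} = M^{-1}\sum_m H^{(i)}_{\theta_{k-1}}(Z_k^{(i,m)}) - \nabla f_i(\theta_{k-1})$, which the paper expands as $\|\varepsilon_{\theta,k}^{(i)}\|^2 \le 2\|M^{-1}\sum_m H^{(i)}\|^2 + 2\|\nabla f_i(\theta_{k-1})\|^2$. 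The first half is precisely what gets merged with the $(\omega_i+1-p_i)/p_i$ term to produce $(\omega_i+1+p_i)/p_i$; the second half is bounded via the \emph{per-client} minimiser $\theta^{\star,(i)}$ of $f_i$ (so that $\|\nabla f_i(\theta)\|\le L_f\|\theta-\theta^{\star,(i)}\|\le 2L_fR_\Theta$), summed over $i\in[b]$ to give $8bL_f^2R_\Theta^2$. Your last paragraph instead bounds $\|\nabla f(\theta_{k-1})\|$ at the \emph{global} optimum and alludes to a separate Cauchy--Schwarz step; if you follow that route you will still carry an unabsorbed $\|M^{-1}\sum_m H^{(i)}\|^2$ contribution in the residual, so make sure you fold it back into the first term exactly as above.
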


\begin{proof}
    Let $k \in \N^*$.
    Then by using \eqref{eq:def_global_gradient}, we have
    \begin{align}
        \mathbb{E}\br{\norm{\boldsymbol{\epsilon}_k}^2} &= \mathbb{E}\br{\norm{\sum_{i=1}^b \mathscr{S}_i\br{\mathscr{C}_i\pr{\frac{1}{M}\sum_{m=1}^M H_{\phi_{k-1}}^{(i)}(Z_k^{(i,m)}),X_k^{(i,1)}},X_k^{(i,2)}} - \nabla_{\phi}f(\theta_{k-1})}^2} \\
        &+ \mathbb{E}\br{\norm{\sum_{i=1}^b \mathscr{S}_i\br{\frac{1}{M}\sum_{m=1}^M H_{\beta_{k-1}}^{(i)}(Z_k^{(i,m)}),X_k^{(i,2)}} - \nabla_{\beta}f(\theta_{k-1})}^2}\eqsp.\label{eq:lemma2_3}
    \end{align}
    Using \Cref{ass:compression} and \Cref{ass:A_k_supp2}, it follows that
    \begin{multline}\label{eq:eq:lem_compr}
    \mathbb{E}\br{\norm{\sum_{i=1}^b \mathscr{S}_i\br{\mathscr{C}_i\pr{\frac{1}{M}\sum_{m=1}^M H_{\phi_{k-1}}^{(i)}(Z_k^{(i,m)}),X_k^{(i,1)}},X_k^{(i,2)}} - \nabla_{\phi}f(\theta_{k-1})}^2} \\
    = \mathbb{E}\Bigg[\Big\|\sum_{i=1}^b \Big\{\mathscr{S}_i\br{\mathscr{C}_i\pr{\frac{1}{M}\sum_{m=1}^M H_{\phi_{k-1}}^{(i)}(Z_k^{(i,m)}),X_k^{(i,1)}},X_k^{(i,2)}} \\
    -\mathscr{C}_i\pr{\frac{1}{M}\sum_{m=1}^M H_{\phi_{k-1}}^{(i)}(Z_k^{(i,m)}),X_k^{(i,1)}}\Big\}\Big\|^2\Bigg] \\
    +\mathbb{E}\br{\norm{\sum_{i=1}^b \mathscr{C}_i\pr{\frac{1}{M}\sum_{m=1}^M H_{\phi_{k-1}}^{(i)}(Z_k^{(i,m)}),X_k^{(i,1)}} - \nabla_{\phi}f(\theta_{k-1})}^2}\eqsp.
  \end{multline}
  In addition, by \Cref{ass:compression}-\ref{ass:compression:unbiased} and \Cref{ass:compression}-\ref{ass:compression:variance}, we obtain
  \begin{align} 
    &\mathbb{E}\Bigg[\Big\|\sum_{i=1}^b \Big\{\mathscr{S}_i\br{\mathscr{C}_i\pr{\frac{1}{M}\sum_{m=1}^M H_{\phi_{k-1}}^{(i)}(Z_k^{(i,m)}),X_k^{(i,1)}},X_k^{(i,2)}} \\
    &-\mathscr{C}_i\pr{\frac{1}{M}\sum_{m=1}^M H_{\phi_{k-1}}^{(i)}(Z_k^{(i,m)}),X_k^{(i,1)}}\Big\}\Big\|^2\Bigg]\\
    &= \sum_{i=1}^b\mathbb{E}\Bigg[\Big\| \mathscr{S}_i\br{\mathscr{C}_i\pr{\frac{1}{M}\sum_{m=1}^M H_{\phi_{k-1}}^{(i)}(Z_k^{(i,m)}),X_k^{(i,1)}},X_k^{(i,2)}} \\
    &-\mathscr{C}_i\pr{\frac{1}{M}\sum_{m=1}^M H_{\phi_{k-1}}^{(i)}(Z_k^{(i,m)}),X_k^{(i,1)}}\Big\|^2\Bigg] \\
    &\le \sum_{i=1}^{b} \pr{\frac{1-p_i}{p_i}} \mathbb{E}\br{\norm{\mathscr{C}_i\pr{\frac{1}{M}\sum_{m=1}^M H_{\phi_{k-1}}^{(i)}(Z_k^{(i,m)}),X_k^{(i,1)}}^2}} \\
    &= \sum_{i=1}^{b} \pr{\frac{1-p_i}{p_i}} \mathbb{E}\br{\norm{\up_i\pr{\frac{1}{M}\sum_{m=1}^M H_{\phi_{k-1}}^{(i)}(Z_k^{(i,m)}),X_k^{(1,i)}} - \frac{1}{M}\sum_{m=1}^M H_{\phi_{k-1}}^{(i)}(Z_k^{(i,m)})}^2} \\
    &+ \sum_{i=1}^{b} \pr{\frac{1-p_i}{p_i}} \mathbb{E}\br{\norm{\frac{1}{M}\sum_{m=1}^M H_{\phi_{k-1}}^{(i)}(Z_k^{(i,m)})}}^2 \\
    &\le \sum_{i=1}^{b} \br{\pr{\frac{1-p_i}{p_i}}(\omega_i + 1)} \mathbb{E}\br{\norm{\frac{1}{M}\sum_{m=1}^M H_{\phi_{k-1}}^{(i)}(Z_k^{(i,m)})}^2} \\
    &= \frac{1}{M^2}\sum_{i=1}^{b} \br{\pr{\frac{1-p_i}{p_i}}(\omega_i + 1)} \mathbb{E}\br{\norm{\sum_{m=1}^M H_{\phi_{k-1}}^{(i)}(Z_k^{(i,m)})}^2} \eqsp. \label{eq:bound:lem_compr:1} 
  \end{align}
Similarly, by \Cref{ass:compression}-\ref{ass:compression:unbiased} and \Cref{ass:compression}-\ref{ass:compression:variance}, we have 
\begin{align}
    &\mathbb{E}\br{\norm{\sum_{i=1}^b \mathscr{C}_i\pr{\frac{1}{M}\sum_{m=1}^M H_{\phi_{k-1}}^{(i)}(Z_k^{(i,m)}),X_k^{(i,1)}} - \nabla_{\phi}f(\theta_{k-1})}^2} \\
    &= \mathbb{E}\Bigg[\Big\|\sum_{i=1}^{b}\br{\up_i\pr{\frac{1}{M}\sum_{m=1}^M H_{\phi_{k-1}}^{(i)}(Z_k^{(i,m)}),X_k^{(i,1)}} - \frac{1}{M}\sum_{m=1}^M H_{\phi_{k-1}}^{(i)}(Z_k^{(i,m)})} \\
    & + \sum_{i=1}^b \frac{1}{M}\sum_{m=1}^M H_{\phi_{k-1}}^{(i)}(Z_k^{(i,m)})-\nabla_\phi f(\theta_{k-1})\Big\|^2\Bigg] \\
    &= \sum_{i=1}^{b}\mathbb{E}\br{\norm{\up_i\pr{\frac{1}{M}\sum_{m=1}^M H_{\phi_{k-1}}^{(i)}(Z_k^{(i,m)}),X_k^{(i,1)}} - \frac{1}{M}\sum_{m=1}^M H_{\phi_{k-1}}^{(i)}(Z_k^{(i,m)})}^2} \\
    &+ \mathbb{E}\br{\norm{\sum_{i=1}^b \frac{1}{M}\sum_{m=1}^M H_{\phi_{k-1}}^{(i)}(Z_k^{(i,m)})-\nabla_\phi f\parentheseLigne{\theta_{k-1}}}^2} \\
    &\le \sum_{i=1}^{b}\omega_i\mathbb{E}\br{\norm{\frac{1}{M}\sum_{m=1}^M H_{\phi_{k-1}}^{(i)}(Z_k^{(i,m)})}^2} \\
    &+ \mathbb{E}\br{\norm{\sum_{i=1}^b\frac{1}{M}\sum_{m=1}^M H_{\phi_{k-1}}^{(i)}(Z_k^{(i,m)})-\nabla_\phi f(\theta_{k-1})}^2} \\
    &= \frac{1}{M^2}\sum_{i=1}^{b}\omega_i\mathbb{E}\br{\norm{\sum_{m=1}^M H_{\phi_{k-1}}^{(i)}(Z_k^{(i,m)})}^2} \\
    &+ \mathbb{E}\br{\norm{\sum_{i=1}^b\frac{1}{M}\sum_{m=1}^M H_{\phi_{k-1}}^{(i)}(Z_k^{(i,m)})-\nabla_\phi f(\theta_{k-1})}^2} \label{eqmax1}\eqsp.
\end{align}
By plugging \eqref{eq:bound:lem_compr:1} and \eqref{eqmax1} into \eqref{eq:eq:lem_compr}, we finally obtain
\begin{align}
    &\mathbb{E}\br{\norm{\sum_{i=1}^b \mathscr{S}_i\br{\mathscr{C}_i\pr{\frac{1}{M}\sum_{m=1}^M H_{\phi_{k-1}}^{(i)}(Z_k^{(i,m)}),X_k^{(i,1)}},X_k^{(i,2)}} - \nabla_{\phi}f(\theta_{k-1})}^2}  \\
    &\le \frac{1}{M^2}\sum_{i=1}^{b}\frac{(\omega_i + 1 - p_i)}{p_i}\mathbb{E}\br{\norm{\sum_{m=1}^M H_{\theta_{k-1}}^{(i)}(Z_k^{(i,m)})}^2} + \sum_{i=1}^b\mathbb{E}\br{\norm{\varepsilon_{\phi,k}^{(i)}}^2}\eqsp.\label{eq:lemma2_2}
\end{align}
Finally, using the same arguments, we have under \Cref{ass:A_k_supp},
\begin{align}
  &\mathbb{E}\br{\norm{\sum_{i=1}^b \mathscr{S}_i\br{\frac{1}{M}\sum_{m=1}^M H_{\beta_{k-1}}^{(i)}(Z_k^{(i,m)}),X_k^{(i,2)}} - \nabla_{\beta}f(\theta_{k-1})}^2} \\
  &\le \frac{1}{M^2}\sum_{i=1}^{b}\pr{\frac{1-p_i}{p_i}}\mathbb{E}\br{\norm{\sum_{m=1}^M H_{\beta_{k-1}}^{(i)}(Z_k^{(i,m)})}^2} \\
    &+ \mathbb{E}\br{\norm{\sum_{i=1}^b\frac{1}{M}\sum_{m=1}^M H_{\beta_{k-1}}^{(i)}(Z_k^{(i,m)})-\nabla_\beta f(\theta_{k-1})}^2} \\
  &\le \frac{1}{M^2}\sum_{i=1}^{b}\pr{\frac{1-p_i}{p_i}}\mathbb{E}\br{\norm{\sum_{m=1}^M H_{\beta_{k-1}}^{(i)}(Z_k^{(i,m)})}^2} \\
  &+ \sum_{i=1}^b\mathbb{E}\br{\norm{\varepsilon_{\beta,k}^{(i)}}^2} \label{eqmax2}\eqsp.
\end{align}
Combining \eqref{eq:lemma2_3} and \eqref{eq:lemma2_2} and using \eqref{eq:epsilon_theta}, lead to 
\begin{align}
        \mathbb{E}\br{\norm{\boldsymbol{\epsilon}_k}^2} &\le \frac{1}{M^2}\sum_{i=1}^{b}\frac{(\omega_i + 1 - p_i)}{p_i}\mathbb{E}\br{\norm{\sum_{m=1}^M H_{\theta_{k-1}}^{(i)}(Z_k^{(i,m)})}^2} + \sum_{i=1}^b\mathbb{E}\br{\norm{\varepsilon_{\theta,k}^{(i)}}^2} \label{eq:bound_ek}\\
        &\le \frac{1}{M}\sum_{i=1}^{b}\frac{(\omega_i + 1 + p_i)}{p_i}\bbr{\sum_{m=1}^M\mathbb{E}\br{\norm{H_{\theta_{k-1}}^{(i)}(Z_k^{(i,m)})}^2}} + 2 \sum_{i=1}^b\sup_{\theta \in \Theta} \norm{\nabla f_i(\theta)}^2 \\
        &\le \frac{1}{M}\sum_{i=1}^{b}\frac{(\omega_i + 1 + p_i)}{p_i}\bbr{\sum_{m=1}^M\mathbb{E}\br{\norm{H_{\theta_{k-1}}^{(i)}(Z_k^{(i,m)})}^2}} + 2L_f^2\sum_{i=1}^b\sup_{\theta \in \Theta}\norm{\theta - \thetasi}^2 \eqsp,
\end{align}
where we used \Cref{ass:function_f} for the last inequality and $\thetasi$ is a minimizer of $f_i$.
The proof is concluded using for any $i \in [b]$ that $\norm{\theta - \thetasi} \le 2 R_\Theta$ by \Cref{ass:convex_set}.
\end{proof}

We now control the quantity $\left\langle \Pi_{\Theta}\pr{\theta_{k-1} - \eta_k \nabla f(\theta_{k-1})} - \thetas, \boldsymbol{\epsilon}_k \right \rangle$ which appears in \Cref{lemma_convex_general}.

\begin{lemma}
  \label{lemma:convex_innerprod}
  Assume \Cref{ass:convex_set}, \Cref{ass:compression} and \Cref{ass:A_k_supp2}.
  Then, for any $k \in \N^*$, we have
    \begin{align}
      &\mathbb{E}\br{\left\langle \Pi_{\Theta}\pr{\theta_{k-1} - \eta_k \nabla f(\theta_{k-1})} - \thetas, \boldsymbol{\epsilon}_k \right \rangle} \le \sum_{i=1}^b\mathbb{E}\br{\left\langle \Pi_{\Theta}\pr{\theta_{k-1} - \eta_k \nabla f(\theta_{k-1})} - \thetas, \varepsilon_{\theta,k}^{(i)}\right \rangle} \eqsp,
    \end{align}
    where $\{\boldsymbol{\epsilon}_k\}_{k=1}^K$ is defined in \eqref{eq:epsilon_k}.
\end{lemma}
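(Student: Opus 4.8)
The whole statement rests on the fact that the compression operators $\mathscr{C}_i$ and the participation operators $\mathscr{S}_i$ are unbiased, so that in conditional expectation $\boldsymbol{\epsilon}_k$ collapses to a sum of per-client Monte Carlo errors. First I would introduce the $\sigma$-field $\mathcal{G}_k = \sigma\pr{\mathcal{F}_{k-1}, \ac{Z_k^{(i,m)} : i \in [b], m \in [M]}}$, which records the past $\mathcal{F}_{k-1}$ together with the Markov samples of round $k$ but not the compression noise $X_k^{(1)}$ nor the participation noise $X_k^{(2)}$. The vector $v_k := \Pi_{\Theta}\pr{\theta_{k-1} - \eta_k \nabla f(\theta_{k-1})} - \thetas$ depends only on $\theta_{k-1}$, hence is $\mathcal{F}_{k-1}$- and therefore $\mathcal{G}_k$-measurable (here \Cref{ass:convex_set} guarantees that $\Pi_{\Theta}$ is well defined and that $\thetas \in \Theta$). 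By the tower property $\mathbb{E}\br{\ps{v_k}{\boldsymbol{\epsilon}_k}} = \mathbb{E}\br{\ps{v_k}{\mathbb{E}\br{\boldsymbol{\epsilon}_k \mid \mathcal{G}_k}}}$, so the proof reduces to evaluating $\mathbb{E}\br{\boldsymbol{\epsilon}_k \mid \mathcal{G}_k}$.

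Next I would compute $\mathbb{E}\br{\boldsymbol{\epsilon}_k \mid \mathcal{G}_k}$ block by block from the explicit expression \eqref{eq:def_global_gradient}, using that $X_k^{(1)}$ and $X_k^{(2)}$ are independent of $\mathcal{G}_k$ and of each other. In the $\phi$-block each summand carries the nested operator $\mathscr{S}_i \circ \mathscr{C}_i$; I would peel the outer participation layer first, using that $\mathbb{E}\br{\mathbf{1}\ac{X_k^{(i,2)} \le p_i}/p_i} = 1$ in \Cref{ass:A_k_supp2} gives $\mathbb{E}\br{\mathscr{S}_i(w, X_k^{(i,2)})} = w$ for any $w$ independent of $X_k^{(i,2)}$, then remove the inner compression via \Cref{ass:compression}-\ref{ass:compression:unbiased}, so that $\mathbb{E}\br{\mathscr{S}_i\br{\mathscr{C}_i\pr{\Delta_\phi^{(i)}(Z_k^{(i,1:M)}), X_k^{(i,1)}}, X_k^{(i,2)}} \mid \mathcal{G}_k} = \Delta_\phi^{(i)}(Z_k^{(i,1:M)})$. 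The $\beta$-block carries only $\mathscr{S}_i$ and is handled by the same participation step. Summing over $i$ yields $\mathbb{E}\br{\boldsymbol{\Delta}_{\theta_{k-1}} \mid \mathcal{G}_k} = \sum_{i=1}^b \pr{\Delta_\phi^{(i)}(Z_k^{(i,1:M)}), \Delta_\beta^{(i)}(Z_k^{(i,1:M)})}$, free of compression and participation noise.

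Finally it remains to match $\mathbb{E}\br{\boldsymbol{\epsilon}_k \mid \mathcal{G}_k} = \sum_{i=1}^b \pr{\Delta_\phi^{(i)}, \Delta_\beta^{(i)}} - \nabla f(\theta_{k-1})$ with the centred Monte Carlo errors $\varepsilon_{\theta,k}^{(i)}$ of \eqref{eq:epsilon_theta}. Using $p(\theta) = p(\phi)p(\beta)$ (so $\nabla_\phi \log p(\theta) = \nabla_\phi \log p(\phi)$ and the prior $p(z^{(i)}\mid\beta)$ does not depend on $\phi$), the definitions of $\Delta_\phi^{(i)}$ and $\Delta_\beta^{(i)}$ rewrite through $H^{(i)}_{\theta_{k-1}}$ so that the per-client average $M^{-1}\sum_{m=1}^M H^{(i)}_{\theta_{k-1}}(Z_k^{(i,m)})$ appears; subtracting the true gradients via $\nabla f = b^{-1}\sum_{i=1}^b \nabla f_i$ and the Fisher-identity form \eqref{eq:grad_f} of $\nabla f_i$ then reconstitutes the $\varepsilon_{\theta,k}^{(i)}$. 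Plugging this back into the tower identity reproduces the right-hand side of the claim, $\mathbb{E}\br{\ps{v_k}{\boldsymbol{\epsilon}_k}} = \sum_{i=1}^b \mathbb{E}\br{\ps{v_k}{\varepsilon_{\theta,k}^{(i)}}}$.

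The main obstacle is not any single estimate but the bookkeeping. I must justify the iterated conditioning so that $\mathscr{S}_i$ is averaged out before $\mathscr{C}_i$ — which hinges on the independence of $X_k^{(1)}, X_k^{(2)}$ from $\mathcal{G}_k$ — and, above all, track the normalisation constants, namely the factor $b$ distinguishing $H^{(i)}$ from the summands of $\Delta^{(i)}$ together with the $b^{-1}$ in $f = b^{-1}\sum_i f_i$, so that the recentred averages assemble into the per-client errors $\varepsilon_{\theta,k}^{(i)}$. It is worth noting that only the unbiasedness parts of \Cref{ass:compression} and \Cref{ass:A_k_supp2} (together with \Cref{ass:convex_set}) are used here; the variance bounds, the ergodicity conditions of \Cref{ass:markov_kernel} and the Lipschitz condition of \Cref{ass:gradient_H} play no role in this lemma.
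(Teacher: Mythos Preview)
Your proposal is correct and follows essentially the same approach as the paper: split $\boldsymbol{\epsilon}_k$ into a compression/participation part and the residual $\sum_i \varepsilon_{\theta,k}^{(i)}$, and use the unbiasedness in \Cref{ass:compression}-\ref{ass:compression:unbiased} and \Cref{ass:A_k_supp2} together with the tower property to kill the first part in expectation. The only presentational difference is that you condition on the enlarged $\sigma$-field $\mathcal{G}_k = \sigma(\mathcal{F}_{k-1}, \{Z_k^{(i,m)}\})$, whereas the paper conditions on $\mathcal{F}_{k-1}$ and lets the $Z_k$'s average out on both sides of the difference; your choice is arguably cleaner, and as you note both arguments actually yield equality, not merely the stated inequality.
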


\begin{proof}
  Let $a_k = \Pi_{\Theta}\pr{\theta_{k-1} - \eta_k \nabla f(\theta_{k-1})} - \thetas$, $a_k^{(\phi)} = \Pi_{\Phi}\pr{\phi_{k-1} - \eta_k \nabla_{\phi} f(\theta_{k-1})} - \phi_\star$ and $a_k^{(\beta)} = \Pi_{\mathsf{B}}\pr{\beta_{k-1} - \eta_k \nabla_{\beta} f(\theta_{k-1})} - \beta_\star$.
  We have
  \begin{align}
    \left\langle a_k, \boldsymbol{\epsilon}_k \right \rangle 
    &= \left\langle a_k^{(\phi)},\sum_{i=1}^b \bbr{\mathscr{S}_i\br{\mathscr{C}_i\pr{\frac{1}{M}\sum_{m=1}^M H_{\phi_{k-1}}^{(i)}(Z_k^{(i,m)}),X_k^{(i,1)}},X_k^{(i,2)}} -
    \frac{1}{M}\sum_{m=1}^M H_{\phi_{k-1}}^{(i)}(Z_k^{(i,m)})} \right \rangle \\
    &+ \sum_{i=1}^b\left\langle a_k^{(\phi)},\frac{1}{M}\sum_{m=1}^M H_{\phi_{k-1}}^{(i)}(Z_k^{(i,m)}) - \nabla_{\phi}f_i(\theta_{k-1})\right \rangle \\
    &+ \sum_{i=1}^b\left\langle a_k^{(\beta)},\frac{1}{M}\sum_{m=1}^M H_{\beta_{k-1}}^{(i)}(Z_k^{(i,m)}) - \nabla_{\beta}f_i(\theta_{k-1})\right \rangle \\
    &= \left\langle a_k^{(\phi)},\sum_{i=1}^b \mathscr{S}_i\br{\mathscr{C}_i\pr{\frac{1}{M}\sum_{m=1}^M H_{\phi_{k-1}}^{(i)}(Z_k^{(i,m)}),X_k^{(i,1)}},X_k^{(i,2)}} -
    \frac{1}{M}\sum_{m=1}^M H_{\phi_{k-1}}^{(i)}(Z_k^{(i,m)}) \right \rangle \\
    &+ \sum_{i=1}^b\left\langle a_k, \varepsilon_{\theta,k}^{(i)}\right \rangle \eqsp, \label{eq:control_inner}
  \end{align}
  where the last line follows from \eqref{eq:epsilon_theta}.
  Using \Cref{ass:compression} and \Cref{ass:A_k_supp}, we have
  \begin{align}
    &\mathbb{E}\br{\left\langle a_k^{(\phi)},\sum_{i=1}^b \mathscr{S}_i\br{\mathscr{C}_i\pr{\frac{1}{M}\sum_{m=1}^M H_{\phi_{k-1}}^{(i)}(Z_k^{(i,m)}),X_k^{(i,1)}},X_k^{(i,2)}} -
    \frac{1}{M}\sum_{m=1}^M H_{\phi_{k-1}}^{(i)}(Z_k^{(i,m)}) \right \rangle} \\
    &= \mathbb{E}\br{\left\langle a_k^{(\phi)},\sum_{i=1}^b \mathbb{E}^{\mathcal{F}_{k-1}}\br{\mathscr{S}_i\bbr{\mathscr{C}_i\pr{\frac{1}{M}\sum_{m=1}^M H_{\phi_{k-1}}^{(i)}(Z_k^{(i,m)}),X_k^{(i,1)}},X_k^{(i,2)}} -
    \frac{1}{M}\sum_{m=1}^M H_{\phi_{k-1}}^{(i)}(Z_k^{(i,m)})} \right \rangle} \\
    &= 0 \eqsp.
  \end{align}
  The proof is concluded by taking the expectation in \eqref{eq:control_inner} and using the previous result.
\end{proof}

Similar to \citet[Appendix C.3]{SOUL}, we now decompose the Monte Carlo error terms $\{\varepsilon_{\theta,k}^{(i)}\}_{i \in [b], k \in [K]}$ in order to end up with an upper bound on $\sum_{k=1}^K \eta_{k}\{f(\theta_{k}) - f(\thetas)\} / (\sum_{k=1}^K \eta_{k})$ which vanishes when $\lim_{k \rightarrow \infty} \eta_k = 0_+$ and $\lim_{k \rightarrow \infty} \gamma_k = 0_+$.

For any $\theta \in \Theta$ and $\gamma \in (0,\bar{\gamma}]$, let for any $i\in [b]$, a function $\hat{H}^{(i)}_{\gamma,\theta}: \R^d \rightarrow \R^{d_\Theta}$ defined for any $z \in \R^d$ by
$$
\hat{H}^{(i)}_{\gamma,\theta}(z) = \sum_{j \in \N} \bbr{\br{R^{(i)}_{\gamma,\theta}}^jH^{(i)}_\theta(z) - \pi^{(i)}_{\gamma,\theta}(H^{(i)}_\theta)}\eqsp,
$$ 
where $R^{(i)}_{\gamma,\theta}$ is the Markov kernel associated with the discretized overdamped Langevin dynamics targetting $\pi^{(i)}_\theta$, and where $\pi^{(i)}_{\gamma,\theta}$ denotes the invariant distribution of $R^{(i)}_{\gamma,\theta}$.
By \Cref{ass:gradient_H} and \Cref{ass:markov_kernel}-\ref{ass:markov_kernel1}-\ref{ass:markov_kernel2}, for any $\theta \in \Theta$, $\gamma \in (0,\bar{\gamma}]$ and $i\in [b]$, $\hat{H}^{(i)}_{\gamma,\theta}$ is solution of the \emph{Poisson} equation defined by
\begin{align}
  (\mathrm{Id} - R^{(i)}_{\gamma,\theta})\hat{H}^{(i)}_{\gamma,\theta} = H_\theta - \pi^{(i)}_{\gamma,\theta}(H_\theta)\eqsp. \label{eq:Poisson_equation}
\end{align}

In addition, note that using \Cref{ass:markov_kernel}-\ref{ass:markov_kernel1} and \citet[Lemma 10]{SOUL}, it follows for any $\theta \in \Theta$, $i \in [b]$ and $z \in \R^d$ that
\begin{align}
  \norm{\hat{H}^{(i)}_{\gamma,\theta}(z)} \le C_{\hat{H}} \gamma^{-1} V^{1/4}(z)\eqsp, \label{eq:CH}
\end{align}
where $C_{\hat{H}} = 8 A_2 \log^{-1}(1/\rho)\rho^{-\bar{\gamma}/4}$.

Using \eqref{eq:Poisson_equation}, we can decompose the Monte Carlo error terms, for any $i \in [b], k \in [K]$ as $\varepsilon_{\theta,k}^{(i)} = (1/M)\sum_{m=1}^M\{\varepsilon_{\theta,k,m}^{(i,a)} + \varepsilon_{\theta,k,m}^{(i,b)} + \varepsilon_{\theta,k,m}^{(i,c)} + \varepsilon_{\theta,k,m}^{(i,d)}\}$ with, for any $m \in [M]$,
\begin{align}
  \varepsilon_{\theta,k,m}^{(i,a)} &= \hat{H}^{(i)}_{\gamma_{k-1},\theta_{k-1}}(Z_k^{(i,m)}) - R^{(i)}_{\gamma_{k-1},\theta_{k-1}}\hat{H}^{(i)}_{\gamma_{k-1},\theta_{k-1}}(Z_{k-1}^{(i,m)}) \label{eq:epsilon_a}\\
  \varepsilon_{\theta,k,m}^{(i,b)} &= R^{(i)}_{\gamma_{k-1},\theta_{k-1}}\hat{H}^{(i)}_{\gamma_{k-1},\theta_{k-1}} (Z_{k-1}^{(i,m)}) - R^{(i)}_{\gamma_{k},\theta_{k}}\hat{H}^{(i)}_{\gamma_{k},\theta_{k}}(Z_{k}^{(i,m)}) \\
  \varepsilon_{\theta,k,m}^{(i,c)} &= R^{(i)}_{\gamma_{k},\theta_{k}}\hat{H}^{(i)}_{\gamma_{k},\theta_{k}}(Z_{k}^{(i,m)}) - R^{(i)}_{\gamma_{k-1},\theta_{k-1}}\hat{H}^{(i)}_{\gamma_{k-1},\theta_{k-1}}(Z_{k}^{(i,m)}) \\
  \varepsilon_{\theta,k,m}^{(i,d)} &= \pi^{(i)}_{\gamma_{k-1},\theta_{k-1}}(H^{(i)}_{\theta_{k-1}}) - \pi^{(i)}_{\theta_{k-1}}(H^{(i)}_{\theta_{k-1}}) \eqsp.
\end{align}

The following lemmata aim at upper bounding these four error terms.

\begin{lemma}
  Assume \Cref{ass:convex_set}, \Cref{ass:function_f}, \Cref{ass:gradient_H} and \Cref{ass:markov_kernel}, and for any $\theta \in \Theta$, $z \in \R^d$ and $i\in[b]$, assume that $\|H^{(i)}_\theta(z)\| \le V^{1/4}(z)$. Then, for any $i \in [b]$, $m \in [M]$, $k \in \N^*$, we have
  \begin{align}
    \mathbb{E}\br{\norm{\varepsilon_{\theta,k,m}^{(i,a)}}^2} \le A_1 C_{\hat{H}}^2 \gamma_{k-1}^{-2}\mathbb{E}\br{V^{1/2}\pr{Z_0^{(i,m)})}}\eqsp,
  \end{align}
  where $C_{\hat{H}}$ is defined in \eqref{eq:CH}.
\end{lemma}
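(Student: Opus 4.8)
The plan is to exploit that the term $\varepsilon^{(i,a)}_{\theta,k,m}$ defined in \eqref{eq:epsilon_a} is, by construction of the warm-started chain in \eqref{eq:MC}, a $(\mathcal{F}_{k-1})_{k}$-martingale increment. Indeed, conditionally on $\mathcal{F}_{k-1}$ (which determines $Z_{k-1}^{(i,m)}$, $\gamma_{k-1}$ and $\theta_{k-1}$), the sample $Z_k^{(i,m)}$ is obtained by applying the kernel $R^{(i)}_{\gamma_{k-1},\theta_{k-1}}$ to $Z_{k-1}^{(i,m)}$, so that $\mathbb{E}[\hat{H}^{(i)}_{\gamma_{k-1},\theta_{k-1}}(Z_k^{(i,m)}) \mid \mathcal{F}_{k-1}] = R^{(i)}_{\gamma_{k-1},\theta_{k-1}}\hat{H}^{(i)}_{\gamma_{k-1},\theta_{k-1}}(Z_{k-1}^{(i,m)})$, which is precisely the quantity subtracted in \eqref{eq:epsilon_a}. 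I would then use the elementary identity that centering by a conditional expectation only decreases the second moment: writing $Y = \hat{H}^{(i)}_{\gamma_{k-1},\theta_{k-1}}(Z_k^{(i,m)})$ and $\mathbb{E}[Y\mid\mathcal{F}_{k-1}]$ for its conditional mean, the tower property gives $\mathbb{E}[\|Y - \mathbb{E}[Y\mid\mathcal{F}_{k-1}]\|^2] = \mathbb{E}[\|Y\|^2] - \mathbb{E}[\|\mathbb{E}[Y\mid\mathcal{F}_{k-1}]\|^2] \le \mathbb{E}[\|Y\|^2]$. This yields at once the crucial (and factor-of-two-free) bound $\mathbb{E}[\|\varepsilon^{(i,a)}_{\theta,k,m}\|^2] \le \mathbb{E}[\|\hat{H}^{(i)}_{\gamma_{k-1},\theta_{k-1}}(Z_k^{(i,m)})\|^2]$, which is why a crude triangle inequality is not used here.

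Next I would control the right-hand side using the a priori bound \eqref{eq:CH} on the solution of the Poisson equation \eqref{eq:Poisson_equation}. Since $\|\hat{H}^{(i)}_{\gamma,\theta}(z)\| \le C_{\hat{H}} \gamma^{-1} V^{1/4}(z)$, squaring and specialising to $\gamma = \gamma_{k-1}$, $\theta = \theta_{k-1}$ gives pointwise $\|\hat{H}^{(i)}_{\gamma_{k-1},\theta_{k-1}}(z)\|^2 \le C_{\hat{H}}^2 \gamma_{k-1}^{-2} V^{1/2}(z)$, whence $\mathbb{E}[\|\varepsilon^{(i,a)}_{\theta,k,m}\|^2] \le C_{\hat{H}}^2 \gamma_{k-1}^{-2}\, \mathbb{E}[V^{1/2}(Z_k^{(i,m)})]$. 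It remains to trade the running iterate $Z_k^{(i,m)}$ for the initial state $Z_0^{(i,m)}$, which is exactly the role of the stability condition \Cref{ass:markov_kernel}-\ref{ass:markov_kernel1}. That assumption is phrased for $V$ rather than $V^{1/2}$, so I would first transfer it: by Jensen's inequality applied to the concave map $t \mapsto t^{1/2}$ and the drift bound on $V$, one has $[Q^{(i)}_{\gamma_k,\theta_k}]^{p} V^{1/2} \le ([Q^{(i)}_{\gamma_k,\theta_k}]^{p} V)^{1/2} \le A_1^{1/2} V^{1/2} \le A_1 V^{1/2}$, the last step using $A_1 \ge 1$. Taking expectations then gives $\mathbb{E}[V^{1/2}(Z_k^{(i,m)})] \le A_1\, \mathbb{E}[V^{1/2}(Z_0^{(i,m)})]$, and combining the three displays delivers the claimed inequality.

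The main obstacle is the rigorous justification of the martingale-increment step, i.e. the bookkeeping of the filtration $(\mathcal{F}_k)$ and the verification that the one-step kernel relating $Z_{k-1}^{(i,m)}$ to $Z_k^{(i,m)}$ is exactly $R^{(i)}_{\gamma_{k-1},\theta_{k-1}}$; this is delicate because $\theta_k$ is itself built from $Z_k$ in \eqref{eq:global_SA_scheme}, so one must be careful that the kernel appearing in the $(i,a)$ term uses the \emph{lagged} parameters $(\gamma_{k-1},\theta_{k-1})$, with the discrepancy from $(\gamma_k,\theta_k)$ absorbed into the separate error terms $\varepsilon^{(i,b)}_{\theta,k,m}$ and $\varepsilon^{(i,c)}_{\theta,k,m}$. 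The remaining steps are routine applications of \eqref{eq:CH} and \Cref{ass:markov_kernel}-\ref{ass:markov_kernel1}, the only subtlety being the Jensen passage from $V$ to $V^{1/2}$ together with the use of $A_1 \ge 1$ to keep a single constant $A_1$ in the final bound.
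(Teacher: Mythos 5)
Your proposal is correct and takes essentially the same route as the paper: the paper's proof is a one-line citation of \citet[Lemma 14]{SOUL}, and that lemma's argument is exactly your chain of reasoning — the martingale-increment property of \eqref{eq:epsilon_a}, the conditional-variance (Pythagoras) identity that explains the absence of any factor $2$ or $4$ in the constant, the Poisson-solution bound \eqref{eq:CH}, and \Cref{ass:markov_kernel}-\ref{ass:markov_kernel1} combined with Jensen's inequality and $A_1 \ge 1$ to replace $\mathbb{E}[V^{1/2}(Z_k^{(i,m)})]$ by $A_1\mathbb{E}[V^{1/2}(Z_0^{(i,m)})]$. The only cosmetic fix needed is in your Jensen step: \Cref{ass:markov_kernel}-\ref{ass:markov_kernel1} is stated as a conditional-expectation bound given $Z_0^{(i,0)}$ rather than a pointwise drift inequality, so one should apply conditional Jensen to $\mathbb{E}[V(Z_k^{(i,m)})\mid Z_0^{(i,0)}] \le A_1 V(Z_0^{(i,0)})$ and then take expectations, which yields the same conclusion.
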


\begin{proof}
  The proof follows from \citet[Lemma 14]{SOUL}.
\end{proof}

\begin{lemma}
  \label{lemma:epsilon_b}
  Assume \Cref{ass:convex_set}, \Cref{ass:function_f}, \Cref{ass:markov_kernel} and for any $\theta \in \Theta$, $z \in \R^d$ and $i\in[b]$, assume that $\|H^{(i)}_\theta(z)\| \le V^{1/4}(z)$. Then, for any $i \in [b]$, $m \in [M]$, $k \in \N^*$, we have
  \begin{align}
    &\mathbb{E}\br{\norm{\sum_{k=1}^{K} \eta_{k} \langle \Pi_{\Theta}\pr{\theta_{k-1} - \eta_{k} \nabla f(\theta_{k-1})} - \thetas, \varepsilon_{\theta,k,m}^{(i,b)} \rangle }} \\
    &\le C_3^{(i,m)} \br{\sum_{k=1}^{K} |\eta_{k} - \eta_{k-1}|\gamma_{k-1}^{-1} + \sum_{k=1}^{K} \eta_{k}^2 \gamma_{k-1}^{-1} + \eta_K/\gamma_K - \eta_1/\gamma_1} \eqsp,
  \end{align}
  where, for any $i \in [b]$ and $m \in [M]$,
  \begin{equation}
    C_3^{(i,m)} = A_1 C_{\hat{H}} (2R_\Theta(2 + L_f) + 1 + \eta_1 L_f)\mathbb{E}\br{V^{1/4}(Z_0^{(i,m)})}\eqsp. \label{eq:C3} 
  \end{equation}
\end{lemma}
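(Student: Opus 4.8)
The plan is to exploit the fact that $\varepsilon_{\theta,k,m}^{(i,b)}$ is, by construction, a \emph{telescoping increment}. Writing $u_k := R^{(i)}_{\gamma_k,\theta_k}\hat{H}^{(i)}_{\gamma_k,\theta_k}(Z_k^{(i,m)})$ and $a_k := \Pi_{\Theta}\pr{\theta_{k-1} - \eta_k \nabla f(\theta_{k-1})} - \thetas$, the definition of $\varepsilon_{\theta,k,m}^{(i,b)}$ reads $\varepsilon_{\theta,k,m}^{(i,b)} = u_{k-1} - u_k$, so that the quantity to control is $S := \sum_{k=1}^K \langle \eta_k a_k, u_{k-1}-u_k\rangle$. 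My first move is to apply Abel summation by parts to shift the increment off $u$ and onto the weights $\eta_k a_k$, obtaining
$$
S = \eta_1\langle a_1, u_0\rangle - \eta_K\langle a_K, u_K\rangle + \sum_{k=1}^{K-1}\langle \eta_{k+1}a_{k+1}-\eta_k a_k, u_k\rangle\eqsp.
$$
The two boundary terms will account for the $\eta_K/\gamma_K - \eta_1/\gamma_1$ contribution, while the summed differences will produce the two series in the statement.

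Next I bound each inner product by Cauchy--Schwarz. For $\|u_k\|$ I would invoke the a priori estimate \eqref{eq:CH}, $\|\hat{H}^{(i)}_{\gamma,\theta}(z)\| \le C_{\hat{H}}\gamma^{-1}V^{1/4}(z)$, together with the stability of the kernel $R^{(i)}_{\gamma_k,\theta_k}$, so that $\|u_k\|$ is controlled by $C_{\hat{H}}\gamma_k^{-1}V^{1/4}(Z_k^{(i,m)})$; this is the source of the inverse step-size factors $\gamma_{k-1}^{-1}$. For the boundary terms I use $\|a_k\|\le 2R_\Theta$ (by \Cref{ass:convex_set} and non-expansiveness of $\Pi_{\Theta}$, since $\thetas\in\Theta$). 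For the summed differences I split $\eta_{k+1}a_{k+1}-\eta_k a_k = (\eta_{k+1}-\eta_k)a_{k+1} + \eta_k(a_{k+1}-a_k)$: the first piece, again bounded via $\|a_{k+1}\|\le 2R_\Theta$, yields $\sum_k|\eta_k-\eta_{k-1}|\gamma_{k-1}^{-1}$, whereas for the second I estimate $\|a_{k+1}-a_k\|$ using the non-expansiveness of $\Pi_{\Theta}$ and the $L_f$-Lipschitzness of $\nabla f$ (\Cref{ass:function_f}) to expose the factor producing $\sum_k\eta_k^2\gamma_{k-1}^{-1}$. Throughout, $\mathbb{E}[V^{1/4}(Z_k^{(i,m)})]$ is controlled by $A_1\mathbb{E}[V^{1/4}(Z_0^{(i,m)})]$ via the uniform moment bound of \Cref{ass:markov_kernel}-\ref{ass:markov_kernel1}, which assembles the constant $C_3^{(i,m)}$ of \eqref{eq:C3}.

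The main obstacle is the bookkeeping rather than any single hard estimate: one must track indices through the summation by parts so the boundary terms line up exactly as $\eta_K/\gamma_K - \eta_1/\gamma_1$ and the weight-difference sum splits cleanly, and one must bound $\|a_{k+1}-a_k\|$ \emph{without} reintroducing higher moments of $V$, so that only $\mathbb{E}[V^{1/4}(Z_0^{(i,m)})]$ (and not $\mathbb{E}[V^{1/2}]$) survives in $C_3^{(i,m)}$. Since the per-step control of the Poisson solution $\hat{H}^{(i)}_{\gamma,\theta}$ is identical to the single-client case, I expect the argument to follow the template of the corresponding estimate in \citet{SOUL}, the only genuinely new ingredient being the uniformity over clients $i\in[b]$ and local samples $m\in[M]$, which is absorbed by the suprema appearing in the statement.
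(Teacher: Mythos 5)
Your proposal is correct and follows essentially the same route as the paper: the paper's entire proof of \Cref{lemma:epsilon_b} is the citation ``\citet[Lemma 15]{SOUL}'', and the argument behind that citation is precisely your scheme --- exploit the telescoping form $\varepsilon_{\theta,k,m}^{(i,b)}=u_{k-1}-u_k$, apply Abel summation by parts to move the increment onto the weights $\eta_k a_k$, bound the boundary terms and weight differences via non-expansiveness of $\Pi_\Theta$, the $L_f$-Lipschitzness of $\nabla f$, the Poisson-solution estimate \eqref{eq:CH}, and the moment bound of \Cref{ass:markov_kernel}-\ref{ass:markov_kernel1}. The bookkeeping concerns you flag (index alignment for the $\eta_K/\gamma_K-\eta_1/\gamma_1$ terms, and keeping only $\mathbb{E}[V^{1/4}(Z_0^{(i,m)})]$ in $C_3^{(i,m)}$) are exactly the details the cited lemma handles, so your reconstruction is faithful to, and indeed more explicit than, what the paper itself provides.
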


\begin{proof}
  The proof follows from \citet[Lemma 15]{SOUL}.
\end{proof}

\begin{lemma}
  \label{lemma:epsilon_c}
  Assume  \Cref{ass:convex_set}, \Cref{ass:function_f}, \Cref{ass:gradient_H}, \Cref{ass:markov_kernel} and \Cref{ass:markov_kernel_bis}.
  In addition, for any $\theta \in \Theta$, $z \in \R^d$ and $i\in[b]$, assume that $\|H^{(i)}_\theta(z)\| \le V^{1/4}(z)$.
  Then, for any $i \in [b]$, $m \in [M]$, $k \in \N^*$, we have
  \begin{align}
    \mathbb{E}\br{\norm{\varepsilon_{\theta,k,m}^{(i,c)}}} \le A_1 \mathbb{E}\br{V(Z_0^{(i,m)})} C_{c,2} \gamma_{k}^{-1}\br{\gamma_k^{-1}\bbr{\boldsymbol{\Gamma}_1(\gamma_{k-1},\gamma_k) + \boldsymbol{\Gamma}_2(\gamma_{k-1},\gamma_k)\eta_k} + \eta_k}\eqsp,
  \end{align}
  where 
  \begin{align}
    C_{c,2} &= 4 A_2 \log^{-1}(1/\rho)\rho^{-\bar{\gamma}/2} \max\{L_H C_{c,1} + 2A_2 \log^{-1}(1/\rho)\rho^{-\bar{\gamma}/2}\} \eqsp, \\
    C_{c,1} &= 4 A_1 A_2 \log^{-1}(1/\rho)\rho^{-\bar{\gamma}/2} \mathbb{E}\br{V(Z_0^{(i,m)})} \eqsp.
  \end{align}
\end{lemma}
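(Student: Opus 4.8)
The plan is to mirror the argument used for the analogous term in \citet[Lemma 16]{SOUL}, adapted to the per-client chain: since $\varepsilon_{\theta,k,m}^{(i,c)}$ involves neither the compression operators $\{\mathscr{C}_i\}$ nor the participation operators $\{\mathscr{S}_i\}$, the federated ingredients are irrelevant here and the estimate is purely a statement about a single Langevin kernel $R^{(i)}_{\gamma,\theta}$ and its associated solution of the Poisson equation \eqref{eq:Poisson_equation}. Abbreviating $z=Z_k^{(i,m)}$, I would first isolate the two sources of variation by writing
\begin{equation*}
\varepsilon_{\theta,k,m}^{(i,c)} = \underbrace{\pr{R^{(i)}_{\gamma_k,\theta_k}-R^{(i)}_{\gamma_{k-1},\theta_{k-1}}}\hat H^{(i)}_{\gamma_k,\theta_k}(z)}_{T_1} + \underbrace{R^{(i)}_{\gamma_{k-1},\theta_{k-1}}\pr{\hat H^{(i)}_{\gamma_k,\theta_k}-\hat H^{(i)}_{\gamma_{k-1},\theta_{k-1}}}(z)}_{T_2}\eqsp,
\end{equation*}
so that $T_1$ captures the change in the kernel with the Poisson solution frozen, and $T_2$ captures the change in the Poisson solution itself.

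The term $T_1$ is immediate: by duality of the $V^{1/4}$-norm, $\norm{T_1}\le \norm{\updelta_z R^{(i)}_{\gamma_k,\theta_k}-\updelta_z R^{(i)}_{\gamma_{k-1},\theta_{k-1}}}_{V^{1/4}}\norm{\hat H^{(i)}_{\gamma_k,\theta_k}}_{V^{1/4}}$, and I would bound the first factor by \Cref{ass:markov_kernel_bis} with $a=1/4$ and the second by $C_{\hat H}\gamma_k^{-1}$ using \eqref{eq:CH}. The core of the proof is $T_2$, i.e.\ the regularity of $\gamma,\theta\mapsto\hat H^{(i)}_{\gamma,\theta}$. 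Using the series representation $\hat H^{(i)}_{\gamma,\theta}=\sum_{j\ge0}\pr{[R^{(i)}_{\gamma,\theta}]^jH^{(i)}_\theta-\pi^{(i)}_{\gamma,\theta}(H^{(i)}_\theta)}$, I would write each summand's difference as $([R_{2}]^j-[R_{1}]^j)H_{\theta_2}+[R_{1}]^j(H_{\theta_2}-H_{\theta_1})$ (with $R_1=R^{(i)}_{\gamma_{k-1},\theta_{k-1}}$, $R_2=R^{(i)}_{\gamma_k,\theta_k}$) and expand the first piece through the telescoping identity $[R_2]^j-[R_1]^j=\sum_{l=0}^{j-1}[R_2]^l(R_2-R_1)[R_1]^{j-1-l}$. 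The single-step difference $R_2-R_1$ is controlled by \Cref{ass:markov_kernel_bis} ($\boldsymbol{\Gamma}_1+\boldsymbol{\Gamma}_2\norm{\theta_k-\theta_{k-1}}$), while the surrounding kernel powers are contracted towards $\pi^{(i)}_{\gamma,\theta}$ at the geometric rate $\rho^{\cdot\gamma}$ of \Cref{ass:markov_kernel}-\ref{ass:markov_kernel2}; summing the geometric series once over $l$ and once over $j$ each produces a factor of order $(\gamma_k\log(1/\rho))^{-1}$, which is exactly how the $\gamma_k^{-2}$ blow-up appears. The complementary piece $H_{\theta_2}-H_{\theta_1}$ is handled by the Lipschitz bound $L_H\norm{\theta_k-\theta_{k-1}}V^{1/2}$ of \Cref{ass:gradient_H}-\ref{ass:gradient_H_2} and carries only a single summation, hence the lower-order $\gamma_k^{-1}$ scaling; the normalising terms $\pi^{(i)}_{\gamma,\theta}(H^{(i)}_\theta)$ are treated analogously using $\pi^{(i)}_{\gamma,\theta}(V)\le A_3$. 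Collecting these contributions yields $\norm{\hat H^{(i)}_{\gamma_k,\theta_k}-\hat H^{(i)}_{\gamma_{k-1},\theta_{k-1}}}_{V^{1/4}}\lesssim\bbr{\gamma_k^{-2}\pr{\boldsymbol{\Gamma}_1+\boldsymbol{\Gamma}_2\norm{\theta_k-\theta_{k-1}}}+\gamma_k^{-1}L_H\norm{\theta_k-\theta_{k-1}}}$, with the first-order constant $C_{c,1}$ feeding into $C_{c,2}$ precisely as in the statement.

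It then remains to pass to expectations. Applying $R^{(i)}_{\gamma_{k-1},\theta_{k-1}}$ in $T_2$ preserves the $V^{1/4}$-estimate up to the uniform moment control $\mathbb{E}[V(Z_k^{(i,m)})\mid Z_0^{(i,m)}]\le A_1V(Z_0^{(i,m)})$ of \Cref{ass:markov_kernel}-\ref{ass:markov_kernel1}, which is what produces the prefactor $A_1\mathbb{E}[V(Z_0^{(i,m)})]$. To convert $\norm{\theta_k-\theta_{k-1}}$ into $\eta_k$, I would use the non-expansiveness of $\Pi_\Theta$ together with \eqref{eq:global_SA_scheme} to get $\norm{\theta_k-\theta_{k-1}}\le\eta_k\norm{\boldsymbol{\Delta}_{\theta_{k-1}}}$, and bound the resulting $V$-weighted moment via $\|H^{(i)}_\theta\|\le V^{1/4}$ and \Cref{ass:markov_kernel}-\ref{ass:markov_kernel1}; combining with the $T_1$ bound and grouping the $\gamma_k^{-2}$ and $\gamma_k^{-1}$ contributions gives exactly $A_1\mathbb{E}[V(Z_0^{(i,m)})]C_{c,2}\gamma_k^{-1}\br{\gamma_k^{-1}(\boldsymbol{\Gamma}_1+\boldsymbol{\Gamma}_2\eta_k)+\eta_k}$. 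The main obstacle is the $T_2$ estimate: keeping exact track of the $V$-exponents ($V^{1/4}$, $V^{1/2}$, $V$) as they are moved through kernel powers and invariant measures, and verifying that the double summation against the $\rho^{\cdot\gamma}$ contraction indeed produces the advertised $\gamma_k^{-2}$ rate with the stated constants $C_{c,1},C_{c,2}$, is the delicate bookkeeping; the federated layer does not intervene and only resurfaces when this lemma is later combined, through $\boldsymbol{\epsilon}_k$, in \Cref{thm:general}.
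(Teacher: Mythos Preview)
Your proposal is correct and follows essentially the same approach as the paper, which simply states that ``the proof follows from \citet[Lemma 16]{SOUL}''; you have spelled out the argument that this citation encapsulates, correctly noting that the federated layer plays no role for this term and that the estimate reduces to the single-chain Poisson-solution regularity analysis of SOUL.
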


\begin{proof}
  The proof follows from \citet[Lemma 16]{SOUL}.
\end{proof}

\begin{lemma}
  \label{lemma:epsilon_d}
  Assume \Cref{ass:convex_set}, \Cref{ass:function_f}, \Cref{ass:markov_kernel} and for any $\theta \in \Theta$, $z \in \R^d$ and $i\in[b]$, assume that $\|H^{(i)}_\theta(z)\| \le V^{1/4}(z)$. Then, for any $i \in [b]$, $m \in [M]$, $k \in \N^*$, we have
  \begin{align}
    \mathbb{E}\br{\norm{\varepsilon_{\theta,k,m}^{(i,d)}}} \le \boldsymbol{\Psi}(\gamma_{k-1})\eqsp.
  \end{align}

\end{lemma}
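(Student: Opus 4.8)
The plan is to recognise $\varepsilon_{\theta,k,m}^{(i,d)}$ as a weighted-total-variation quantity and to invoke directly \Cref{ass:markov_kernel}-\ref{ass:markov_kernel3}. Writing $\xi = \pi^{(i)}_{\gamma_{k-1},\theta_{k-1}} - \pi^{(i)}_{\theta_{k-1}}$, which is a finite signed measure given $\mathcal{F}_{k-1}$ (it depends only on the realisation of $\theta_{k-1}$ and on the deterministic step-size $\gamma_{k-1}$), the definition of the error term reads $\varepsilon_{\theta,k,m}^{(i,d)} = \xi(H^{(i)}_{\theta_{k-1}})$, a vector in $\R^{d_\Theta}$ that does not actually depend on $m$.

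First I would pass from the vector-valued integral to scalar integrals by duality. For any fixed unit vector $u \in \R^{d_\Theta}$, the scalar function $z \mapsto \ps{u}{H^{(i)}_{\theta_{k-1}}(z)}$ satisfies, by Cauchy--Schwarz together with the standing growth assumption $\|H^{(i)}_\theta(z)\| \le V^{1/4}(z)$ and the fact that $V \ge 1$,
$$
\absn{\ps{u}{H^{(i)}_{\theta_{k-1}}(z)}} \le \norm{H^{(i)}_{\theta_{k-1}}(z)} \le V^{1/4}(z) \le V^{1/2}(z)\eqsp,
$$
so that $\Vnorm[V^{1/2}]{\ps{u}{H^{(i)}_{\theta_{k-1}}}} \le 1$. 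Consequently the scalar $\ps{u}{\xi(H^{(i)}_{\theta_{k-1}})} = \xi\prn{\ps{u}{H^{(i)}_{\theta_{k-1}}}}$ is bounded in absolute value by $\Vnorm[V^{1/2}]{\xi}$, uniformly in $u$, and taking the supremum over unit vectors $u$ yields the pathwise bound $\norm{\varepsilon_{\theta,k,m}^{(i,d)}} = \norm{\xi(H^{(i)}_{\theta_{k-1}})} \le \Vnorm[V^{1/2}]{\xi}$.

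Then I would apply \Cref{ass:markov_kernel}-\ref{ass:markov_kernel3} with $\gamma = \gamma_{k-1}$ and $\theta = \theta_{k-1}$, which gives $\Vnorm[V^{1/2}]{\xi} \le \boldsymbol{\Psi}(\gamma_{k-1})$ for every realisation of $\theta_{k-1}$; since $\gamma_{k-1}$ is deterministic, the right-hand side is deterministic, and taking expectations preserves the bound $\mathbb{E}\br{\norm{\varepsilon_{\theta,k,m}^{(i,d)}}} \le \boldsymbol{\Psi}(\gamma_{k-1})$. There is no genuine obstacle here: the only point requiring care is the reduction from the vector-valued $H^{(i)}_\theta$ to the scalar weighted-total-variation definition, which the duality argument handles without introducing any dimensional constant, so that the clean bound $\boldsymbol{\Psi}(\gamma_{k-1})$ is recovered exactly.
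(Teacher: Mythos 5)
Your proof is correct and is in substance the same argument as the paper's: the paper disposes of this lemma with a one-line citation to \citet[Lemma 17]{SOUL}, and the content of that cited lemma is exactly the duality pairing you make explicit, namely that the growth condition $\|H^{(i)}_{\theta}\| \le V^{1/4} \le V^{1/2}$ gives $\Vnorm[V^{1/2}]{\langle u, H^{(i)}_{\theta_{k-1}}\rangle} \le 1$ for every unit vector $u$, so that $\|\varepsilon_{\theta,k,m}^{(i,d)}\| \le \Vnorm[V^{1/2}]{\pi^{(i)}_{\gamma_{k-1},\theta_{k-1}} - \pi^{(i)}_{\theta_{k-1}}} \le \boldsymbol{\Psi}(\gamma_{k-1})$ pathwise by \Cref{ass:markov_kernel}-\ref{ass:markov_kernel3} (applicable since $\theta_{k-1} \in \Theta$ by the projection step), and the expectation bound follows trivially. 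The only difference is presentational: your argument is self-contained, whereas the paper outsources it.
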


\begin{proof}

  The proof follows from \citet[Lemma 17]{SOUL}.

\end{proof}

\section{Application to \texttt{FedSOUL}}
\label{sec:fedsoul}

We now apply \Cref{thm:general} to \texttt{FedSOUL} where for any $i \in [b]$, $\gamma \in (0,\bar{\gamma}]$ and $\theta \in \Theta$, the Markov kernel $Q^{(i)}_{\gamma,\theta}$ is associated with a Gaussian probability density function $q^{(i)}_{\gamma,\theta}(z^{(i)},\cdot)$ with mean $z^{(i)} - \gamma \nabla_{z} \log p(z^{(i)} \mid \mathrm{D}_i, \theta)$ and variance $2\gamma \mathrm{I}_d$.
To this end, we show explicit conditions on the family of posterior distributions $\{\pi_\theta^{(i)}\}_{i \in [b]}$ such that \Cref{ass:markov_kernel} and \Cref{ass:markov_kernel_bis} are satisfied.

\subsection{Assumptions}

For any $i \in [b]$, let $U_\theta^{(i)}: \Rd \to \R$ such that for any $z^{(i)} \in \Rd$, $\pi_\theta^{(i)}(z^{(i)}) \propto \exp\{-U_\theta^{(i)}(z^{(i)})\}$. In our case, this boils down to set $U_\theta^{(i)}(z^{(i)}) = - \log p(z^{(i)} \mid \mathrm{D}_i, \phi, \beta)$ for any $z^{(i)} \in \Rd$.

\begin{assumptionsup}\label{ass:posterior_densities}
  For any $i\in[b]$, the following conditions hold.
  \begin{enumerate}[wide, labelwidth=!, labelindent=0pt,label=(\roman*),noitemsep,nolistsep]

    \item \label{ass:posterior_densities_1} Assume that $(\theta,z^{(i)}) \mapsto U_\theta(z^{(i)})$ is continuous, $z^{(i)} \mapsto U_\theta^{(i)}(z^{(i)})$ is differentiable for any $\theta_1,\theta_2 \in \Theta$ and there exists $\texttt{L} \ge 0$ such that for any $z_1,z_2 \in \Rd$,
    $$
    \sup_{\theta \in \Theta}\norm{\nabla_{z} U_{\theta}^{(i)}(z_2) - \nabla_{z} U_{\theta}^{(i)}(z_1)} \le \texttt{L} \norm{\theta_2 - \theta_1}\eqsp,
    $$
    and $\{\nabla_{z} U_{\theta}^{(i)}(0)\, :\, \theta \in \Theta\}$ is bounded.

    \item \label{ass:posterior_densities_2} There exist $\texttt{m}_1, \texttt{m}_2 >0$ and $\texttt{c},R \ge 0$ such that for any $\theta \in \Theta$ and $z \in \Rd$,
    $$
    \langle \nabla_{z} U_{\theta}^{(i)}(z),z \rangle \ge \texttt{m}_1\norm{z}\mathbf{1}_{\mathrm{B}(0,R)^c}(z) + \texttt{m}_2\norm{\nabla_{z} U_{\theta}^{(i)}(z)}^2 - \texttt{c}\eqsp.
    $$

    \item \label{ass:posterior_densities_3} There exists $L_U \ge 0$ such that $z \in \Rd$ and $\theta_1,\theta_2 \in \Theta$,
    $$
    \norm{\nabla_{z} U_{\theta_2}^{(i)}(z) - \nabla_{z} U_{\theta_1}^{(i)}(z)} \le L_U \norm{\theta_2 - \theta_1}V(z)^{1/2}\eqsp,
    $$
    where $V: \Rd \to \R$ is defined under \Cref{ass:posterior_densities}-\ref{ass:posterior_densities_2}, for any $z \in \Rd$, as
    \begin{equation}
      \label{eq:function_V}
      V(z) = \exp\bbr{\texttt{m}_1\sqrt{1+\norm{z}^2}/4}\eqsp.
    \end{equation}
    \end{enumerate}
\end{assumptionsup}

\subsection{Verification of \Cref{ass:markov_kernel} and \Cref{ass:markov_kernel_bis}}

\begin{lemma}
  \label{lemma:convex_error_term_1}
  Assume \Cref{ass:posterior_densities}.
  Then, \Cref{ass:markov_kernel} and \Cref{ass:markov_kernel_bis} are satisfied with $V$ defined in \eqref{eq:function_V} and 
    \begin{align}
      &\bar{\gamma} < \min\{1,2\mathtt{m}_2\} \eqsp, \label{eq:bar_gamma}\\
      &\tilde{\mathtt{m}}_1 = \mathtt{m}_1 /4 \eqsp, \label{eq:tilde_m1}\\ 
      &b = \tilde{\mathtt{m}}_1(d + \mathtt{c} + \sqrt{2}\tilde{\mathtt{m}}_1)\exp(\tilde{\mathtt{m}}_1^2\{(d+\mathtt{c}+\tilde{\mathtt{m}}_1\bar{\gamma} + \sqrt{1+\mathtt{r}^2}\}) \eqsp, \label{eq:b}\\
      &\lambda = \exp(-\tilde{\mathtt{m}}_1^2[\sqrt{2}-1]) \eqsp, \label{eq:lambda}\\
      &\mathtt{r} = \max\{1,2(d+\mathtt{c})/\mathtt{m}_1,R\} \label{eq:r}\eqsp, \\
      &\boldsymbol{\Gamma}_1: (\gamma_{1},\gamma_2) \mapsto \gamma_1/\gamma_2 - 1\eqsp, \\
      &\boldsymbol{\Gamma}_2: (\gamma_{1},\gamma_2) \mapsto \gamma_2^{1/2}\eqsp, \\
      &\boldsymbol{\Psi}: \gamma \mapsto 2 C (1-\xi)^{-1} \gamma^{1/2} \tilde{D}_1^{1/2}(1+\bar{\gamma})^{1/2} \bbr{d + 2 \bar{\gamma}\pr{\mathtt{L}^2 M_V + \sup_{\theta \in \Theta, i \in [b]}\norm{\nabla_z U^{(i)}_\theta(0)}^2}\tilde{D}_1}^{1/2}\mathtt{L}\eqsp, \\
      &\tilde{D}_1 = \frac{\sqrt{2}\mathtt{\tilde{m}}_1\exp(\mathtt{\tilde{m}}_1\sqrt{1 + \max\{1,R\}^2})(1 + \mathtt{\tilde{m}}_1 + \mathtt{c} + d)}{3 \mathtt{\tilde{m}}_1^2} + b \lambda^{-\bar{\gamma}}\log^{-1}(1/\lambda)\eqsp,
  \end{align}
  with $M_V = \sup_{z \in \R^d} \{(1 + \|z\|)^2/V(z)\}$, $C \ge 0$, $\xi \in (0,1)$.
\end{lemma}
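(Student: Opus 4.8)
The plan is to verify \Cref{ass:markov_kernel} and \Cref{ass:markov_kernel_bis} for the family of Gaussian (overdamped Langevin) kernels $\{Q^{(i)}_{\gamma,\theta}\}$ by specialising the drift-and-minorisation analysis of unadjusted Langevin dynamics developed in \citet{SOUL} to the present $\theta$-indexed setting, and by checking that \Cref{ass:posterior_densities} supplies exactly the hypotheses those arguments require, \emph{uniformly} over $\theta \in \Theta$ and $i \in [b]$. The crux is that the Lyapunov function $V(z) = \exp\{\mathtt{m}_1\sqrt{1+\norm{z}^2}/4\}$ from \eqref{eq:function_V} is tailored so that the coercivity condition \Cref{ass:posterior_densities}-\ref{ass:posterior_densities_2} on $\langle \nabla_z U^{(i)}_\theta(z),z\rangle$ yields a geometric Foster--Lyapunov drift for the Euler kernel.

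First I would establish the drift condition. Writing the one-step mean $z - \gamma\nabla_z U^{(i)}_\theta(z)$ and the Gaussian increment, a Taylor expansion combined with \Cref{ass:posterior_densities}-\ref{ass:posterior_densities_2} gives $Q^{(i)}_{\gamma,\theta} V \le \lambda^{\gamma} V + \gamma\, b$ for $\gamma \le \bar\gamma$, with $\bar\gamma$, $b$, $\lambda$ exactly as in \eqref{eq:bar_gamma}--\eqref{eq:lambda}; the restriction $\bar\gamma < \min\{1,2\mathtt{m}_2\}$ is dictated by the quadratic term $\mathtt{m}_2\norm{\nabla_z U^{(i)}_\theta}^2$ absorbing the discretisation overshoot. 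Iterating this drift yields the uniform bound $[Q^{(i)}_{\gamma,\theta}]^p V \le A_1 V$ of \Cref{ass:markov_kernel}-\ref{ass:markov_kernel1} as well as $\pi^{(i)}_{\gamma,\theta}(V) \le A_3$. Coupling the drift with a minorisation on the ball $\ball{0}{\mathtt{r}}$ (with $\mathtt{r}$ as in \eqref{eq:r}), which holds because the Gaussian kernel has a uniform lower density on compacts, a quantitative Harris-type theorem then delivers the $V$-geometric ergodicity $\Vnorm{\updelta_z[Q^{(i)}_{\gamma,\theta}]^k - \pi^{(i)}_{\gamma,\theta}} \le A_2 \rho^{k\gamma} V(z)$ of \Cref{ass:markov_kernel}-\ref{ass:markov_kernel2}.

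For \Cref{ass:markov_kernel}-\ref{ass:markov_kernel3} I would bound the bias $\Vnorm[V^{1/2}]{\pi^{(i)}_{\gamma,\theta} - \pi^{(i)}_\theta}$ between the invariant law of the unadjusted chain and the true posterior. Following the weak-error analysis of the Euler discretisation, the gradient-Lipschitz and at-origin boundedness hypotheses in \Cref{ass:posterior_densities}-\ref{ass:posterior_densities_1} control the local mean-squared displacement, and combining this with the already-established ergodicity gives the explicit $\gamma^{1/2}$-rate bound $\boldsymbol{\Psi}$, whose constant $\tilde{D}_1$ aggregates $\pi^{(i)}_\theta(V)$ with the drift constants. \Cref{ass:markov_kernel_bis} is then obtained from the explicit Gaussian form of $q^{(i)}_{\gamma,\theta}$: two transition densities differ only through their means $z-\gamma_j\nabla_z U^{(i)}_{\theta_j}(z)$ and variances $2\gamma_j\mathrm{I}_d$, so a direct comparison of Gaussians, using \Cref{ass:posterior_densities}-\ref{ass:posterior_densities_3} to bound $\norm{\nabla_z U^{(i)}_{\theta_2}(z) - \nabla_z U^{(i)}_{\theta_1}(z)} \le L_U\norm{\theta_2-\theta_1}V(z)^{1/2}$, separates the step-size contribution $\boldsymbol{\Gamma}_1(\gamma_1,\gamma_2) = \gamma_1/\gamma_2 - 1$ from the parameter contribution $\boldsymbol{\Gamma}_2(\gamma_1,\gamma_2) = \gamma_2^{1/2}$, the residual being absorbed into $V^{2a}(z)$.

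The main obstacle I anticipate is bookkeeping rather than conceptual: every constant ($A_1, A_2, A_3, \rho$, $\tilde{D}_1$, and the $\boldsymbol{\Gamma}_j, \boldsymbol{\Psi}$) must be verified \emph{independent of $\theta$ and $i$}, which forces all Langevin estimates to be taken as suprema over $\Theta$ — feasible precisely because \Cref{ass:posterior_densities} states its bounds uniformly in $\theta$ (via $\sup_{\theta}$ in \ref{ass:posterior_densities_1} and $\theta$-free constants $\mathtt{m}_1,\mathtt{m}_2,\mathtt{c},L_U$). The sharpest technical point is the discretisation-bias bound $\boldsymbol{\Psi}$: one must ensure the weak-error estimate is expressed in the $V^{1/2}$-norm rather than merely in total variation, and that the moment $M_V = \sup_z (1+\norm{z})^2/V(z)$ is finite, which holds because $V$ grows faster than any polynomial.
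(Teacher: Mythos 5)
Your proposal is correct and takes essentially the same approach as the paper: the paper's entire proof of this lemma is the single citation \citet[Theorem 5]{SOUL}, and the drift-and-minorisation argument, the $\gamma^{1/2}$ weak-error bound on $\|\pi^{(i)}_{\gamma,\theta}-\pi^{(i)}_\theta\|_{V^{1/2}}$, and the Gaussian kernel comparison yielding $\boldsymbol{\Gamma}_1,\boldsymbol{\Gamma}_2$ that you sketch are exactly the ingredients packaged in that cited theorem, applied uniformly over $\theta\in\Theta$ and $i\in[b]$ as you note. The only difference is that you reconstruct the underlying argument where the paper simply invokes it.
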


\begin{proof}
  The proof follows from \citet[Theorem 5]{SOUL}.
\end{proof}

\newpage
\section{Additional Experiments}

In this section, we provide additional experiments.
All the experimental details can be found in the ``code'' folder in the supplement.

\subsection{Synthetic datasets}

In this section, following the experiments from the main paper, we will show additional configurations of the toy example. 
We still use the same model (see Section~\ref{sec:experiments} and~ \citet{NEURIPS2021_5d44a2b0,pmlr-v139-collins21a}), but we choose different values of $(d, k, b)$.
First, let us test, how the total number of clients $b$ impacts the performances of the different approaches.
\Cref{fig:toy1} and \Cref{fig:toy2} depict our results for $b \in \{50,200\}$, with the size of minimal dataset being 5 and the share of clients with the minimal dataset 90\%.
We can see that in both cases, \texttt{FedSOUL} outperforms its competitors.

\begin{figure}
 \begin{center}
   \includegraphics[scale=0.23]{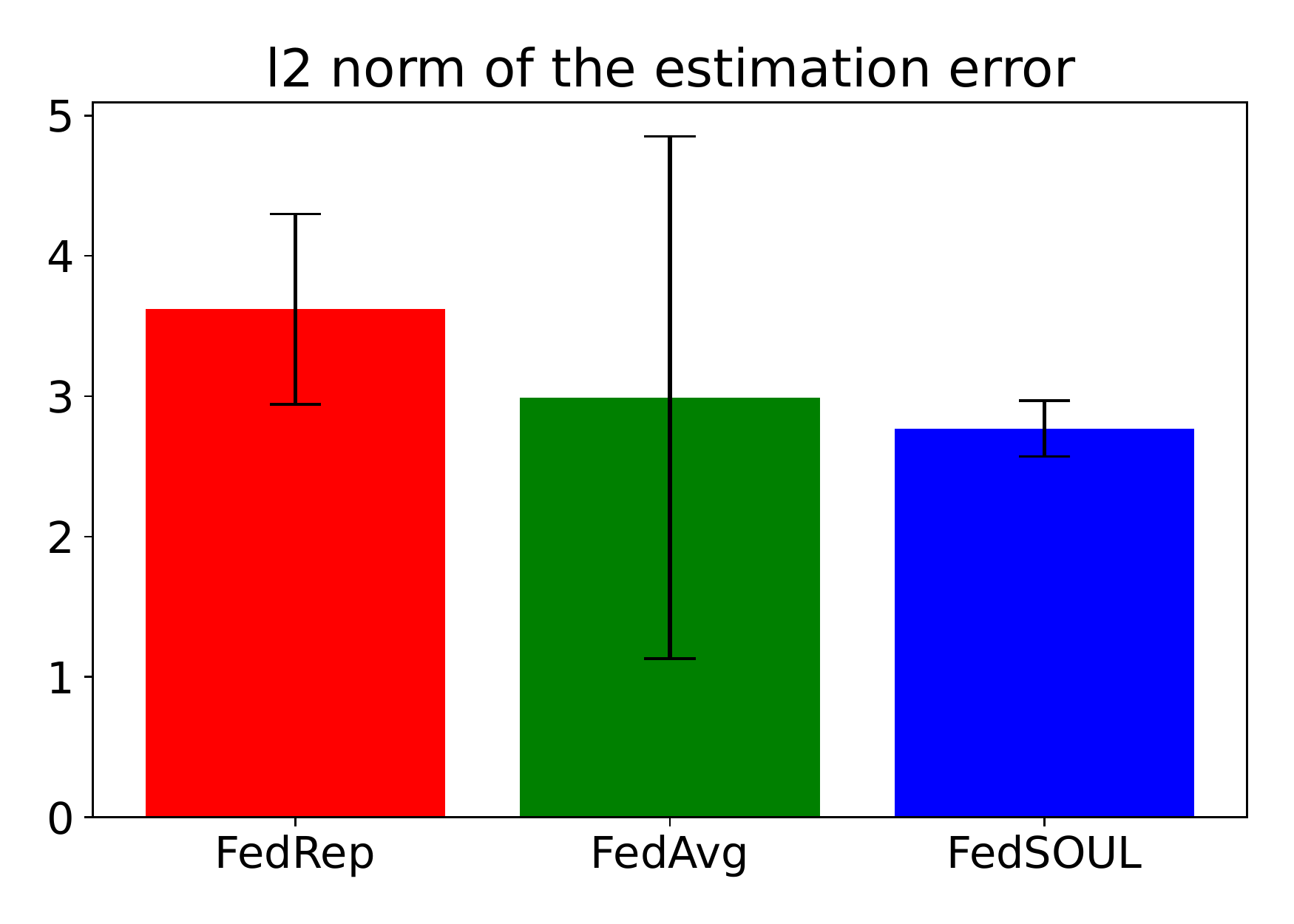}
   \includegraphics[scale=0.23]{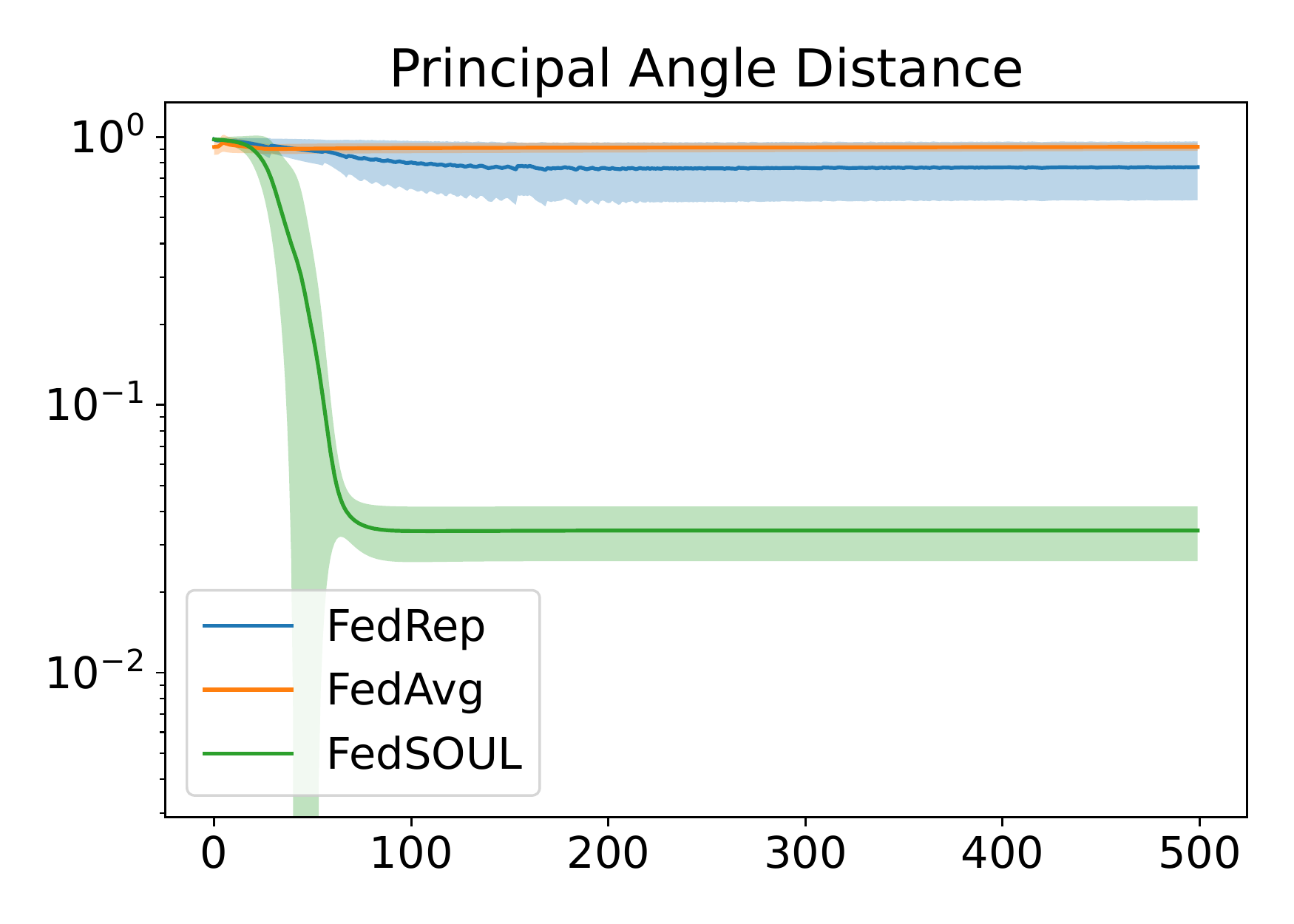}
 \end{center}
 \caption{Small data sets - synthetic data. $b=50$ clients.}
 \label{fig:toy1}
 \begin{center}
   \includegraphics[scale=0.23]{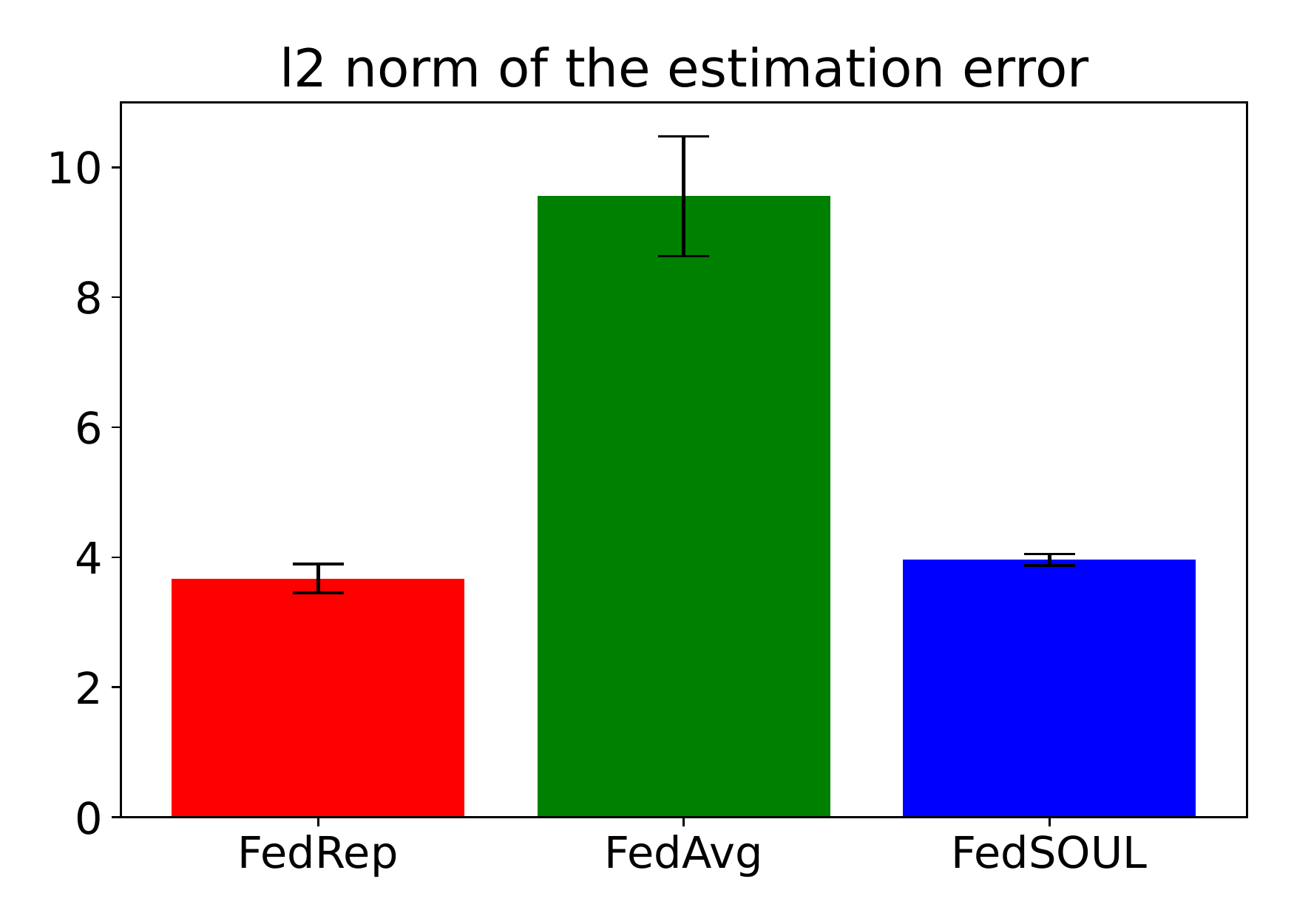}
   \includegraphics[scale=0.23]{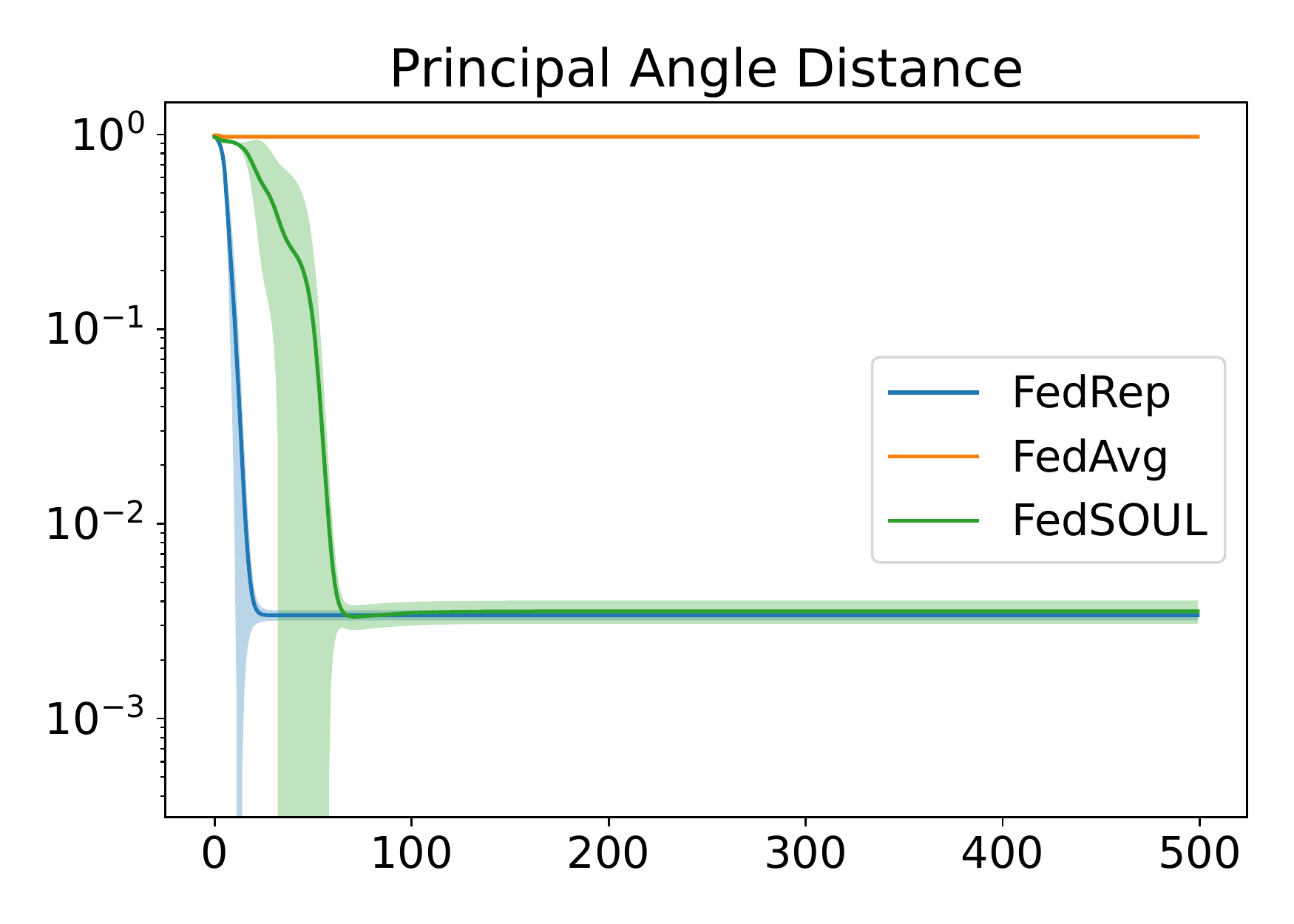}
 \end{center}
 \caption{Small data sets - synthetic data. $b=200$ clients.}
 \label{fig:toy2}
\end{figure}

Second, we test, how the dimensionality of raw data impacts on the result. 
\Cref{fig:toy3} and \Cref{fig:toy4} show our results with $k \in \{5,50\}$. All others parameters are the same as before.
% Again, the behaviour is expected. When dimensionality of raw data is big (effectively it means that we have to learn more complex projection matrix $\phi$), classical methods are inferior to Bayesian.
% In another case, they are close to each others. Note, that in both cases \texttt{FedSOUL} is better in terms of learning personal vectors $z^{(i)}$.

\begin{figure}
 \begin{center}
   \includegraphics[scale=0.23]{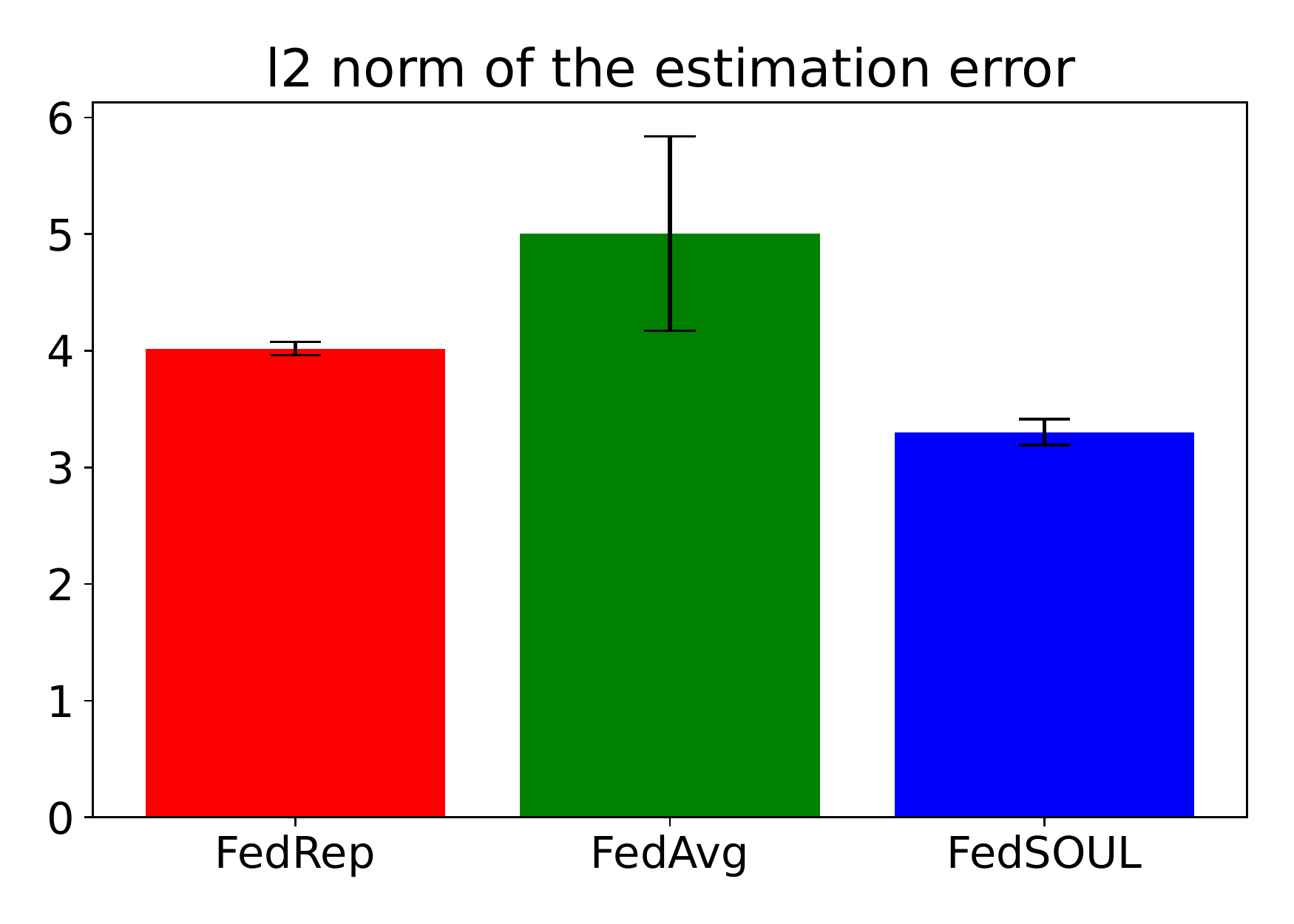}
   \includegraphics[scale=0.23]{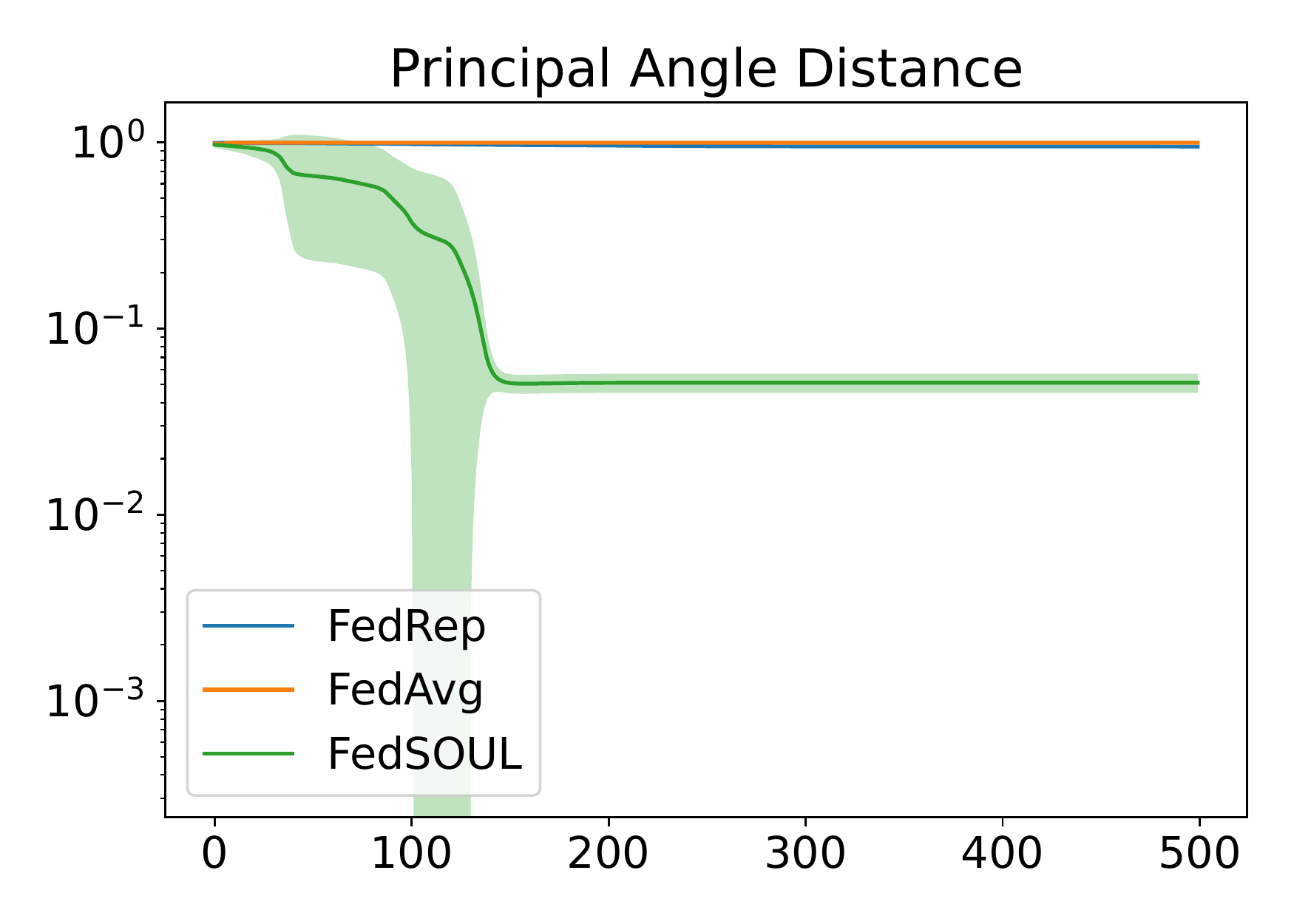}
 \end{center}
 \caption{Small data sets - synthetic data. Raw data dimensionality is $k=50$.}
 \label{fig:toy3}
 \begin{center}
   \includegraphics[scale=0.23]{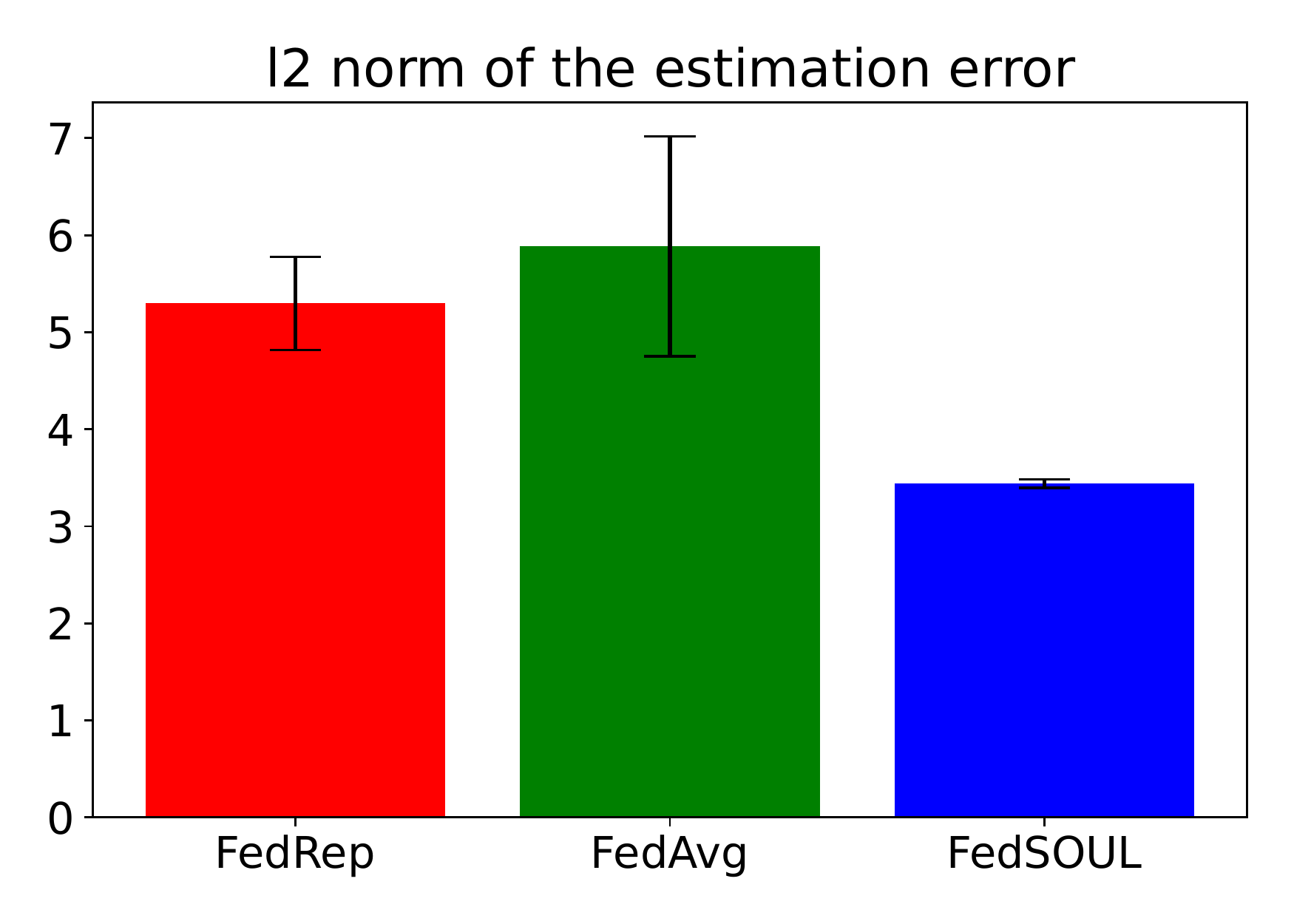}
   \includegraphics[scale=0.23]{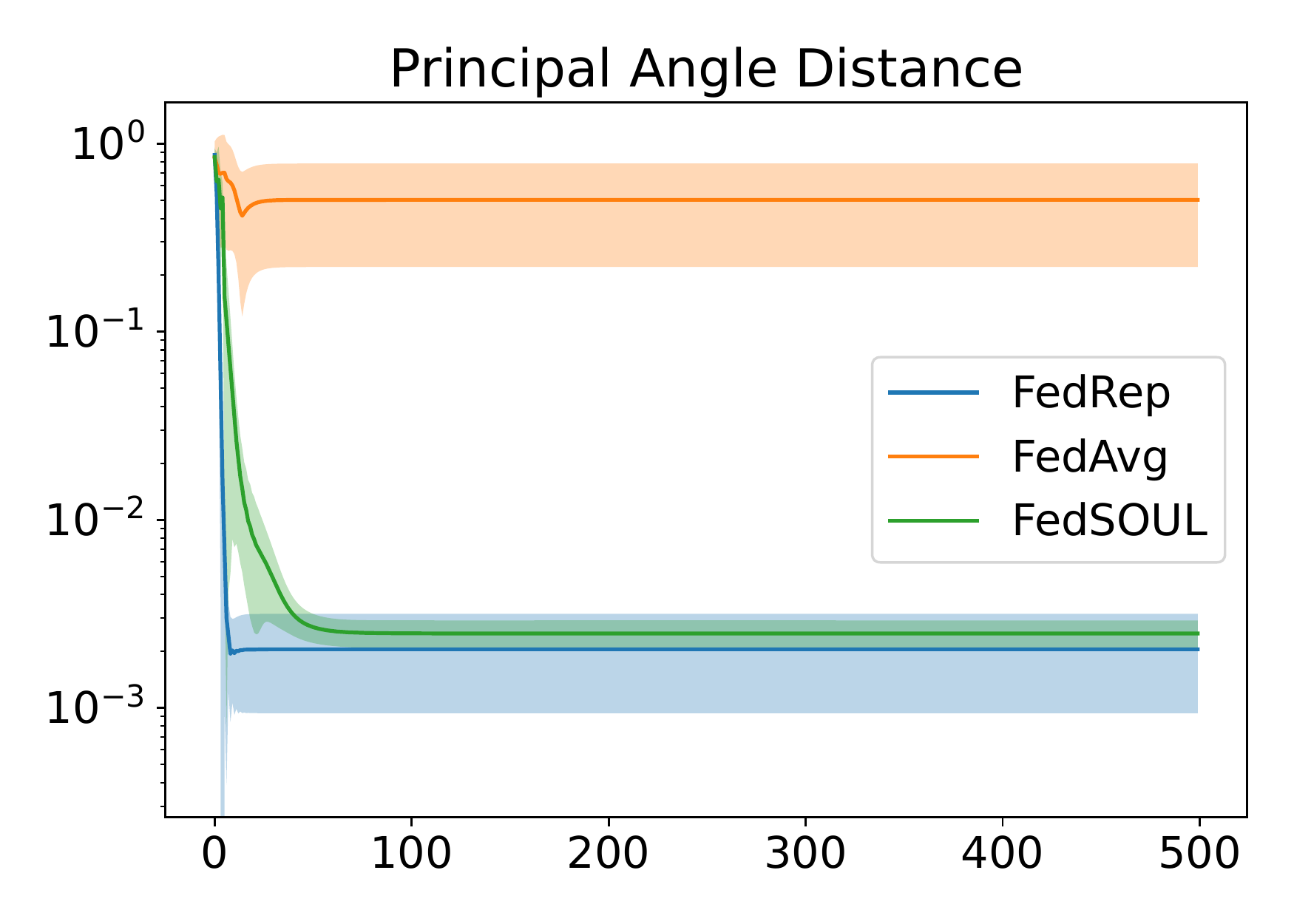}
 \end{center}
 \caption{Small data sets - synthetic data. Raw data dimensionality is $k=5$.}
 \label{fig:toy4}
\end{figure}

One more experiment we conducted is the dependence on latent dimensionality $d$.
We test two options $d=2$ (as in original experiments) and $d=5$ in \Cref{fig:toy5} and \Cref{fig:toy6}.
Again, the more parameters we have to learn (given the same small data budget), the better Bayesian methods (\emph{i.e.} \texttt{FedSOUL}) are better.

\begin{figure}
 \begin{center}
   \includegraphics[scale=0.23]{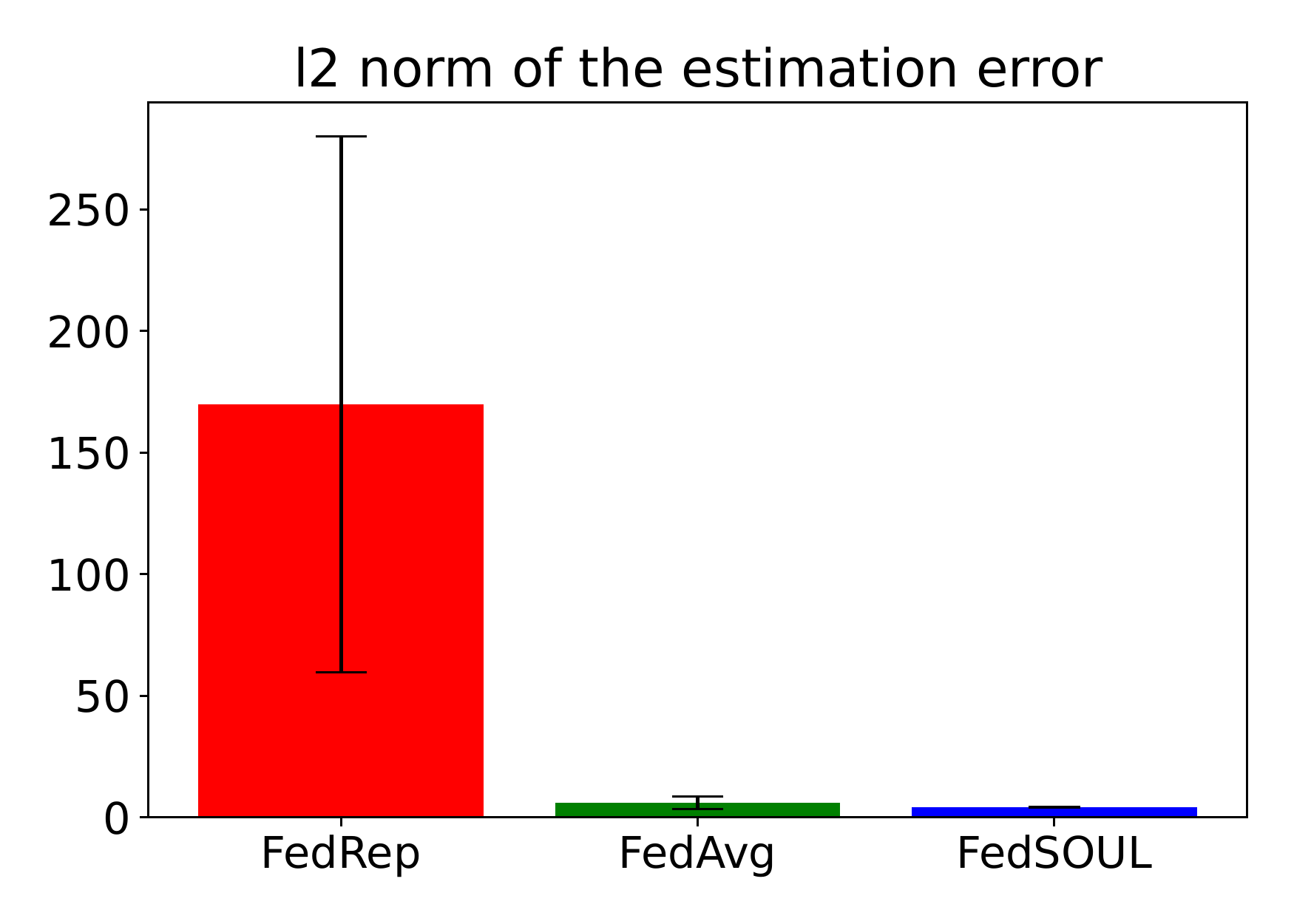}
   \includegraphics[scale=0.23]{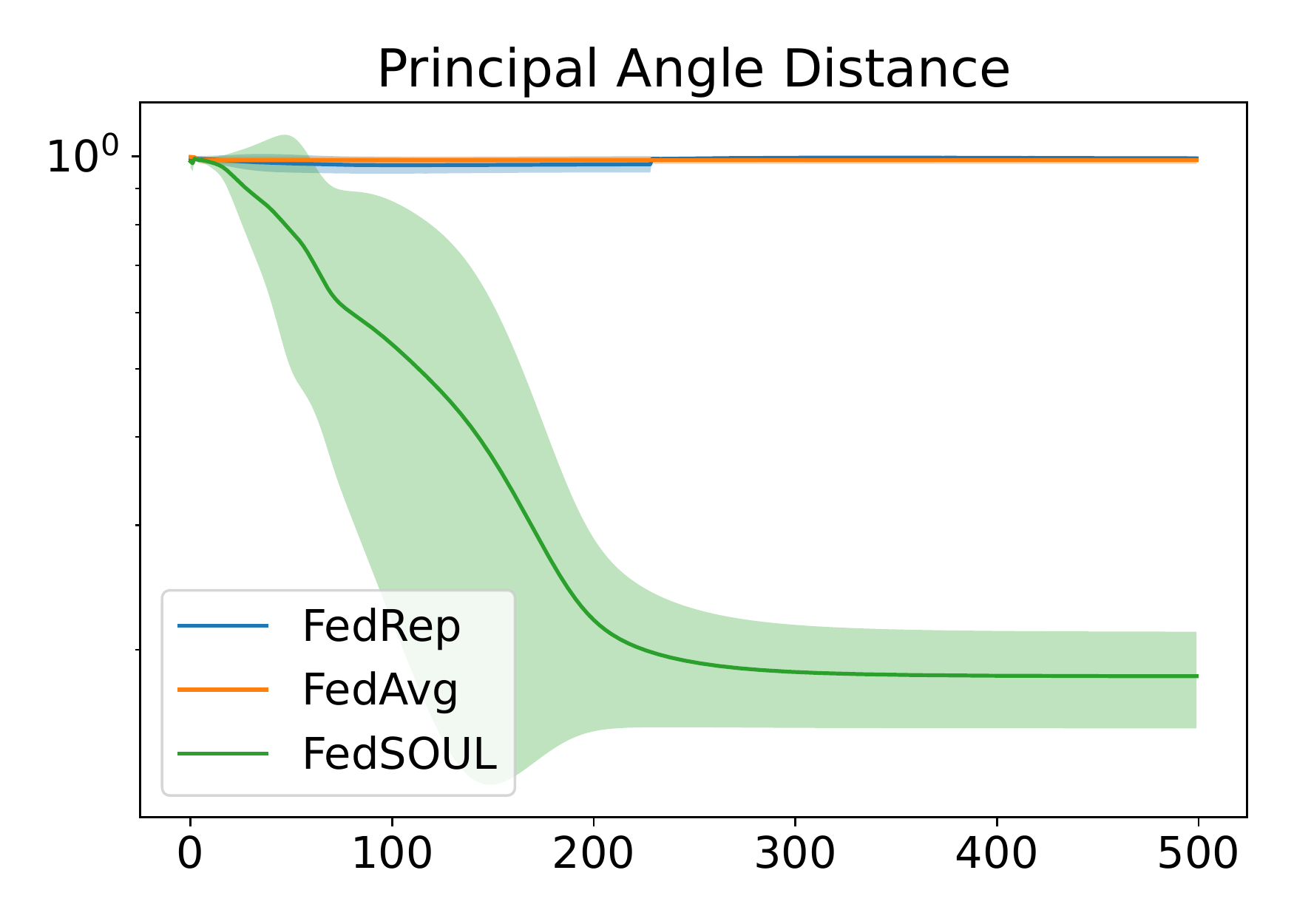}
 \end{center}
 \caption{Small data sets - synthetic data. Latent space dimensionality is $d=5$.}
 \label{fig:toy5}
 \begin{center}
   \includegraphics[scale=0.23]{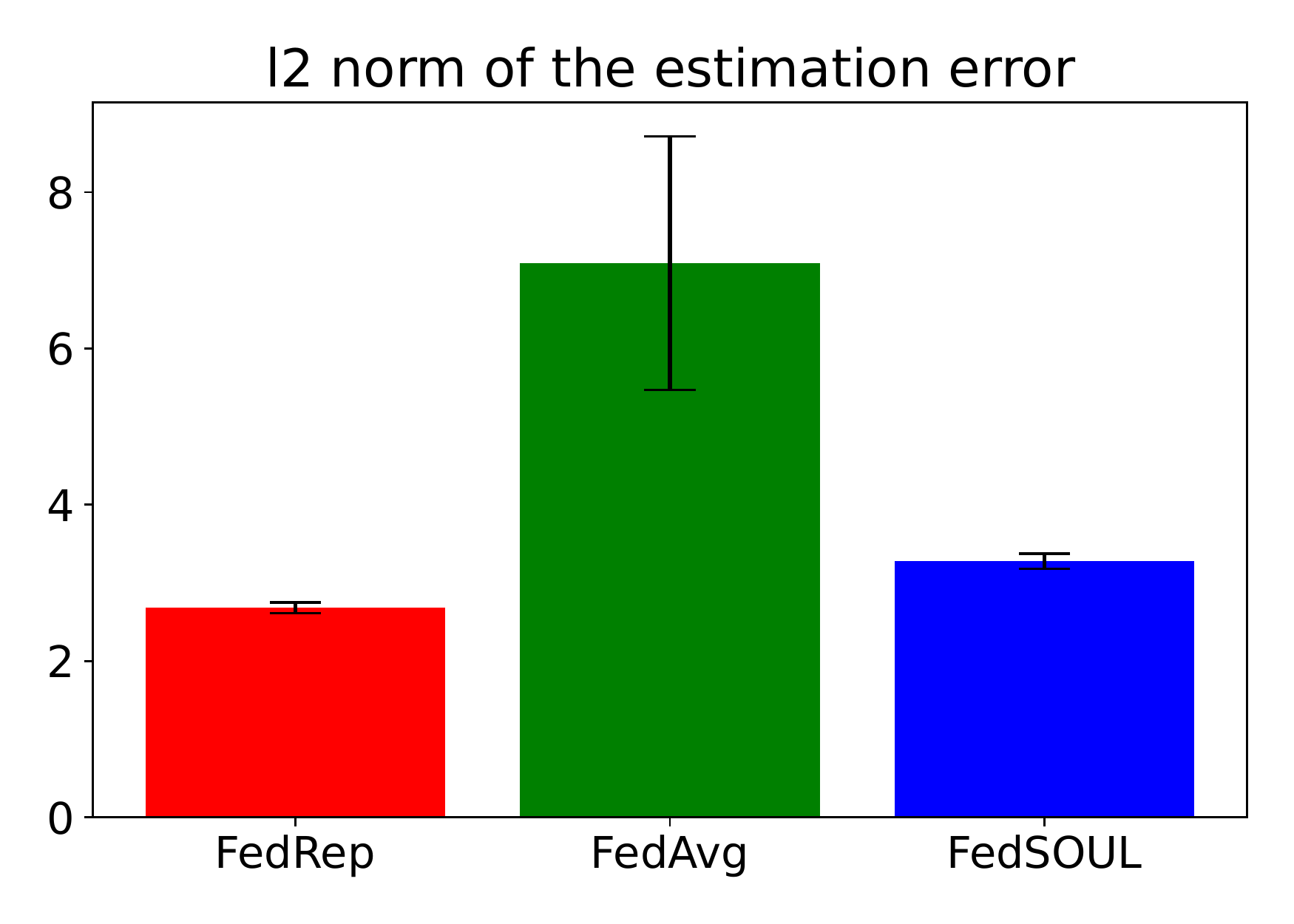}
   \includegraphics[scale=0.23]{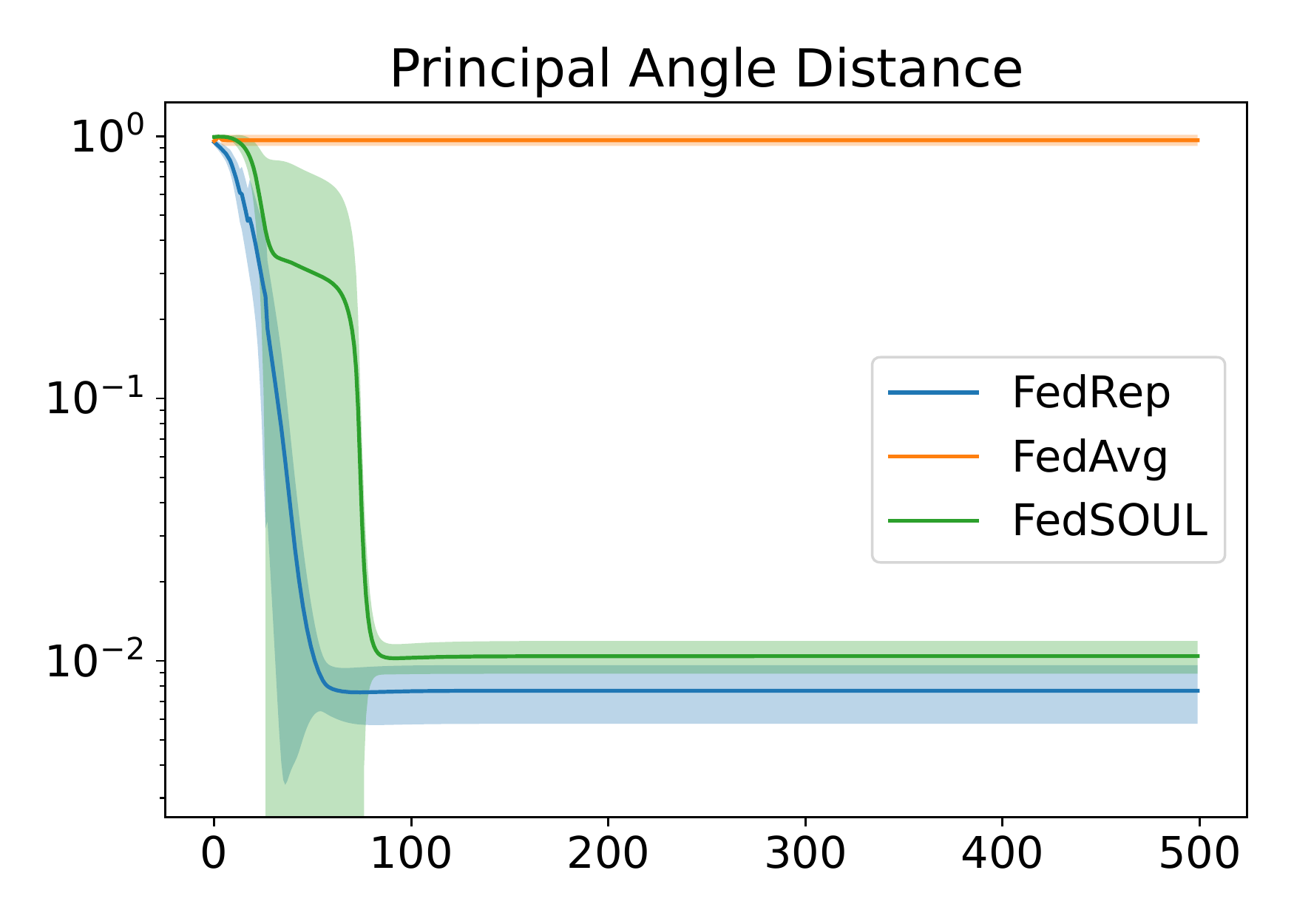}
 \end{center}
 \caption{Small data sets - synthetic data. Latent space dimensionality is $d=2$.}
 \label{fig:toy6}
\end{figure}

\subsection{Image datasets classification}
In this section we provide an additional baseline for the experiments with personalization, in case we have only a few heterogeneous data.
Specifically, we consider \texttt{APFL} \citep{deng2021adaptive} which is another personalized federated learning approach. 
We consider CIFAR-10 dataset with 100 clients.
Among these clients, there are 10, 50 or 90 which have local dataset of either 5 (one setup) or 10 (another setup). Else of size 25.

We see in \Cref{fig:toy7} that \texttt{FedSOUL} typically performs better than \texttt{FedRep}, but on par with \texttt{APFL}.
It is surprizing, that \texttt{APFL} is a very good baseline in these type of problem, which it was not specially designed for.

\begin{figure}
 \begin{center}
   \includegraphics[scale=0.23]{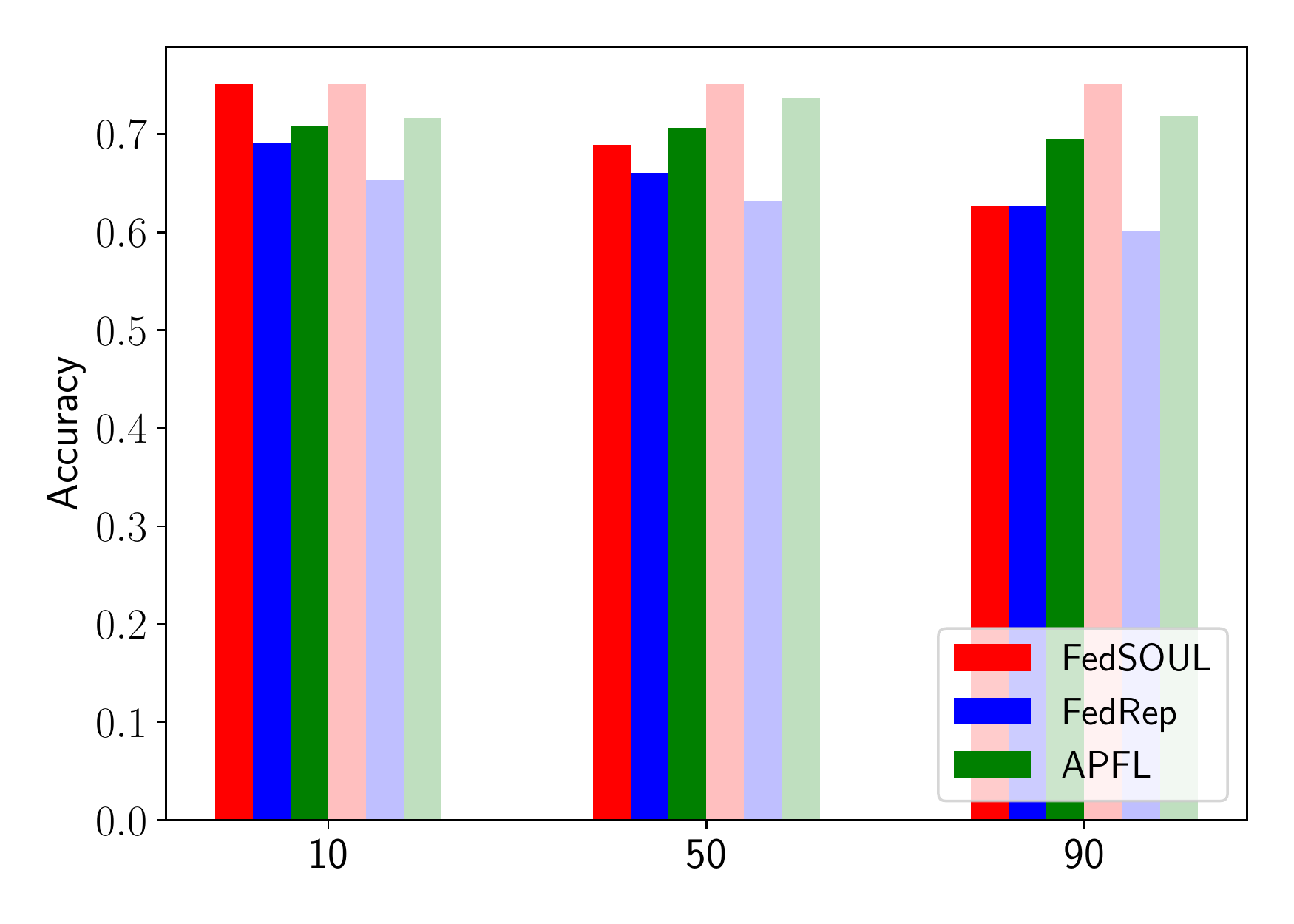}
   \includegraphics[scale=0.23]{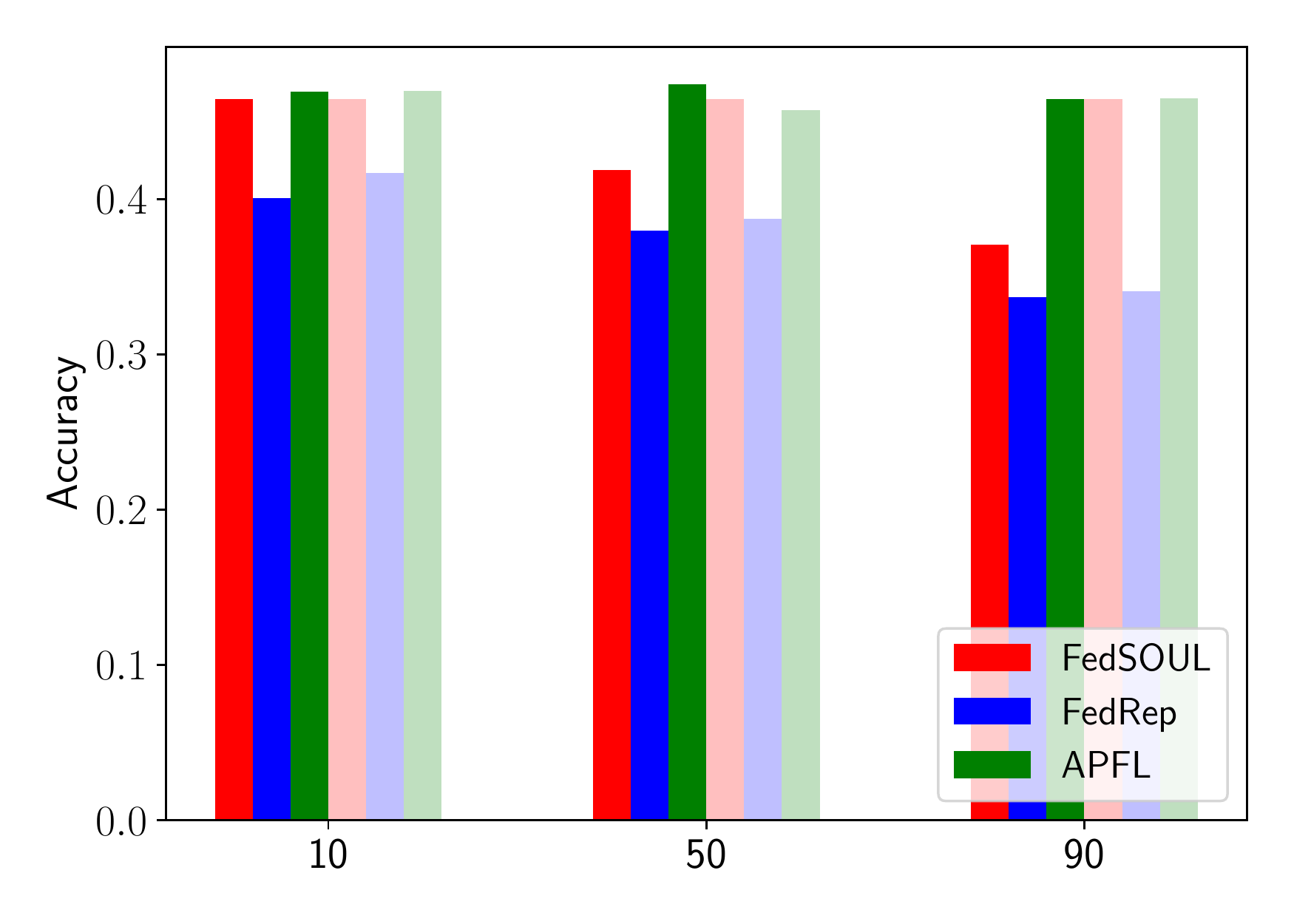}
 \end{center}
 \caption{Small image datasets. Minimal local dataset size is 2 (top) or 5 (bottom).}
 \label{fig:toy7}
\end{figure}

\subsection{Image datasets uncertainty quantification}

In this section, we provide additional experiments on image uncertainty with CIFAR-10 (in distribution) and SVHN (out of distribution) datasets.
As a measure of uncertainty, we will use predictive entropy. 
On \Cref{fig:OOD}, we present 4 different models among 100.
In the left part of the figure we see the distribution of entropy, assigned to the in-distribution objects (validational split, but same domain as training data).
In the right part we see the distribution for out-of-distribution (SVHN in our case).
Contraty to MNIST vs Fashion-MNIST example, here it is not that clear that \texttt{FedSOUL} captures uncertainty well.

\begin{figure}
 \begin{center}
   \includegraphics[scale=0.23]{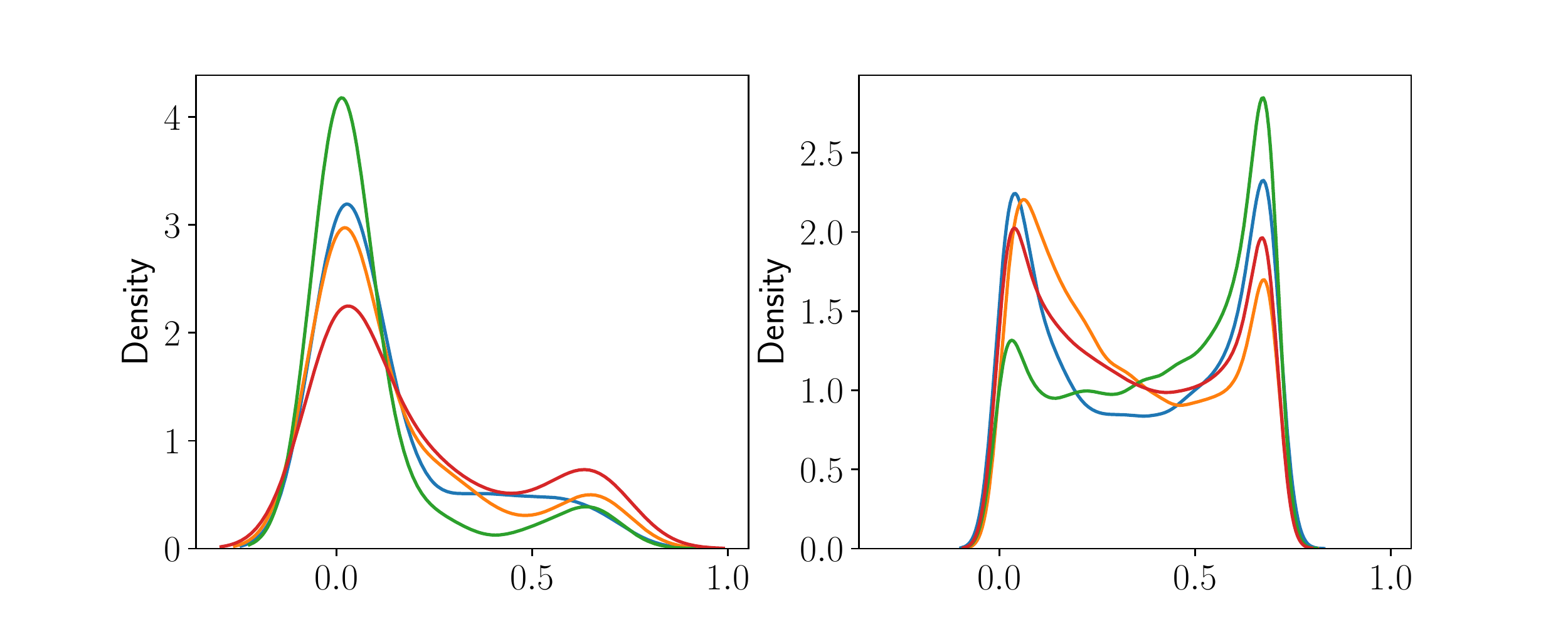}
   \includegraphics[scale=0.23]{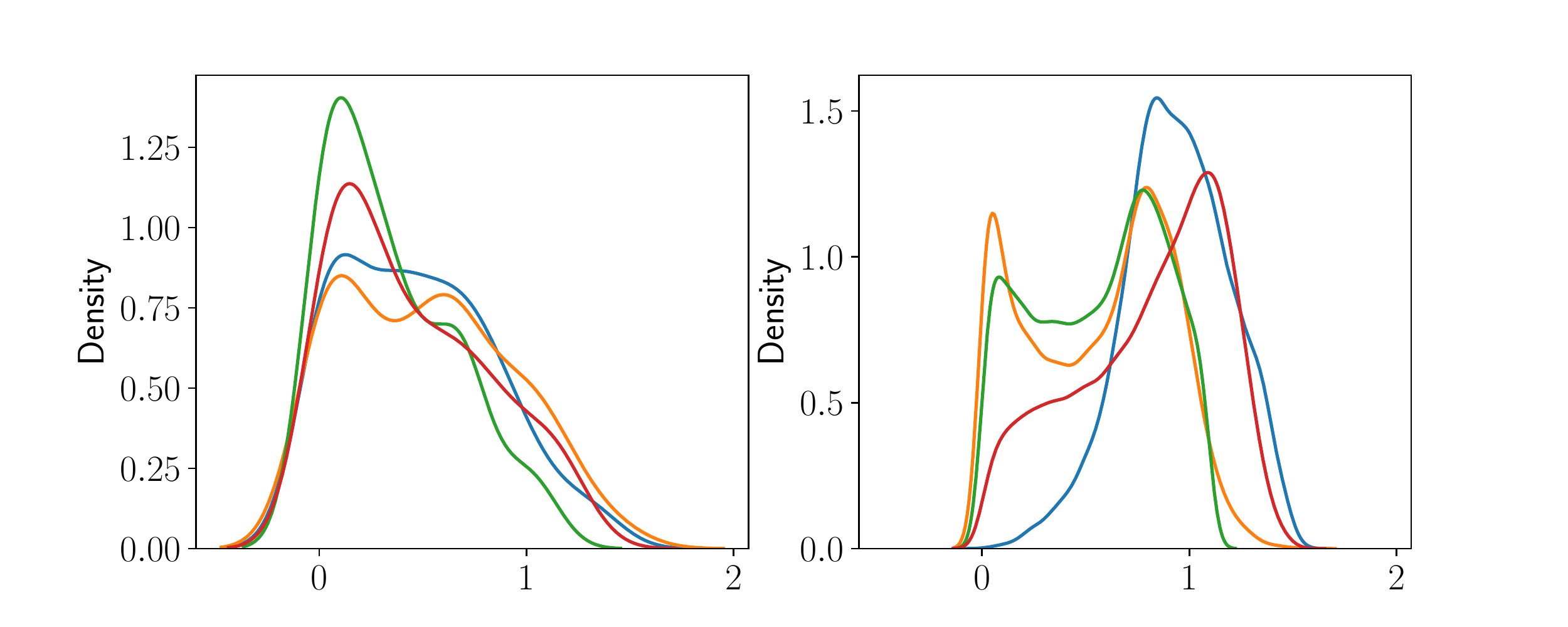}
 \end{center}
 \caption{Out-of-distribution detection. CIFAR 10 vs SVHN. 2 classes for model (top) and 5 (bottom).}
\label{fig:OOD}
\end{figure}

We also provide additional plots for calibration on CIFAR-10 again for two cases, when each client had 2 classes to predict or 5.

\begin{figure}
 \begin{center}
   \includegraphics[scale=0.23]{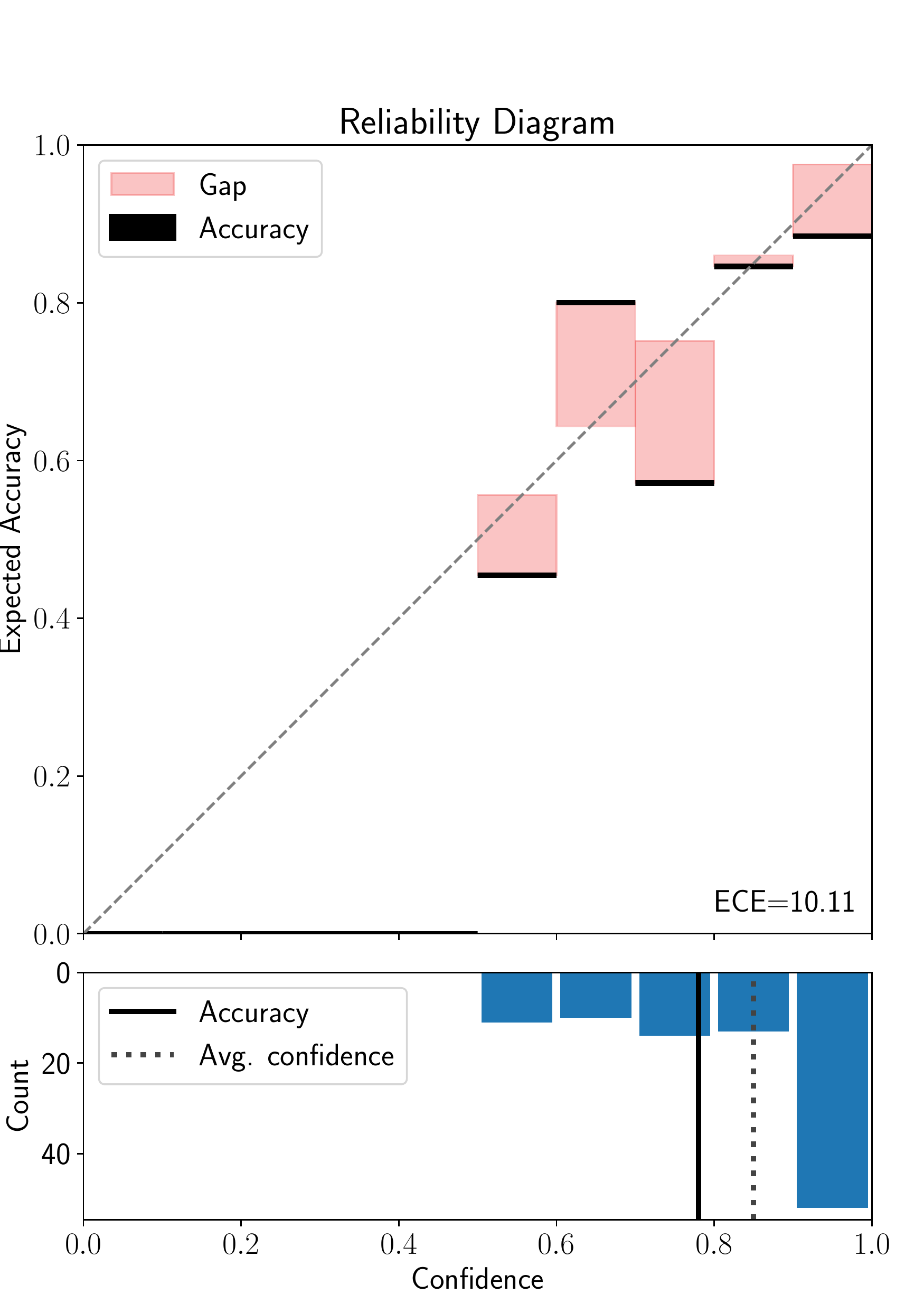}
   \includegraphics[scale=0.23]{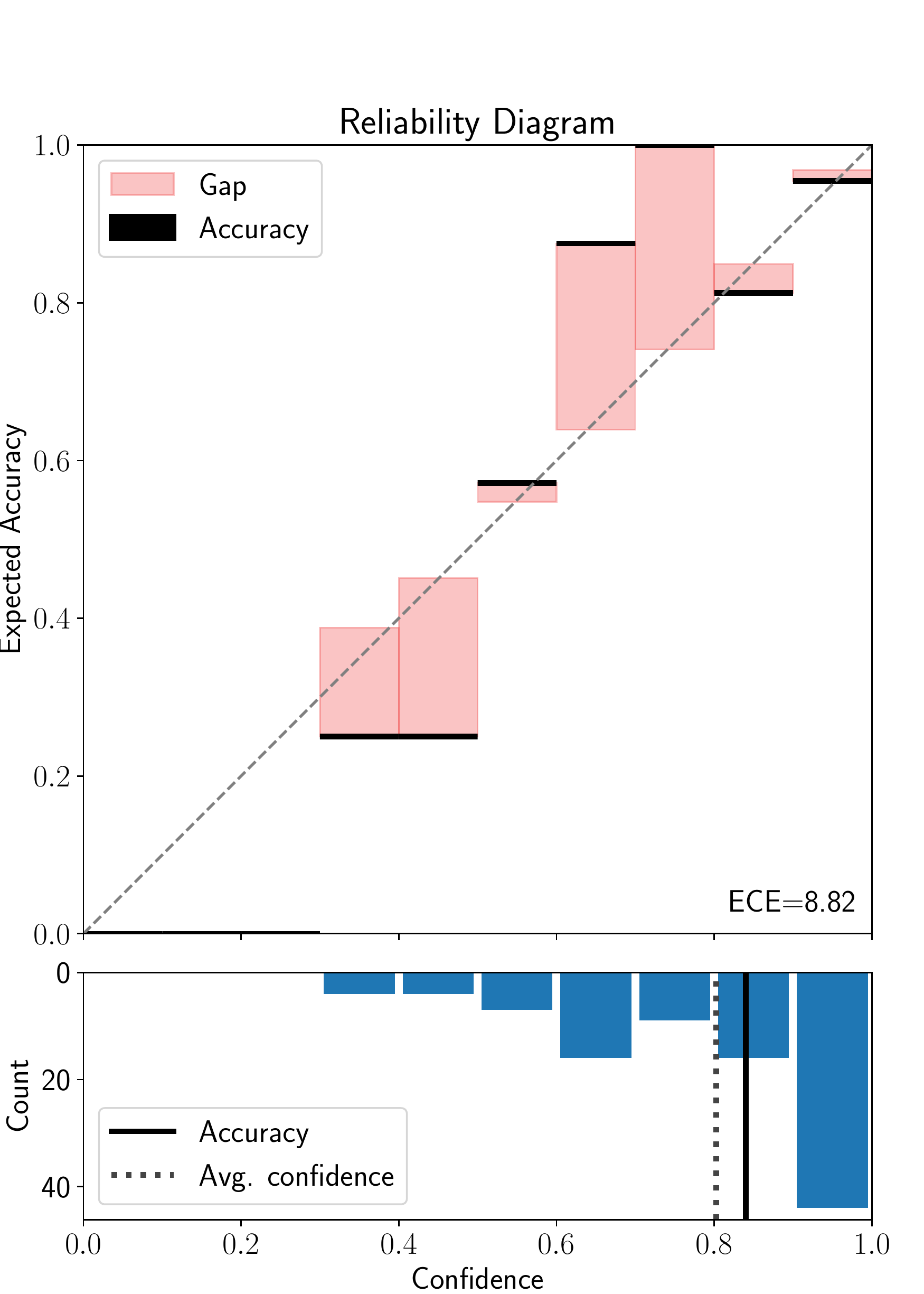}
 \end{center}
 \caption{Reliability diagram for CIFAR10. 2 classes for model (top) and 5 (bottom).}

\end{figure}

\end{document}